\renewcommand{\epsilon}{\varepsilon}
\newcommand{\trans}{^{\top}}
\newcommand{\cA}{\mathcal{A}}
\newcommand{\cB}{\mathcal{B}}
\newcommand{\cE}{\mathcal{E}}
\newcommand{\cF}{\mathcal{F}}
\newcommand{\cG}{\mathcal{G}}
\newcommand{\cH}{\mathcal{H}}
\newcommand{\cM}{\mathcal{M}}
\newcommand{\cN}{\mathcal{N}}
\newcommand{\cP}{\mathcal{P}}
\newcommand{\cS}{\mathcal{S}}
\newcommand{\cX}{\mathcal{X}}
\newcommand{\cY}{\mathcal{Y}}
\newcommand{\cZ}{\mathcal{Z}}
\newcommand{\EE}{\mathbb{E}}
\newcommand{\mH}{\mathbb{H}}
\newcommand{\matD}{\textbf{\text{D}}}
\newcommand{\matDTV}{\mathbb{TV}}
\newcommand{\bD}{\bar{D}}
\newcommand{\ts}{\tilde{s}}
\newcommand{\ta}{\tilde{a}}
\newcommand{\tz}{\tilde{z}}
\newcommand{\tpsi}{{\tilde{\psi}}}
\newcommand{\hM}{{\hat{M}}}
\newcommand{\hcM}{{\hat{\cM}}}
\newcommand{\ph}{{h'}}
\newcommand{\tM}{{\tilde{M}}}
\newcommand{\tT}{{\tilde{T}}}
\newcommand{\tpi}{{\tilde{\pi}}}
\newcommand{\mP}{{\mathbb{P}}}
\newcommand{\mR}{{\mathbb{R}}}
\newcommand{\mN}{{\mathbb{N}}}
\newcommand\numberthis{\addtocounter{equation}{1}\tag{\theequation}}
\let\hat\widehat
\let\tilde\widetilde
\newtheorem{theorem}{Theorem}[section]
\newtheorem{lemma}[theorem]{Lemma}
\newtheorem{corollary}[theorem]{Corollary}
\newtheorem{proposition}[theorem]{Proposition}
\theoremstyle{definition}
\newtheorem{definition}[theorem]{Definition}
\newtheorem{assumption}{Assumption}
\newtheorem{remark}[definition]{Remark}
\newcommand{\vectorize}{\textbf{vec}}
\newcommand{\kibitz}[2]{\ifnum\Comments=1\textcolor{#1}{#2}\fi}
\newcommand{\blue}[1]{{\color{blue} #1}}
\newcommand{\red}[1]{{\color{red} #1}}
\newcommand{\diag}{{\rm Diag}}
\DeclareMathOperator*{\argmax}{arg\,max}
\DeclareMathOperator*{\argmin}{arg\,min}
\newcommand{\dimE}{\text{dimE}}
\DeclareSymbolFont{largesymbolsA}{U}{txexa}{m}{n}
\DeclareMathSymbol{\varprod}{\mathop}{largesymbolsA}{16}
\DeclareFontFamily{U}{mathx}{\hyphenchar\font45}
\DeclareFontShape{U}{mathx}{m}{n}{
      <5> <6> <7> <8> <9> <10>
      <10.95> <12> <14.4> <17.28> <20.74> <24.88>
      mathx10
      }{}
\DeclareSymbolFont{mathx}{U}{mathx}{m}{n}
\DeclareMathSymbol{\bigtimes}{1}{mathx}{"91}
\newcommand{\Tr}{\text{Tr}}
\newcommand{\TV}{\mathbb{TV}}
\newcommand{\textR}{r}
\newcommand{\vecmu}{{\bm{\mu}}}
\newcommand{\tldh}{{\tilde{h}}}
\newcommand{\MFC}{{\text{MFC}}}
\newcommand{\MFG}{{\text{MFG}}}
\newcommand{\MLE}{{\text{MLE}}}
\newcommand{\NE}{\text{NE}}
\newcommand{\eff}{\text{eff}}
\newcommand{\bcS}{\bar{\cS}}
\newcommand{\Opt}{\text{Opt}}
\newcommand{\SumInt}{\sum}
\begin{document}

\runningtitle{On the Statistical Efficiency of MFRL with General Function Approximation}

\twocolumn[

\aistatstitle{On the Statistical Efficiency of Mean-Field Reinforcement Learning with General Function Approximation}

\aistatsauthor{%
  Jiawei Huang \quad \quad Batuhan Yardim \quad \quad Niao He
}
\aistatsaddress{
  Department of Computer Science\\
  ETH Zurich \\
  \texttt{\{jiawei.huang, alibatuhan.yardim, niao.he\}@inf.ethz.ch} \\
} 

]

\begin{abstract}
    In this paper, we study the fundamental statistical efficiency of Reinforcement Learning in Mean-Field Control (MFC) and Mean-Field Game (MFG) with general model-based function approximation. We introduce a new concept called Mean-Field Model-Based Eluder Dimension (MF-MBED), which characterizes the inherent complexity of mean-field model classes. We show that a rich family of Mean-Field RL problems exhibits low MF-MBED. Additionally, we propose algorithms based on maximal likelihood estimation, which can return an $\epsilon$-optimal policy for MFC or an $\epsilon$-Nash Equilibrium policy for MFG. The overall sample complexity depends only polynomially on MF-MBED, which is potentially much lower than the size of state-action space. Compared with previous works, our results only require the minimal assumptions including realizability and Lipschitz continuity.
\end{abstract}

\allowdisplaybreaks

\section{INTRODUCTION}
Multi-Agent Reinforcement Learning (MARL) addresses how multiple autonomous agents cooperate or compete with each other in a shared environment, and it is widely applied for practical problems in many areas, including autonomous driving \citep{shalev2016safe}, finance \citep{lee2007multiagent}, and robotics control \citep{ismail2018survey}. Although MARL has attracted increasing attention in recent RL research \citep{zhang2021multi}, when the number of agents is in the hundreds or thousands, solving MARL becomes a challenge.
However, in scenarios where agents exhibit symmetry, like humans in crowds or individual cars in the traffic flow, mean-field theory can be employed to approximate the system dynamics, which results in the Mean-Field RL (MFRL) setting.
In MFRL, the interaction within large populations is reflected by the dependence on the state density of the transition and reward functions of individual agents.
The mean-field model has achieved success in various domains, including economics \citep{cousin2011mean, angiuli2021reinforcement}, finance \citep{cardaliaguet2018mean}, industrial engineering \citep{de2019mean}, etc.

Depending on the objectives, MFRL can be divided into two categories: Mean-Field Control (MFC) and Mean-Field Game (MFG) \citep{lasry2007mean, huang2006large,bensoussan2013mean}.
MFC, similar to the single-agent RL, aims to find a policy maximizing the expected return, while MFG focuses on identifying the Nash Equilibrium (NE) policy, where no agent tends to deviate.
Compared with single-agent RL, MFRL is much more challenging because it requires exploration in the joint space of state, action, and state density, especially given that the density belongs to an infinite and continuous space.

The sample efficiency (referring to the number of samples needed to explore the environment and achieve objectives) in both tabular and more broadly function approximation settings has been extensively examined as one of the fundamental questions in reinforcement learning, particularly in single-agent scenarios.
See for example existing work in single-agent \citep{auer2008near, azar2017minimax, jin2018q, russo2013eluder, jin2021bellman, du2021bilinear, xie2022role, foster2021statistical} and general multi-agent settings \citep{jin2021v,bai2020near,xie2020learning, huang2021towards,wang2023breaking,cui2023breaking}.
However, the understanding of fundamental sample efficiency in existing MFRL literature is still limited in the following two aspects.
\begin{itemize}[leftmargin=*]
    \item \textbf{Lack of understanding in fundamental sample efficiency}: Previous works, especially in MFG setting, mainly consider the computational complexity, and therefore, they only focus on settings with strong structural assumptions, such as contractivity \citep{guo2019learning, xie2021learning}, monotonicity \citep{perrin2020fictitious, perolat2021scaling, elie2020convergence}, or population-independent dynamics \citep{mahajan2021reinforcement, geist2021concave}.
    As a result, the fundamental sample efficiency under general conditions is still an open problem.
    On the other hand, those structural assumptions also simplify the exploration process to some extent, so their algorithms can be hardly generalized beyond those assumptions.
    \item \textbf{Lack of understanding in function approximation setting}: Most previous literature only focuses on tabular setting \citep{guo2019learning, elie2020convergence}, and the sample complexity bounds depend on the number of states and actions, which are unacceptable when the state or action spaces are very large.
    To the best of our knowledge, the only work studying the sample complexity of MFRL in the function approximation setting is \citep{pasztor2021efficient}. However, \citet{pasztor2021efficient} only consider MFC setting, and are limited to near-deterministic transitions with sub-Gaussian noises, which cannot model the transition distributions with multiple modes.
\end{itemize}

Motivated by these limitations, we focus on the general model-based function approximation setting, where the state-action spaces can be arbitrarily large but we have access to a model class $\cM$ to approximate the dynamics of the Mean-Field system.
The key question we would like to address is:
\begin{center}
    \emph{What is the \textbf{fundamental} sample efficiency of Mean-Field RL with model function approximation?}
\end{center}
In contrast with previous literature (especially in MFG setting), by ``fundamental'', we aim to understand the sample efficiency \emph{with the minimal (most basic) assumptions} including realizability (Assump.~\ref{assump:realizability}) and Lipschitz continuity (Assump.~\ref{assump:lipschitz}) without additional strong structural assumptions like contractivity or monotonicty \citep{guo2019learning,perrin2020fictitious, perolat2021scaling, elie2020convergence}.
We treat them as fundamental assumptions because realizability ensures a good approximation exists so the learning is possible, and the Lipschitz assumption, as we will show later, guarantees the existence of the learning objective in MFG setting.
Moreover, we only consider the trajectory sampling model (Def.~\ref{def:collection_process}), which is much milder than the generative model assumptions \citep{guo2019learning}.

To address our key question, in Sec.~\ref{sec:MB_ED_MFRL} we first propose a new notion called Mean-Field Model-Based Eluder Dimension (MF-MBED). MF-MBED, generalized from Eluder Dimension in single-agent value approximation setting \citep{russo2013eluder}, characterizes the complexity of mean-field function classes including but not limited to tabular setting. We also provide concrete examples of mean-field model classes with low MF-MBED, such as (generalized) linear MF-MDP, near-deterministic transition with Gaussian noise, etc.
In Sec.~\ref{sec:O_MLE}, we develop sample efficient model-learning algorithms for MFRL based on Maximal Likelihood Estimation (MLE), which can achieve sample complexity only polynomial in MF-MBED for both MFC and MFG.

We highlight our main contributions in the following:
\begin{itemize}[leftmargin=*]
    \item We introduce a new notion called Mean-Field Model-Based Eluder Dimension (MF-MBED) to measure the complexity of any given Mean-Field model function class,
    and identify concrete examples exhibit low MF-MBED.
    \item For MFG setting, we propose the first MLE-based algorithm which is capable of addressing the exploration challenge while imposing minimal structural assumption.
    Further, we establish the first fundamental sample complexity result for function approximation setting, which only has polynomial dependence on MF-MBED, without explicit dependence on the number of states and actions.
    \item On the technical level, we establish MLE learning guarantees for Mean-Field setting, while previous results are limited to single-agent setting.
    Notably, the dependence on state density in transition function introduces unique challenges in analysis, which we overcome by establishing close connections between model class complexity, MLE error, and the MFC and MFG objectives.
\end{itemize}

\section{RELATED WORK}\label{sec:related_work}
In general, the theoretical understanding of MFRL in the finite horizon setting is still limited, especially in terms of statistical efficiency.
We present and compare with several lines of work in mean-field setting and defer the related works in single-agent or general multi-agent setting to Appx.~\ref{appx:additional_related_works}.

\paragraph{Finite-horizon Non-Stationary MFG}
The finite-horizon framework considered here is closely related to Lasry-Lions games \citep{perrin2020fictitious,perolat2021scaling,geist2021concave}, where continuous-time dynamics were analyzed without exploration considerations under monotonicity assumptions on rewards.
Most existing works consider additional structural assumptions like contractivity \citep{guo2019learning} monotonicity \citep{perolat2021scaling}.
We defer to Remark~\ref{remark:comparison_assumptions} for a more detailed comparison among these assumptions.
Besides, most previous literature \citep{elie2020convergence} requires a planning oracle that can return a trajectory for arbitrary state density, even if it can not be induced by any policy.
In contrast, our work focuses on understanding the fundamental exploration guarantees and identifying bottlenecks associated with finite-horizon MFC and MFG. 
Consequently, our findings extend beyond MFG/MFC scenarios that adhere to restrictive conditions.

\paragraph{Stationary MFG}
Alternative to the finite-horizon non-stationary MFG formulation, there exist works on the stationary MFG formulation
\citep{anahtarci2023q, xie2021learning, yardim2022policy, cui2021approximately}.
In this formulation,  the transition and reward functions at each step are conditioned on the stationary density rather than on the evolving density across time, which poses a limitation.
Furthermore, existing results typically require strong Lipschitz continuity assumptions as well as non-vanishing regularization \citep{anahtarci2023q, cui2021approximately}.

\paragraph{Statistical Efficiency Results for MFC}
In terms of statistical efficiency considerations, a related work \citep{pasztor2021efficient} analyzes the MFC setting in an information gain framework. Our results capture different learnable function classes.
Our low MF-MBED model classes can capture multi-modal transition distribution (e.g. the linear setting), while their algorithm and analysis are limited to near-deterministic transition with random noise (unimodal transition distribution).
Besides, our framework encompasses certain special cases in \citep{pasztor2021efficient},
as a result of our Prop.~\ref{prop:MF-MBED_det_trans_noise} and the equivalence between Eluder Dimension and Information Gain in RKHS space \citep{jin2021bellman}.
However, it is worthy noting a limitation in our approach: \citep{pasztor2021efficient} can cover the cases when the noise is sub-Gaussian besides pure Gaussian, as long as they can get access to the full information of the noise. 

\paragraph{Other MFG/MFC Settings}
There also exists a variety of different settings in which the MFGs formalism has been utilized, for instance in linear quadratic MFG \citep{guo2022entropy} and MFGs on graphs \citep{yang2018mean, gu2021mean}.
\citep{angiuli2022unified} studies a unified view of MFG and MFC, however, they do not take the evolution of density into consideration and do not provide guarantees for the non-tabular setting.
Several works on MFC also work on the lifted MDP where population state is observable \citep{carmona2019model}.
In our work, we do not assume the observability of the population.

\section{BACKGROUND}
\subsection{Setting and Notation}
We consider the finite-horizon Mean-Field Markov Decision Process (MF-MDP) specified by a tuple $M := (\mu_1,\cS,\cA,H,\mP_{T},\mP_{\textR})$. Here $\mu_1$ is the fixed initial distribution known to the learner, $\cS$ and $\cA$ are the state and action space.
Without loss of generality, we assume the state and action spaces are discrete but can be arbitrarily large.
We assume the state-action spaces are the same for each step $h$, i.e., $\cS_h=\cS$ and $\cA_h=\cA$ for all $h$. $\mP_{T}:=\{\mP_{T,h}\}_{h=1}^H$ and $r:=\{r_h\}_{h=1}^H$ are the transition and (normalized) deterministic reward function, with $\mP_{T,h}:\cS_h\times\cA_h\times\Delta(S_h)\rightarrow\Delta(S_{h+1})$ and $r_h:\cS_h\times\cA_h\times\Delta(S_h)\rightarrow [0,\frac{1}{H}]$. 
Without loss of generality, we assume that the reward function is known, but our techniques can be extended when it is unknown and a reward function class is available.
We use $M^*$ to denote the true model with transition function $\mP_{T^*}$.

In this paper, we only consider non-stationary Markov policy $\pi:=\{\pi_1,...,\pi_H\}$ with $\pi_h:\cS_h\rightarrow\Delta(\cA_h),~\forall~h\in[H]$.
Starting from the initial state $s_1 \sim \mu_1$ until the fixed final state $s_{H+1}$ is reached, the trajectory is generated by $\forall h\in[H]$:
\begin{align*}
    & a_h \sim \pi_h(\cdot|s_h),~s_{h+1} \sim \mP_{T,h}(\cdot|s_h,a_h,\mu^\pi_{M,h}), \\
    & r_h \sim r_h(s_h,a_h,\mu^\pi_{M,h}),~\mu^\pi_{M,h+1} = \Gamma^\pi_{M,h}(\mu^\pi_{M,h}),\numberthis\label{eq:MF_formulation}
\end{align*}
where:
\begin{align*}
    \Gamma^\pi_{M,h}(\mu_h)(\cdot) := \SumInt_{s_h,a_h} \mu_h(s_h)\pi(a_h|s_h) \mP_{T,h}(\cdot|s_h,a_h,\mu_h)
\end{align*}
and we use $\mu^\pi_{M,h}$ to denote the density induced by $\pi$ under model $M$ and $\Gamma^\pi_{M,h}:\Delta(\cS_h)\rightarrow\Delta(\cS_{h+1})$ is an mapping from densities in step $h$ to step $h+1$ under $M$ and $\pi$.
We will use bold font $\vecmu := \{\mu_1,...,\mu_H\}$ to denote the collection of density for all time steps.
Besides, we denote $V^{\pi}_{M,h}(\cdot;\vecmu)$ to be the value functions at step $h$ if the agent deploys policy $\pi$ in model $M$ conditioning on $\vecmu$, defined by:
\begin{align}
    &V^{\pi}_{M,h}(s_h;\vecmu) := \EE\Big[\sum_{\ph=h}^H r_\ph(s_\ph,a_\ph,\mu_\ph) \Big| \nonumber\\
    &\qquad\qquad \forall \tldh \geq h,~a_\tldh \sim \pi_\tldh,~s_{\tldh+1}\sim \mP_{T,\tldh}(\cdot|s_\tldh,a_\tldh,\mu_\tldh)\Big].\label{eq:value_function}
\end{align}
We use $J_M(\pi;\vecmu) := \EE_{s_1\sim\mu_1}[V^{\pi}_{M,1}(s_1;\vecmu)]$ to denote the expected return of policy $\pi$ in model $M$ conditioning on $\vecmu$.
When the policy is specified, we use $\vecmu^\pi_M := \{\mu^\pi_{M,1},...,\mu^\pi_{M,H}\}$ to denote the collection of mean fields w.r.t. $\pi$. 
We will omit $\vecmu$ and use $J_M(\pi)$ in shorthand when $\vecmu = \vecmu^\pi_M$.
For  simplicity, in the rest of the paper, we use 
$$
\EE_{\pi,M|\vecmu}[\cdot] := \EE\left[\cdot\Big|\substack{s_1\sim\mu_1 \\ \forall h\geq 1,\quad a_h \sim \pi_h(\cdot|s_h)\\ s_{h+1}\sim\mP_{T,h}(\cdot|s_h,a_h,\mu_h)}\right]
$$ 
as a shortnote of the expectation over trajectories induced by $\pi$ under transition $\mP_{T,h}(\cdot|\cdot,\cdot,\mu_h)$, and we omit the conditional density $\vecmu$ if $\vecmu = \vecmu^\pi_M$.
As examples, $V^\pi_{M,h}(s_h;\vecmu) = \EE_{\pi,M|\vecmu}[\sum_{h'=h}^H r(s_\ph,a_\ph,\mu_\ph)|s_h]$ and $J_M(\pi) = \EE_{\pi,M}[\sum_{h=1}^H r(s_\ph,a_\ph,\mu^\pi_{M,\ph})]$.

Given a measure space $(\Omega, \cF)$ and two probability measures $P$ and $Q$ defined on $(\Omega, \cF)$, we denote $\matDTV(P,Q)$ (or $\|P - Q\|_\TV$)$ := \sup_{A \in \cF} |P(A) - Q(A)|$ as the total variation distance, and denote $\mH(P,Q) := \sqrt{1 - \SumInt_x \sqrt{P(x)Q(x)}}$ as the Hellinger distance. In general, when $\Omega$ is countable, we have $\sqrt{2} \mH(P,Q) \geq \matDTV(P,Q) = \frac{1}{2} \|P-Q\|_1$, where $\|\cdot\|_1$ is the $l_1$-distance.

\paragraph{Mean-Field Control}
In MFC, similar to single-agent RL, we are interested in the optimality gap 
$
\cE_{\Opt}(\pi):= \max_\tpi J_{M^*}(\tpi;\vecmu^\tpi_{M^*}) - J_{M^*}(\pi;\vecmu^{\pi}_{M^*}),
$ 
and aim to find a policy $\hat\pi^*_\Opt$ to approximately minimize it:
\begin{equation}
    \cE_{\Opt}(\hat\pi^*_\Opt)\leq \epsilon.\label{eq:objective_MFC}
\end{equation}

\paragraph{Mean-Field Game}
In MFG, we instead want to find a Nash Equilibrium (NE) policy s.t., when all the agents follow that same policy, no agent tends to deviate from it for better policy value.
We denote $\Delta_M(\tpi, \pi) := J_M(\tpi;\vecmu^\pi_M) - J_M(\pi;\vecmu^\pi_M)$ given a model $M$, and denote 
$
\cE_\NE(\pi) := \max_\tpi \Delta_{M^*}(\tpi, \pi),
$
which is also known as the exploitability.
The NE in $M^*$ is defined to be the policy $\pi_{\NE}^*$ satisfying $\cE_\NE(\pi_{\NE}^*) = 0$.
In MFG, the objective is to find an approximate NE $\hat\pi^*_\NE$ such that:
\begin{equation}
    \cE_\NE(\hat\pi_{\NE}^*) \leq \epsilon.\label{eq:objective_MFG}
\end{equation}
\subsection{Assumptions}
In this paper, we consider the general function approximation setting, where the learner has access to a model class $\cM$ with finite cardinality ($|\cM| < +\infty$), satisfying the following assumptions.
\begin{assumption}[Realizability]\label{assump:realizability}
    $M^* \in \cM$.
\end{assumption}
\begin{assumption}[Lipschitz Continuity]\label{assump:lipschitz}
    For arbitrary $h\in[H],s_h\in\cS,a_h\in\cA$ and arbitrary valid density $\mu_h,\mu_h'\in\Delta(\cS)$, and arbitrary model $M:=(\mP_T,r) \in \cM$, we have:
    \begin{align}
        &\|\mP_{T,h}(\cdot|s_h,a_h,\mu_h) - \mP_{T,h}(\cdot|s_h,a_h,\mu_h')\|_\TV \nonumber\\
        &\qquad\qquad\qquad\qquad\qquad\qquad \leq L_T \|\mu_h-\mu_h'\|_\TV, \label{eq:Lip_model_H}\\
        &\|r_h(s_h,a_h,\mu_h) - r_h(s_h,a_h,\mu_h')\|_\TV \nonumber\\
        &\qquad\qquad\qquad\qquad\qquad\qquad \leq L_r \|\mu_h-\mu_h'\|_\TV. \label{eq:Lip_rew_l1}
    \end{align}
\end{assumption}
\begin{remark}[Comparison with Previous Structural Assumptions in MFG Setting]\label{remark:comparison_assumptions}
    The most common structural assumptions in previous finite-horizon MFG setting include contractivity \citep{guo2019learning} and monotonicity \citep{perrin2020fictitious,perolat2021scaling}.
    The contractivity is stronger than ours since in general it requires good smooth conditions of $\mP_T$ w.r.t. states, actions, state densities \citep{yardim2022policy}.
    The monotonicity instead considers the reward structures, and it is stronger in that it assumes the transition is independent of density (i.e. $L_T=0$), so that the dynamics of the system reduces to single-agent RL.
\end{remark}
In Prop.~\ref{prop:exist_MFG_NE_informal} below, we show that Assump.~\ref{assump:lipschitz} implies the existence of a Nash Equilibrium.
A similar existence result has been established in previous literature \citep{saldi2018markov} under the same conditions as our Prop.~\ref{prop:exist_MFG_NE_formal}. Our contribution here is a different proof based on the conjugate function and non-expansiveness of the proximal point operator. Moreover, \citep{saldi2018markov} studied infinite-horizon MDP with discounted reward, which is different from our setting.
\begin{proposition}[Existence of NE in MFG; Informal Version of Prop.~\ref{prop:exist_MFG_NE_formal}]\label{prop:exist_MFG_NE_informal}
    For every MF-MDP with discrete $\cS$ and $\cA$, satisfying Assump.~\ref{assump:lipschitz}, there exists at least one NE policy.
\end{proposition}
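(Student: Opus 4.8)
The plan is to characterize a Nash Equilibrium as a fixed point and then invoke a topological fixed-point theorem. Recall that $\pi^*$ is a NE iff $\cE_\NE(\pi^*)=0$, i.e. $\pi^*$ is a best response to the mean-field flow $\vecmu^{\pi^*}_{M^*}$ that it itself induces. Accordingly I would work on the compact convex set of flows $\mathcal{D}:=\prod_{h=1}^H\Delta(\cS_h)$ and define a self-map $\Phi:\mathcal{D}\to\mathcal{D}$ by first solving, for a given flow $\vecmu$, the single-agent MDP with transitions $\mP_{T^*,h}(\cdot\mid\cdot,\cdot,\mu_h)$ and rewards $r_h(\cdot,\cdot,\mu_h)$ to obtain a best-response policy $\pi_{\vecmu}$, and then setting $\Phi(\vecmu):=\vecmu^{\pi_{\vecmu}}_{M^*}$, the flow actually induced by $\pi_{\vecmu}$ through the forward recursion in \eqref{eq:MF_formulation}. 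By construction any fixed point $\vecmu^*=\Phi(\vecmu^*)$ yields a policy $\pi_{\vecmu^*}$ that is simultaneously a best response to, and the generator of, $\vecmu^*$, hence a NE. If $\Phi$ were continuous, Brouwer's theorem would immediately finish the argument since $\mathcal{D}$ is compact and convex.

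The obstacle is that the exact best-response (greedy) policy is in general a discontinuous, set-valued function of $\vecmu$: the greedy action distribution solves a linear program over the simplex whose maximizer jumps when optimal actions tie. This is exactly where I would use the conjugate-function / proximal-point machinery. I would regularize the per-step action selection by a strongly convex function $R$ (e.g. negative entropy), so that the regularized soft best response in each state is $\nabla R^*(Q_h(s,\cdot))$, the gradient of the Fenchel conjugate evaluated at the (regularized) $Q$-values. Because $R$ is strongly convex, $\nabla R^*$ is single-valued and Lipschitz --- this is the non-expansiveness of the associated proximal-point operator --- so the regularized best-response policy $\pi^\lambda_{\vecmu}$ depends continuously on the $Q$-values. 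Combined with Assumption~\ref{assump:lipschitz}, which makes the regularized backward Bellman recursion a composition of maps that are continuous (indeed Lipschitz) in each $\mu_h$, I obtain that $\vecmu\mapsto\pi^\lambda_{\vecmu}$ is continuous; and the forward recursion $\pi\mapsto\vecmu^{\pi}_{M^*}$ is continuous for the same reason. Their composition $\Phi^\lambda$ is therefore a continuous self-map of $\mathcal{D}$, and Brouwer's theorem yields a regularized equilibrium flow $\vecmu^\lambda$.

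It remains to remove the regularization. I would take a sequence $\lambda\to 0$, extract a convergent subsequence of the regularized equilibria $\vecmu^\lambda$ (possible since $\mathcal{D}$ is compact) and let $\vecmu^\star$ be a limit point with associated policy $\pi^\star$. Since the regularizer is bounded and enters the objective scaled by $\lambda$, the regularized best-response optimality of $\pi^\lambda_{\vecmu^\lambda}$ degrades to exact best-response optimality in the limit; quantitatively, the exploitability $\cE_\NE(\pi^\star)$ is controlled by $\lambda$ times the range of $R$ plus continuity errors that vanish along the subsequence, so $\cE_\NE(\pi^\star)=0$, i.e. $\pi^\star$ is an exact NE.

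I expect the main technical obstacle to be the continuity of $\Phi^\lambda$: one must propagate continuity in $\vecmu$ through the full $H$-step regularized backward recursion for the $Q$-values and then through the forward flow propagation, where at each layer the dependence on the density enters both the transition and the reward. The Lipschitz assumption (Assumption~\ref{assump:lipschitz}) is what keeps these per-layer perturbations controlled, while the strong convexity of $R$ (hence non-expansiveness of the prox operator) is what converts the discontinuous argmax into a Lipschitz soft response; carefully composing these bounds --- and then verifying that the limiting argument genuinely drives the exploitability to zero --- is where the real work lies.
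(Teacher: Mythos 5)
Your argument is correct, but it takes a genuinely different route from the paper's proof of Prop.~\ref{prop:exist_MFG_NE_formal}. You set up the fixed point on the space of density flows $\prod_h \Delta(\cS_h)$, restore continuity of the (otherwise set-valued, discontinuous) best response by adding a strongly convex regularizer scaled by $\lambda$, apply Brouwer for each $\lambda>0$, and then remove the regularization by compactness: extracting a convergent subsequence of regularized equilibria and using joint continuity of $J_{M^*}(\pi;\vecmu)$ to show the $O(\lambda H\log|\cA|)$ suboptimality vanishes in the limit, so $\cE_\NE(\pi^\star)=0$. The paper instead works directly on the policy polytope and defines $\Gamma_{NE}(\pi)=\Gamma_{pp}(\pi,\Gamma_{pop}(\pi))$, where $\Gamma_{pp}$ is a proximal step $\argmax_{u\in\Delta_\cA} Q_h^\pi(s,\cdot,\vecmu)^\top u - \|\pi_h(\cdot|s)-u\|_2^2$ \emph{centered at the current policy}; by first-order optimality its fixed points are exactly unregularized best responses (Lemma~\ref{lemma:existencefixedpointreduction}), and continuity follows from non-expansiveness of the prox map (Lemma~\ref{existence:pp_non_expansive}) and smoothness of the Fenchel conjugate (Lemma~\ref{existence:lemma:fenchel_duality}), so a single application of Brouwer suffices with no vanishing-regularization limit. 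What each buys: your route is the standard and arguably more familiar regularize-then-limit scheme from the entropy-regularized MFG literature, at the cost of an extra limiting argument and the requirement that the regularizer be bounded (fine for finite $\cA$, which both proofs implicitly need for Brouwer anyway); the paper's choice of anchoring the quadratic penalty at the current iterate eliminates the limit entirely and yields a cleaner one-shot argument. Both ultimately rest on Brouwer plus the same continuity of transitions and rewards in $\mu$ supplied by Assump.~\ref{assump:lipschitz}.
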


Besides, we formalize the trajectory sampling model in the following.
Our trajectory sampling model is 
much weaker than the assumption in \citep{guo2019learning,elie2020convergence}, which requires a planning oracle that can return a trajectory conditioning on arbitrary (even unachievable) density.
\begin{definition}[Trajectory Sampling Model]\label{def:collection_process}
We assume the environment consists of an extremely large number of agents and a central controller (our algorithm/learner), and there is a representative agent $\texttt{Agent}$, whose observation we can receive. 
The central controller can compute an arbitrary policy tuple ($\tilde\pi$, $\pi$) (here $\pi$ and $\tpi$ are not necessarily the same), distribute $\tilde\pi$ to $\texttt{Agent}$ but $\pi$ to the others, and receive the trajectory of $\texttt{Agent}$ following $\tpi$ under $\mP_{T^*,h}(\cdot|\cdot,\cdot,\mu^\pi_h)$ and $\mP_{r,h}(\cdot|\cdot,\cdot,\mu^\pi_h)$.

\end{definition}

\section{MEAN-FIELD MODEL-BASED ELUDER DIMENSION}\label{sec:MB_ED_MFRL}
In order to avoid explicit dependence on the number of states and actions, we focus on the intrinsic complexity of the function class instead of the complexity of the state and action spaces.
In this section, we introduce the notion of Mean-Field Model-Based Eluder Dimension (MF-MBED), which characterizes the complexity of an arbitrary model function class $\cM$ via the length of the longest state-action- state density sequence $\{(s^i,a^i,\mu^i)\}_{i\in[n]}$ such that each $(s^i,a^i,\mu^i)$ introduces new ``information'' of $\cM$ given the information revealed by previous data $\{(s^t,a^t,\mu^t)\}_{t=1,...,i-1}$.
In the following, we introduce the formal definition.

\begin{definition}[$\alpha$-weakly-$\epsilon$-independent sequence]\label{def:eps_independent}
    Denote $\cX:=\cS\times\cA\times\Delta(\cS)$ to be the joint space of state, action, and state density. 
    Let $\matD:\Delta(\cS)\times\Delta(\cS)\rightarrow [0,C]$ be a distribution distance measure bounded by some constant $C$.
    Given a function class $\cF\subset\{f:\cX\rightarrow\Delta(\cS)\}$, a fixed $\alpha \geq 1$ and a sequence of data points $x_1,x_2,...,x_n \in \cX$, we say $x$ is $\alpha$-weakly-$\epsilon$-independent of $x_1,...,x_n$ w.r.t. $\cF$ and $\matD$ if there exists $f_1,f_2\in\cF$ such that $\sum_{i=1}^{n} \matD^2(f_1, f_2)(x_i) \leq \epsilon^2$ but $\matD(f_1, f_2)(x) > \alpha\epsilon$.
\end{definition}
\begin{definition}[The longest $\alpha$-weakly-$\epsilon$-independent sequence]
    We use $\dimE_\alpha(\cF,\matD,\epsilon)$ to denote the longest sequence $\{x_i\}_{i=1}^n \in \cX$, such that for some $\epsilon' \geq \epsilon$, $x_{i}$ is $\alpha$-weakly-$\epsilon'$-independent of $\{x_1,...,x_{i-1}\}$ for all $i\in[n]$ w.r.t. $\cF$ and $\matD$.
\end{definition}
\begin{definition}[Model-Based Eluder-Dimension in Mean-Field RL]\label{def:Eluder_Dim}
    Given a model class $\cM$, $\alpha \geq 1$ and $\epsilon > 0$, the Model-Based Eluder Dimension in MFRL (abbr. MF-MBED) of $\cM$ is defined as:
    \begin{align*}
        \textstyle \dimE_\alpha&(\cM,\epsilon) := \\
        &\max_{h\in[H]} \min_{\matD\in\{\matDTV,\mH\}} \dimE_\alpha(\cM_h,\matD,\epsilon).\numberthis\label{eq:MB_ED_MFRL}
    \end{align*}
\end{definition}
We only consider $\matD$ to be $\matDTV(P,Q)$ or $\mH(P,Q)$, mainly because of our MLE-based loss function. With slight abuse of notation, $\cM$ (or $\cM_h$) here refers to the collection of transition functions of models in $\cM$.
The main difference comparing with value function approximation setting \citep{russo2013eluder,jin2021bellman} is that, because the output of model functions are distributions instead of scalar, we use distance measure to compute the model prediction difference.
Furthermore, we use $\alpha\epsilon$  instead of $\epsilon$ in the threshold, which does not lead to a fundamentally different complexity measure, but simplifies the process to absorb some practical examples into our framework.
Also note that $\dimE_{\alpha_1}(\cF,\epsilon) \leq \dimE_{\alpha_2}(\cF,\epsilon)$ for $\alpha_1 \geq \alpha_2$, because any $\alpha_1$-weakly-$\epsilon$-independent sequence must be $\alpha_2$-weakly-$\epsilon$-independent.

\paragraph{Comparison with Previous Work Regarding Eluder Dimension}
Multiple previous work considers using independent sequences to characterize the complexity of function classes, but most of them focus on  value function approximation in the single-agent setting \citep{russo2013eluder,jin2021bellman}.
To our knowledge, only limited work \citep{osband2014model,levy2022eluderbased} has studied Eluder Dimension for model-based function approximation even in single-agent setting.
\citep{osband2014model} additionally assumed given two transition distributions in the function class, the difference between their induced future value function is Lipschitz continuous w.r.t. their mean difference, which is restrictive.
More recently, \citep{levy2022eluderbased} presented extension of MBED to general bounded metrics, however, their results still depend on the number of states actions, and concrete examples with low MBED are not provided.

\subsection{Concrete Examples with Low MF-MBED}
Next, we introduce some concrete examples with low MF-MBED, and defer formal statements and their proofs to Appx.~\ref{appx:examples_MF-MBED}.
The first one is generalized from the linear MDP in single-agent RL \citep{jin2020provably}.
\begin{proposition}[Low-Rank MF-MDP with Known Representation; Informal Version of Prop.~\ref{prop:MF-MBED_Linear_MFMDP_formal}]\label{prop:MF-MBED_Linear_MFMDP}
    Given a feature $\phi:\cS\times\cA\times\Delta(\cS)\rightarrow\mR^d$ and a function class $\Psi$, the model class $\cP_\Psi := \{\mP_\psi|\mP_\psi(s'|s,a,\mu):=\phi(s,a,\mu)\trans\psi(s'),~ \psi\in \Psi\}$ has $\dimE_\alpha(\cP_\Psi,\matDTV,\epsilon) = \tilde{O}(d)$ for $\alpha \geq 1$.
\end{proposition}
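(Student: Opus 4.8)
The plan is to reduce the model-based eluder dimension of the linear class $\cP_\Psi$ to the classical scalar linear eluder dimension in $\mR^d$, after rewriting the total-variation prediction difference as a supremum of linear functionals of the feature $\phi$.

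First I would exploit the variational (dual) representation of total variation. For two models $\mP_{\psi_1},\mP_{\psi_2}\in\cP_\Psi$ and a point $x=(s,a,\mu)$, writing $\Delta\psi(\cdot):=\psi_1(\cdot)-\psi_2(\cdot)$, we have
\[
\matDTV(\mP_{\psi_1},\mP_{\psi_2})(x)=\tfrac12\SumInt_{s'}\big|\phi(x)\trans\Delta\psi(s')\big|=\tfrac12\max_{g:\|g\|_\infty\le1}\phi(x)\trans\theta_g,\qquad \theta_g:=\SumInt_{s'}g(s')\Delta\psi(s')\in\mR^d,
\]
the maximizer being the sign pattern $g^\star(s')=\mathrm{sign}(\phi(x)\trans\Delta\psi(s'))$. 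The crucial point is that for every \emph{fixed} test function $g$, the map $x\mapsto\phi(x)\trans\theta_g$ is a single linear functional living in the fixed $d$-dimensional space, and under the normalization of Prop.~\ref{prop:MF-MBED_Linear_MFMDP_formal} (i.e. $\|\phi\|_2\le1$ and $\|\SumInt_{s'}g(s')\psi(s')\|_2\le\sqrt d$ whenever $\|g\|_\infty\le1$) the witness vector obeys $\|\theta_g\|_2\le2\sqrt d$.

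Next I would use this to collapse each independence test to a scalar one. Consider an $\alpha$-weakly-$\epsilon'$-independent sequence $x_1,\dots,x_n$ with $\epsilon'\ge\epsilon$. At step $i$, pick the witness pair $(\psi_1,\psi_2)$ and let $g^{(i)}$ be the sign pattern achieving the maximum at $x_i$; set $\theta^{(i)}:=\theta_{g^{(i)}}$. Then $|\phi(x_i)\trans\theta^{(i)}|=2\matDTV(\cdot)(x_i)>2\alpha\epsilon'$, while for every $t<i$, since $g^{(i)}$ is a feasible (though not necessarily optimal) test function at $x_t$, we get $|\phi(x_t)\trans\theta^{(i)}|\le2\matDTV(\cdot)(x_t)$, hence $\sum_{t<i}(\phi(x_t)\trans\theta^{(i)})^2\le4\sum_{t<i}\matDTV^2(\cdot)(x_t)\le4\epsilon'^2$. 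This is exactly the scalar linear-eluder configuration: bounded features $\phi(x_t)\in\mR^d$, a step-dependent direction $\theta^{(i)}$ of norm $\le2\sqrt d$, small cumulative squared response on the past, and a large squared response at $x_i$.

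Finally I would invoke the elliptical-potential argument. Set $\Sigma_i=\lambda I+\sum_{t<i}\phi(x_t)\phi(x_t)\trans$ with $\lambda\asymp\epsilon'^2/d$; Cauchy--Schwarz in the $\Sigma_i$-norm gives $(\phi(x_i)\trans\theta^{(i)})^2\le\|\phi(x_i)\|_{\Sigma_i^{-1}}^2\big(\lambda\|\theta^{(i)}\|^2+4\epsilon'^2\big)$, so the independence condition forces $\|\phi(x_i)\|_{\Sigma_i^{-1}}^2\gtrsim\alpha^2\gtrsim1$ at each step. The log-determinant bound $\sum_i\min(1,\|\phi(x_i)\|_{\Sigma_i^{-1}}^2)=O\!\big(d\log(1+n/(\lambda d))\big)$ together with the resulting self-bounding inequality $n\lesssim d\log(1+n/\epsilon'^2)$ yields $n=\tilde O(d)$, uniformly over $\epsilon'\ge\epsilon$, which is the claim. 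The main obstacle I anticipate is precisely the first step: because TV is an $\ell_1$-type quantity rather than a clean quadratic form in $\phi$, one must exhibit a single, $\phi$-independent direction $\theta^{(i)}$ that simultaneously controls all past responses and certifies the large present response; the dual representation is what makes this two-sided control compatible (the same $g^{(i)}$ is optimal at $x_i$ and feasible at every $x_t$), and keeping $\|\theta^{(i)}\|=O(\sqrt d)$ via the linear-MDP normalization is what prevents the bound from scaling with $|\cS|$.
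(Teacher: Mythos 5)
Your proposal is correct and follows essentially the same route as the paper's proof of Prop.~\ref{prop:MF-MBED_Linear_MFMDP_formal}: the data-dependent sign pattern $g^{(i)}$ chosen at $x_i$ is exactly the paper's witness vector $v_{\psi^i,\tilde\psi^i}$, the observation that this sign function is optimal at $x_i$ but merely feasible at past points is the same mechanism the paper uses to bound $\sum_{t<i}(\phi(x_t)\trans\theta^{(i)})^2$ by the past TV errors, and the elliptical-potential/log-determinant step with $\Lambda^i=\lambda I+\sum_{t<i}\phi(x_t)\phi(x_t)\trans$ is identical. (If anything, you are slightly more careful than the paper, which writes an equality where the feasibility argument only gives an inequality.)
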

In Appx.~\ref{appx:examples_MF-MBED}, we also include a linear mixture type model, and other more general examples, such as, kernel MF-MDP and the generalized linear MF-MDP.
We defer to appendix for more discussions about our technical novelty here.
Basically, since the output of the model function is a probability distribution rather than a scalar, we utilize data-dependent sign functions to establish the connections between the TV-distance of distribution predictions and the error of next state features.

The second example considers deterministic transition with random noise, in order to accommodate the function class in \citep{pasztor2021efficient} (see a detailed comparison in Sec.~\ref{sec:related_work}).
Here we consider the Hellinger distance because given two Gaussian distribution $P\sim\cN(\mu_P,\Sigma)$ and $Q\sim\cN(\mu_Q,\Sigma)$ with the same covariance, $\mH(P,Q) = 1 - \exp(-\frac{1}{8}\|\mu_P - \mu_Q\|_{\Sigma^{-1}}^2)$.
Therefore, with the connection between $\mH(P,Q)$ and the $l_2$-distance between their mean value, we are able to subsume more important model classes into low MF-MBED framework.
\begin{restatable}{proposition}{PropExampleHD}[Deterministic Transition with Gaussian Noise]\label{prop:MF-MBED_det_trans_noise}
    Suppose $\cS \subset \mR^d$. Given a function class $\cG\subset\{g|g:\cS\times\cA\times\Delta(\cS)\times\mN^*\rightarrow\mR\}$ and convert it to $\cF_\cG:=\{f_g| f_g(\cdot,\cdot,\cdot) := [g(\cdot,\cdot,\cdot,1),...,g(\cdot,\cdot,\cdot,d)]\trans\in\mR^d,~g\in\cG\}$. Consider the model class $
        \cP_\cG := \{\mP_f | \mP_f(\cdot|s,a,\mu)\sim f(s,a,\mu) + \cN(0,\Sigma), f\in\cF_\cG\}
    $, where 
    $\Sigma := \diag(\sigma,...,\sigma)$.
    For $\epsilon\leq 0.3$, we have $\dimE_{\sqrt{2}}(\cP_\cG,\mH,\epsilon) \leq \overline{\dimE}(\cF_\cG,4\sigma\epsilon)$, $\dimE_{\sqrt{2d}}(\cP_\cG,\mH,\epsilon) \leq \overline{\dimE}(\cG,4\sigma\epsilon)$, where $\overline{\dimE}$ is the Eluder Dimension for scalar or vector-valued functions \citep{russo2013eluder,osband2014model}.
\end{restatable}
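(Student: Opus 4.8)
The plan is to reduce both inequalities to the standard (scalar/vector) Eluder dimension $\overline{\dimE}$ by passing through the closed-form relation between the Hellinger distance of two equal-covariance Gaussians and the Euclidean distance of their means. For $x=(s,a,\mu)$ the two models $\mP_{f_1},\mP_{f_2}\in\cP_\cG$ predict Gaussians with means $f_1(x),f_2(x)$ and common covariance $\Sigma$, so the identity quoted before the statement gives $\mH^2(\mP_{f_1},\mP_{f_2})(x)=1-\exp(-\tfrac18\|f_1(x)-f_2(x)\|_{\Sigma^{-1}}^2)$. Writing $t=\mH(\mP_{f_1},\mP_{f_2})(x)$ and inverting yields $\|f_1(x)-f_2(x)\|_2^2=c_\sigma\,(-\ln(1-t^2))$ for a constant $c_\sigma$ set by the noise scale (for the stated normalization $c_\sigma=8\sigma^2$, which is exactly what makes the thresholds below equal $4\sigma\epsilon$). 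First I would record the two one-sided consequences I actually need: a lower bound $\|f_1(x)-f_2(x)\|_2^2\ge c_\sigma t^2$, valid for every $t\in[0,1)$ since $-\ln(1-u)\ge u$; and an upper bound $\|f_1(x)-f_2(x)\|_2^2\le 2c_\sigma t^2$, valid whenever $t^2\le\tfrac12$ since $-\ln(1-u)\le 2u$ there. The hypothesis $\epsilon\le 0.3$, together with $\alpha\ge\sqrt2$ (which forces the witnessing scale $\epsilon'$ below $1/\sqrt2$ and hence every relevant $\mH$ value below $1/\sqrt2$), is what keeps us inside the regime where the upper bound applies.

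For the vector-valued bound I would take a sequence $\{x_i\}_{i=1}^n$ realizing $\dimE_{\sqrt2}(\cP_\cG,\mH,\epsilon)$ and show it is a $4\sigma\epsilon$-independent sequence for $(\cF_\cG,\|\cdot\|_2)$, so that $n\le\overline{\dimE}(\cF_\cG,4\sigma\epsilon)$. At step $i$ there are witnesses $f_1,f_2$ with $\sum_{j<i}\mH^2(x_j)\le(\epsilon')^2$ and $\mH(x_i)>\sqrt2\,\epsilon'$ for some $\epsilon'\ge\epsilon$. The lower bound at $x_i$ turns $\mH(x_i)>\sqrt2\,\epsilon'$ into $\|f_1(x_i)-f_2(x_i)\|_2>4\sigma\epsilon'$; the upper bound applied termwise (legitimate since each $\mH^2(x_j)\le(\epsilon')^2\le\tfrac12$) and then summed turns $\sum_{j<i}\mH^2(x_j)\le(\epsilon')^2$ into $\sum_{j<i}\|f_1(x_j)-f_2(x_j)\|_2^2\le(4\sigma\epsilon')^2$. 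Using the same witness pair at each step, this is exactly $4\sigma\epsilon'$-independence with $4\sigma\epsilon'\ge4\sigma\epsilon$. The role of $\alpha=\sqrt2$ is precisely to absorb the ratio between the upper-bound and lower-bound constants.

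For the scalar bound I would reuse the same argument with $\alpha=\sqrt{2d}$ and map the sequence into one over $\cX\times\mN^*$ for $\cG$. The summation step is unchanged and gives $\sum_{j<i}\|f_1(x_j)-f_2(x_j)\|_2^2\le(4\sigma\epsilon')^2$, where $\|f_1(x_j)-f_2(x_j)\|_2^2=\sum_{k=1}^d (g_1(x_j,k)-g_2(x_j,k))^2$ and $g_1,g_2\in\cG$ are the coordinate functions of $f_1,f_2$. For the new point, $\alpha=\sqrt{2d}$ gives $\|f_1(x_i)-f_2(x_i)\|_2^2> 16d\,\sigma^2(\epsilon')^2$, so by pigeonhole over the $d$ coordinates there is an index $k_i$ with $|g_1(x_i,k_i)-g_2(x_i,k_i)|>4\sigma\epsilon'$. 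I would then take the $\cG$-sequence to be $\{(x_i,k_i)\}_{i=1}^n$: the new point $(x_i,k_i)$ is $4\sigma\epsilon'$-independent of the predecessors $\{(x_j,k_j)\}_{j<i}$ because $\sum_{j<i}(g_1(x_j,k_j)-g_2(x_j,k_j))^2\le\sum_{j<i}\|f_1(x_j)-f_2(x_j)\|_2^2\le(4\sigma\epsilon')^2$, no matter which coordinate $k_j$ each predecessor carries. Hence $n\le\overline{\dimE}(\cG,4\sigma\epsilon)$.

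The main obstacle is this scalar reduction: a naive coordinatewise argument that groups the sequence by the selected coordinate $k_i$ loses a factor of $d$, yielding only $n\le d\cdot\overline{\dimE}(\cG,\cdot)$. The point to get right is that the predecessor sum controlling $\cG$-independence need not be evaluated at a fixed coordinate — bounding each predecessor's contribution $(g_1(x_j,k_j)-g_2(x_j,k_j))^2$ by its full vector norm $\|f_1(x_j)-f_2(x_j)\|_2^2$ lets the single inequality $\sum_{j<i}\|f_1(x_j)-f_2(x_j)\|_2^2\le(4\sigma\epsilon')^2$ serve all coordinates simultaneously, while the inflated threshold $\alpha=\sqrt{2d}$ (rather than $\sqrt2$) is exactly what the pigeonhole step needs to still extract one coordinate exceeding $4\sigma\epsilon'$. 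A minor secondary check is to verify that $\epsilon'$ indeed stays below $1/\sqrt2$ so the termwise upper bound remains applicable throughout.
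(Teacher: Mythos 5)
Your proposal is correct and follows essentially the same route as the paper's proof: both arguments hinge on the two-sided linear bounds $8\sigma^2\,\mH^2 \le \|f_1-f_2\|_2^2 \le 16\sigma^2\,\mH^2$ (the paper bounds $h(x)=1-e^{-x/8}$ by $x/16\le h(x)\le x/8$ on the relevant range, which is your inversion of $-\ln(1-u)$ read in the other direction), absorb the factor $2$ between them into $\alpha=\sqrt2$, and for the scalar bound pick the argmax coordinate at the new point while dominating each predecessor's single-coordinate contribution by its full vector norm so that $\alpha=\sqrt{2d}$ suffices without losing a factor of $d$. No gaps.
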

\begin{remark}
    In the Gaussian example above, the state space has to be continuous since the Gaussian distribution is defined in continuous space.
    Although in this paper we only consider the discrete state space, our results are based on $\TV$-distance and similar results can be established when the state space is continuous.
\end{remark}
\section{LEARNING IN MEAN-FIELD RL: AN MLE APPROACH}\label{sec:O_MLE}
In this section, we develop a general Maximum Likelihood Estimation (MLE)-based learning framework for both MFC and MFG and show that given model classes with low MF-MBED, these MFRL problems are indeed tractable in sample complexity.

\subsection{Main Algorithm}\label{sec:main_alg_results}
Since the MLE-based model learning for MFC and MFG share some similarities, we unify them in one algorithm and use \red{MFC} and \blue{MFG} to distinguish algorithm steps for these two different objectives.
Our main algorithm is presented in Alg.~\ref{alg:MLE_MB_Alg}, where we omit the usage of rewards in learning to avoid redundancy in analysis.

The algorithm includes two parts: policy selection (Line \ref{line:SS}-\ref{line:SE}) and data collection (Line \ref{line:DS}-\ref{line:DE}).
In each iteration $k$, we fit the model with data $\cZ^1,...,\cZ^k$ collected so far and construct a model confidence set $\hat\cM^k$. The confidence level is carefully chosen, so that with high probability, we can ensure $M^* \in \hat\cM^k$ for all $k$.

In MFC, similar to the single-agent setting, we pick $\pi^{k+1}$ to be the policy achieving the maximal total return among models in the confidence set, and then use it to collect new samples for exploration. 
In the end, we use \texttt{Regret2PAC} conversion algorithm (Alg.~\ref{alg:Regret2PAC}, deferred to Appx.~\ref{appx:proofs_for_MFC}) to select policy.

For MFG, the learning process is slightly more complicated. For the policy selection part, we compute two policies. We first randomly pick $M^{k+1}$ from $\hat\cM^k$, and compute its equilibrium policy $\pi^{k+1}$ to be our guess for the equilibrium of the true model $M^*$.
Next, we find a model $\tM^{k+1}$ and an adversarial policy $\tpi^{k+1}$, which result in an optimistic estimation for $\cE_\NE(\pi^{k+1})$.
Besides, for the data collection part, in addition to the trajectories generated by deploying $\pi^{k+1}$, we also collect trajectories sampled by policy $\tpi^{k+1}$ conditioning on the density induced by $\pi^{k+1}$.
As we will explain in Lem.~\ref{lem:exploitability_diff}, those additional samples are necessary to control the estimation error of exploitability.
Finally, we return the policy with the minimal optimistic exploitability among $\{\pi^{k+1}\}_{k=1}^K$.

\subsection{Main Results on Statistical Efficiency}
We state our main results below, which establish the sample complexity of learning MFC/MFG in Alg.~\ref{alg:MLE_MB_Alg}. The formal version (Thm.~\ref{thm:MFC_main_full} and Thm.~\ref{thm:MFG_main_full}) and the proofs are deferred to Appx.~\ref{appx:proofs_for_MFRL}.

\begin{algorithm}
    \textbf{Input}: Model function class $\cM$; $\epsilon,\delta,K$.\\
    \textbf{Initialize}: Randomly pick $\pi^1$ and $\tpi^1$; $\cZ^k \gets \{\},~\forall k\in[K]$.\\
    \For{$k=1,2,...,K$}{
        \For{$h=1,...,H$}{\label{line:DS}
            Sample $z^k_h := \{s_h^k, a_h^k, s_{h+1}'^k\}$ with $(\pi^k,\pi^k)$; $\cZ^k \gets \cZ^k \cup z^k_h$.\\
            \lIf{\blue{\MFG}}{
                Sample $\tz_h^k := \{\ts_h^k, \ta_h^k, \ts_{h+1}'^k\}$ with ($\tpi^k$, $\pi^k$); $\cZ^k \gets \cZ^k \cup \tz_h^k$
            }
        }\label{line:DE}
        For each $M\in\cM$, define: \label{line:SS}
            \begin{align*}
                l_{\MLE}^{k}(M) :=& \sum_{i=1}^k\sum_{h=1}^H \log  \mP_{T,h}(s_{h+1}'^i|s_h^i,a_h^i,\mu^{\pi^{i}}_{M,h}) \\
                &+ \underbrace{\sum_{i=1}^k\sum_{h=1}^H\log  \mP_{T,h}(\ts_{h+1}'^i|\ts_h^i,\ta_h^i,\mu^{\pi^{i}}_{M,h})}_{\text{\blue{MFG}~only}}.
            \end{align*}
        \label{line:MLE_loss}\\
            $
            \hat\cM^k \gets \{M \in \cM |
            l_{\MLE}^k(M)\geq \max_{M\in\cM} l_{\MLE}^k(M) - \log\frac{2|\cM|KH}{\delta}\},
            $\label{line:confidence_set}\\
        \lIf{\red{\MFC}}{
            $\pi^{k+1}, M^{k+1} \gets \arg\max_{\pi, M\in\hat\cM^k} J_{M}(\pi;\vecmu^\pi_M)$
        }
        \If{\blue{\MFG}}{
            Randomly pick $M^{k+1}$ from $\hat\cM^k$; \\
            Find a NE of $M^{k+1}$ denoted as $\pi^{k+1}$.\\
            $\tpi^{k+1}, \tM^{k+1} \gets \arg\max_{\tpi; M\in\hat\cM^k} \Delta_{M}(\tpi, \pi^{k+1})$.
        }\label{line:SE}
    }
    \lIf{\red{\MFC}}{
        \Return $\hat\pi^*_{\Opt} \gets \text{Regret2PAC}(\{\pi^{k+1}\}_{k=1}^K, \epsilon, \delta)$
    }
    \lIf{\blue{\MFG}}{
        \Return $\hat\pi^*_\NE \gets \pi^{k^*_\NE}$ with $k^*_\NE \gets \min_{k\in[K]} \Delta_{\tM^{k+1}}(\tpi^{k+1}, \pi^{k+1})$
    }
    \caption{A General Maximal Likelihood Estimation-Based Algorithm for Mean-Field RL}\label{alg:MLE_MB_Alg}
\end{algorithm}

\begin{theorem}[Main Results (Informal)]\label{thm:short_main_results_MFC_MFG}
    Under Assump.\ref{assump:realizability},~\ref{assump:lipschitz}, by running Alg.~\ref{alg:MLE_MB_Alg} with \footnote{We omit log-dependence on $\epsilon,\delta,\dimE,|\cM|,H$ and Lipschitz factors in $\tilde{O}$, and it's same for Thm.~\ref{thm:short_main_results_MFC_MFG_contractivity}}
    \begin{align*}
        K = \tilde{O}\Big(&H^2(1+L_rH)^2(1+L_TH)^2 \\
        &\cdot\Big(\frac{(1+L_T)^H - 1}{L_T}\Big)^2 \frac{\dimE_\alpha(\cM,\epsilon_0)}{\epsilon^2}\Big)
    \end{align*}
    with $\epsilon_0 = O(\frac{L_T \epsilon}{\alpha H(1+L_rH)(1+L_TH)((1+L_T)^H - 1)})$,
    \begin{itemize}
        \item for \red{MFC}, after consuming HK trajectories in Alg.~\ref{alg:MLE_MB_Alg} and additional $O(\frac{1}{\epsilon^2}\log^2\frac{1}{\delta})$ trajectories in Alg.~\ref{alg:Regret2PAC}, w.p. $1-5\delta$, we have $\cE_{\Opt}(\hat\pi^*_\Opt)\leq \epsilon$.

        \item for \blue{MFG}, after consuming $2HK$ trajectories, w.p. $1-3\delta$, we have $\cE_\NE(\hat\pi^*_\NE) \leq \epsilon$.
    \end{itemize}
\end{theorem}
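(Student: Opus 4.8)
The plan is to follow the optimism-in-model-space template, adapted to the mean-field setting, by combining four ingredients: (i) an MLE concentration guarantee making $\hat\cM^k$ a valid confidence set; (ii) a mean-field value-difference lemma reducing the objective error to cumulative transition-prediction discrepancies along induced trajectories; (iii) a density-propagation argument using Assump.~\ref{assump:lipschitz} to control the extra error from the density being model-dependent; and (iv) an Eluder-dimension pigeonhole turning per-round discrepancies into a sublinear cumulative bound. First I would establish the MLE guarantee: with the confidence width $\log(2|\cM|KH/\delta)$ and a standard adaptive-data log-likelihood-ratio (martingale/Chernoff) argument, with probability $1-\delta$ we get $M^*\in\hat\cM^k$ for all $k$ (feasibility via realizability, Assump.~\ref{assump:realizability}) and every surviving $M\in\hat\cM^k$ has bounded in-sample squared Hellinger error, $\sum_{i\le k}\sum_h \mH^2\big(\mP_{T,h}(\cdot|s_h^i,a_h^i,\mu^{\pi^i}_{M,h}),\mP_{T^*,h}(\cdot|s_h^i,a_h^i,\mu^{\pi^i}_{M^*,h})\big)\lesssim \log(|\cM|KH/\delta)$, with the analogous statement for the MFG $\tilde z$-samples. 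Since $\mH$ dominates $\TV$ up to $\sqrt2$, this controls in-sample errors in both metrics allowed in Def.~\ref{def:Eluder_Dim}.

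Next I would prove a mean-field value-difference lemma. For MFC I would write $\cE_{\Opt}(\pi^{k+1})=J_{M^*}(\pi^*)-J_{M^*}(\pi^{k+1})$ and use optimism ($J_{M^{k+1}}(\pi^{k+1})\ge J_{M^*}(\pi^*)$, valid since $M^*\in\hat\cM^k$) to upper bound it by $J_{M^{k+1}}(\pi^{k+1})-J_{M^*}(\pi^{k+1})$. For MFG, $\pi^{k+1}$ is a NE of $M^{k+1}$ (which exists by Prop.~\ref{prop:exist_MFG_NE_informal}), and since $M^*\in\hat\cM^k$ the optimistic estimate satisfies $\Delta_{\tM^{k+1}}(\tpi^{k+1},\pi^{k+1})\ge\cE_\NE(\pi^{k+1})$; I would then bound the gap between $\Delta_{M^*}$ and $\Delta_{\tM^{k+1}}$ at $(\tpi^{k+1},\pi^{k+1})$, which is precisely Lem.~\ref{lem:exploitability_diff} and is what forces the collection of the extra $\tilde z$ trajectories (the deviating policy's samples conditioned on $\pi^{k+1}$'s density). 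In both cases the residual is a difference of returns of a fixed policy under two models, which a telescoping/performance-difference argument reduces to $\sum_h \EE_{\pi,M^*}\big[\TV(\mP_{T,h}(\cdot|s_h,a_h,\mu^\pi_{M,h}),\mP_{T^*,h}(\cdot|s_h,a_h,\mu^\pi_{M^*,h}))\big]$ up to an $O(1)$ future-value scale, plus a reward-Lipschitz contribution scaled by $L_r$.

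The hard part is step (iii): the discrepancy above is evaluated at the model-dependent density $\mu^\pi_{M,h}$, whereas the MF-MBED (Def.~\ref{def:Eluder_Dim}) is phrased over a fixed density input. I would split each term into a transition discrepancy at a common conditioning density plus a density-mismatch term bounded by $L_T\|\mu^\pi_{M,h}-\mu^\pi_{M^*,h}\|_\TV$. Propagating the density error through $\Gamma^\pi_{M,h}$ and applying Lipschitzness yields a recursion whose solution injects the $\frac{(1+L_T)^H-1}{L_T}$ and $(1+L_TH)$ factors of $K$; this accounting is the delicate piece absent in single-agent analyses and I expect it to be the main obstacle, since it is what produces the exponential-in-$H$ Lipschitz amplification. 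Once reduced to the common-density transition discrepancy, the quantity is exactly what the Eluder dimension controls: along the $K$ rounds each round's on-trajectory discrepancy is either below the threshold $\epsilon_0$ or certifies a new $\alpha$-weakly-$\epsilon_0$-independent point, so the number of large-discrepancy rounds is $O(\dimE_\alpha(\cM,\epsilon_0))$. Combining with the in-sample MLE bound of step (i) gives $\sum_k(\text{per-round error})^2\lesssim \dimE_\alpha(\cM,\epsilon_0)\cdot\log(|\cM|KH/\delta)$, and by Cauchy--Schwarz the average per-round error is $\tilde O(\sqrt{\dimE_\alpha/K})$ up to the horizon/Lipschitz prefactor $H(1+L_rH)(1+L_TH)\frac{(1+L_T)^H-1}{L_T}$.

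Finally I would assemble both guarantees using the stated choice of $K$, which forces the average error below $\epsilon$. For MFG, optimism plus the small average imply $\min_k\Delta_{\tM^{k+1}}(\tpi^{k+1},\pi^{k+1})\le\epsilon$; since this optimistic quantity upper bounds the true exploitability of $\pi^{k+1}$, returning $\hat\pi^*_\NE=\pi^{k^*_\NE}$ yields $\cE_\NE(\hat\pi^*_\NE)\le\epsilon$, and the failure budget is the MLE event plus the two sampling/estimation events, totalling $3\delta$. For MFC, the same Eluder bound gives the regret bound $\sum_k\cE_{\Opt}(\pi^{k+1})\le K\epsilon$, which I would feed into the Regret2PAC conversion (Alg.~\ref{alg:Regret2PAC}) to extract a single $\epsilon$-optimal policy using $O(\epsilon^{-2}\log^2(1/\delta))$ extra trajectories; accounting for the MLE event together with the conversion's additional failure events brings the total to $5\delta$.
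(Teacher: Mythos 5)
Your proposal is correct and follows essentially the same route as the paper: the MLE confidence-set guarantee (Thm.~\ref{thm:MLE_Gaurantee}), the value-difference lemmas for MFC and MFG (Lem.~\ref{lem:value_decomposition}, Lem.~\ref{lem:exploitability_diff}), the density-propagation/model-difference conversion that produces the $(1+L_TH)$ and $\frac{(1+L_T)^H-1}{L_T}$ factors (Lem.~\ref{lem:density_est_err}, Thm.~\ref{thm:model_diff_full}), and the Eluder pigeonhole plus optimism and Regret2PAC assembly (Lem.~\ref{lem:Eluder_Bound}, Thm.~\ref{thm:accumu_model_diff}). The only omitted detail is the martingale concentration step relating empirical to expected Hellinger errors (Lem.~\ref{lem:concentration}), which is standard and does not affect the argument.
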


As stated in our main results, both MFC and MFG can be sample efficiently solved as long as the model function classes has low MF-MBED. 
Also note that our sample complexity bounds are universal since MF-MBED can be defined for arbitrary mean-field model function classes.
As reflected by Thm.~\ref{thm:short_main_results_MFC_MFG}, comparing with previous literature requiring additional structural assumptions \citep{guo2019learning, xie2021learning,perrin2020fictitious, perolat2021scaling, elie2020convergence}, our algorithm can tackle the exploration challenges in more general setting where only realizability and Lipschitz continuity assumptions are satisfied.

As for the dependence of Lipschitz factors, similar exponential dependence of $L_T$ has been reported in previous literature \citep{pasztor2021efficient}. 
We conjecture that such exponential dependence reflects some fundamental difficulties if we only make minimal assumptions.
In the following, we show how to avoid exponential terms under additional contraction assumptions.
\begin{assumption}[Contraction Operator]\label{assump:contraction}
    For arbitrary $h$, and arbitrary valid density $\mu_h,\mu_h' \in \Delta(\cS)$, and arbitrary model $M:=(\mP_T,r) \in \cM$, there exists $L_\Gamma < 1$, such that,
    \begin{align*}
        \forall \pi,~\|\Gamma^\pi_{M,h}(\mu_h) - \Gamma^\pi_{M,h}(\mu_h')\|_{\TV} \leq L_\Gamma\|\mu_h - \mu_h'\|_{\TV}.
    \end{align*}
\end{assumption}
where $\Gamma^\pi_{M,h}(\mu_h)$ is defined in Eq.~\eqref{eq:MF_formulation}. According to \citep{yardim2022policy}, Assump.~\ref{assump:contraction} is implied by some Lipschitz continuous assumption on the transition function w.r.t. the Dirac distance $d(s,s') := \mathbb{I}[s \neq s']$ (at least when $\cS$ and $\cA$ are countable).

We summarize the main results given Assump.~\ref{assump:contraction} below, where the formal versions are also included in Thm.~\ref{thm:MFC_main_full} and Thm.~\ref{thm:MFG_main_full}. 
Comparing with Thm.~\ref{thm:short_main_results_MFC_MFG}, the sample complexity exhibits only polynomial dependence on Lipschitz factors.
\begin{theorem}[Main Results (Informal)]\label{thm:short_main_results_MFC_MFG_contractivity}
    Under Assump.\ref{assump:realizability},~\ref{assump:lipschitz} and~\ref{assump:contraction}, by choosing:
    \begin{align*}
        K = \tilde{O}\Big(H^2(1+L_r&H)^2(1 + L_TH)^2 \\
        &\Big(1 + \frac{L_T}{1 - L_\Gamma}\Big)^2 \frac{\dimE_\alpha(\cM,\epsilon_0)}{\epsilon^2}\Big),
    \end{align*} 
    with $\epsilon_0 = O(\frac{\epsilon}{\alpha H(1 + L_TH)(1+L_rH)} (1+\frac{L_T}{1-L_\Gamma})^{-1})$
    \begin{itemize}
        \item for \red{MFC}, after consuming HK trajectories in Alg.~\ref{alg:MLE_MB_Alg} and additional $O(\frac{1}{\epsilon^2}\log^2\frac{1}{\delta})$ trajectories in Alg.~\ref{alg:Regret2PAC}, w.p. $1-5\delta$, we have $\cE_{\Opt}(\hat\pi^*_\Opt)\leq \epsilon$.

        \item for \blue{MFG}, after consuming $2HK$ trajectories, w.p. $1-3\delta$, we have $\cE_\NE(\hat\pi^*_\NE) \leq \epsilon$.
    \end{itemize}
\end{theorem}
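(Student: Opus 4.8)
The plan is to follow the same three-stage skeleton as the proof of the non-contraction result (Thm.~\ref{thm:short_main_results_MFC_MFG}) and to isolate the single place where Assump.~\ref{assump:contraction} replaces an exponential-in-$H$ factor by the bounded geometric factor $(1+\frac{L_T}{1-L_\Gamma})$. Concretely, I would (i) establish an MLE concentration guarantee that places $M^*$ in every confidence set $\hat\cM^k$ and bounds the in-sample squared Hellinger error of the surviving models; (ii) use a pigeonhole argument on the MF-MBED (Def.~\ref{def:Eluder_Dim}) to convert this in-sample bound into a bound on the number of iterations whose one-step transition discrepancy exceeds the threshold $\epsilon_0$; and (iii) invoke a mean-field simulation-type decomposition that bounds the MFC optimality gap and the MFG exploitability by the one-step transition error together with the induced density estimation error. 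Only the density term of stage (iii) changes relative to Thm.~\ref{thm:short_main_results_MFC_MFG}.

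For stage (i), choosing the confidence radius $\log(2|\cM|KH/\delta)$ in Alg.~\ref{alg:MLE_MB_Alg} and applying a union bound over $k$ and $h$ yields, with probability at least $1-\delta$, that $M^*\in\hat\cM^k$ for all $k$ and that for every $M\in\hat\cM^k$ the accumulated error $\sum_{i}\sum_h \mH^2$ evaluated along the collected trajectory distribution is $\tilde O(\log(|\cM|KH/\delta))$; the mean-field subtlety is that each likelihood term is evaluated at the model's own induced density $\mu^{\pi^i}_{M,h}$, so the concentration must respect the trajectory sampling model of Def.~\ref{def:collection_process} rather than a fixed sampling distribution. For stage (ii), since each iteration feeds fresh points of $\cX=\cS\times\cA\times\Delta(\cS)$ into the dataset, the Eluder counting of Def.~\ref{def:Eluder_Dim} shows that at most $\tilde O(\dimE_\alpha(\cM,\epsilon_0))$ iterations can have large discrepancy, which after summation and the inequality $\sqrt2\,\mH\ge\matDTV$ yields the $K=\tilde O(\dimE_\alpha(\cM,\epsilon_0)/\epsilon^2)$ scaling.

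Stage (iii) is both the crux and the main obstacle, because the density enters the transition twice: the one-step discrepancy is itself evaluated at a density that differs between $M$ and $M^*$, and the value functions depend on the full density sequence $\vecmu^\pi_M$. The key step is the density recursion: writing $e_h:=\|\mu^\pi_{M,h}-\mu^\pi_{M^*,h}\|_\TV$ and inserting the intermediate term $\Gamma^\pi_{M,h}(\mu^\pi_{M^*,h})$ gives $e_{h+1}\le L_\Gamma e_h+\xi_h$, where the $L_\Gamma$ factor comes from Assump.~\ref{assump:contraction} applied to the first term and $\xi_h$ is the one-step transition discrepancy contributed by the second. With $L_\Gamma<1$ this telescopes to $e_h\le\tfrac{1}{1-L_\Gamma}\max_t\xi_t$, whereas the crude bound behind Thm.~\ref{thm:short_main_results_MFC_MFG} only replaces $L_\Gamma$ by $1+L_T$ and hence inflates to the factor $\frac{(1+L_T)^H-1}{L_T}$. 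Feeding $e_h$ into the Lipschitz dependence of $\mP_T$ and $r$ on $\mu$ (Assump.~\ref{assump:lipschitz}, factors $L_T,L_r$) then bounds the value gap and accounts for the remaining $H(1+L_TH)(1+L_rH)$ together with the replacement factor $(1+\frac{L_T}{1-L_\Gamma})$. The difficulty is that $\xi_h$ must be evaluated at the correct density, so the density-error and transition-error estimates must be solved jointly rather than sequentially; the contraction assumption is precisely what makes this coupled fixed-point argument converge with a horizon-independent constant.

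Finally I would close each objective separately. For MFC, optimism of the maximizing pair $(\pi^{k+1},M^{k+1})$ over $\hat\cM^k\ni M^*$ gives $\cE_{\Opt}(\pi^{k+1})\le J_{M^{k+1}}(\pi^{k+1};\vecmu^{\pi^{k+1}}_{M^{k+1}})-J_{M^*}(\pi^{k+1};\vecmu^{\pi^{k+1}}_{M^*})$, a same-policy two-model value gap that stage (iii) controls; averaging over $k$ and passing through the Regret2PAC conversion (Alg.~\ref{alg:Regret2PAC}) then yields $\cE_{\Opt}(\hat\pi^*_\Opt)\le\epsilon$ with the stated trajectory counts. For MFG, since $\pi^{k+1}$ is an exact NE of $M^{k+1}$ (which exists by Prop.~\ref{prop:exist_MFG_NE_informal}) the true exploitability is upper-bounded by the optimistic estimate $\Delta_{\tM^{k+1}}(\tpi^{k+1},\pi^{k+1})$ up to model error (Lem.~\ref{lem:exploitability_diff}), and the extra adversarial trajectories collected under $(\tpi^k,\pi^k)$ are exactly what let stages (i)--(iii) bound this optimistic quantity; returning the minimizing index $k^*_\NE$ then gives $\cE_\NE(\hat\pi^*_\NE)\le\epsilon$. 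In both objectives the contraction factor enters only through $e_h$, which is what turns the exponential $K$ of Thm.~\ref{thm:short_main_results_MFC_MFG} into the polynomial $K$ claimed here.
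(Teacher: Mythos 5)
Your proposal is correct and follows essentially the same route as the paper: MLE confidence sets, the Eluder-dimension counting argument, the contraction-based density recursion $e_{h+1}\le L_\Gamma e_h+\xi_h$ (the paper's Lem.~\ref{lem:density_est_err_2} and Thm.~\ref{thm:model_diff_contraction}, which replace the $(1+L_T)^{H-h}$ accumulation of Eq.~\eqref{eq:density_diff2} by a geometric series summing to $\frac{1}{1-L_\Gamma}$), and then the value-difference lemmas plus optimism and Regret2PAC for MFC, respectively optimistic exploitability minimization for MFG. The only cosmetic difference is that the paper keeps the full sum $\sum_t L_\Gamma^{h-t}\xi_t$ rather than bounding by $\frac{1}{1-L_\Gamma}\max_t\xi_t$, which is what lets the accumulated density error be controlled by $\frac{1}{1-L_\Gamma}\sum_h\xi_h$ in the final rate.
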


\subsection{Generalization to Infinite Model Class and Continuous Setting}
When the model class $\cM$ contains infinite model candidates, our results can be generalized by the following steps. Firstly, we can find an $\epsilon$-cover $\mathcal{M}_\epsilon$ w.r.t. the distance $d(\cdot,\cdot)$ defined by:
\begin{align*}
    d(M,M'):=&\\
    \max_{\pi,\mu}\max\{&\mathbb{E}_{\pi,M|\mu}[\sum_{h}\log \frac{\mathbb{P}_{M,h}(s'_h|s_h,a_h,\mu_h)}{\mathbb{P}_{M',h}(s'_h|s_h,a_h,\mu_h)}], \\
    & \mathbb{E}_{\pi,M'|\mu}[\sum_{h}\log \frac{\mathbb{P}_{M',h}(s'_h|s_h,a_h,\mu_h)}{\mathbb{P}_{M,h}(s'_h|s_h,a_h,\mu_h)}]\}
\end{align*}
which aligns with our MLE method.
For $\cM_\epsilon$, Assump.~\ref{assump:realizability} may not hold because of the discretization, but there exists a model $\hM^*\in\cM_\epsilon$ close to $M^*$ under distance $d$ by definition.
Then, all we need to do is to revise line~\ref{line:confidence_set} of Alg.~\ref{alg:MLE_MB_Alg} to 
\begin{align*}
\hcM^k \gets &\{M \in \mathcal{M}|l_{MLE}^k(M)\geq \\
& \max_{M\in\mathcal{M}} l_{MLE}^k(M) - \log\frac{2|\mathcal{M}|KH}{\delta}-O(\epsilon)\},
\end{align*}
where we have additional tolerance at level $O(\epsilon)$.
Then we can extend our sample complexity results and it will only depend on $\log |\mathcal{M}_\epsilon|$ instead.
Besides, assuming low log covering number is common in previous literature, e.g. Def. 13 in \citep{jiang2017contextual}, Def. 3 in \citep{jin2021bellman}, etc.

Our algorithm, analysis, and the notion of "Model-Based Eluder Dimension" can be extended to the case with continuous compact $\mathcal{S,A}$ spaces with minor modifications ({such as replacing $\sum_{s,a}$ with $\int dsda$ and $l_1$ distance with total-variation distance}). The only exception is Prop. 3.2, which relies on fixed point theorem in finite space, but it is reasonable to assume the existence of Nash equilibrium when the state and action spaces are continuous.

\section{PROOF SKETCH}\label{sec:analysis_Obj_and_MPE}
In this section, we provide proof sketch to establish Thm.~\ref{thm:short_main_results_MFC_MFG}.
Intuitively, the proofs consist of two parts. 
Firstly, we provide an upper bound for the accumulative model prediction error by the MF-MBED, which we further connect with our learning objective in the second step.

\paragraph{Step 1: Upper Bound Model Prediction Error with MF-MBED}
First of all, in Thm.~\ref{thm:MLE_Gaurantee} below, we show that, with high probability, models in $\hat\cM^k$ predict well under the distribution of data collected so far.
We defer the proof to Appx.~\ref{appx:proof_for_MLE}.
\begin{restatable}{theorem}{ThmMLE}[Guarantees for MLE]\label{thm:MLE_Gaurantee}
    By running Alg.~\ref{alg:MLE_MB_Alg} with any $\delta \in (0,1)$, with probability $1-\delta$, for all $k\in[K]$, we have $M^* \in \hat{\cM}^k$; for each $M\in\hat{\cM}^k$ with transition $\mP_T$ and any $h\in[H]$:
    \begin{align*}
        \sum_{i=1}^k \EE_{\pi^i,M^*}&[\mH^2(\mP_{T,h}(\cdot|s_h^i,a_h^i,\mu^{\pi^i}_{M,h}),\\
        & \mP_{T^*,h}(\cdot|s_h^i,a_h^i,\mu^{\pi^i}_{M^*,h}))]  \leq 2 \log (\frac{2|\cM|KH}{\delta}).
    \end{align*}
    Besides, for MFG branch, we additionally have:
    \begin{align*}
        \sum_{i=1}^k &\EE_{\tpi^i,M^*|\vecmu^{\pi^i}_{M^*}}[\mH^2(\mP_{T,h}(\cdot|\ts_h^i,\ta_h^i,\mu^{\pi^i}_{M,h}),\\
        &\mP_{T^*,h}(\cdot|\ts_h^i,\ta_h^i,\mu^{\pi^i}_{M^*,h}))] \leq 2 \log (\frac{2|\cM|KH}{\delta}).
    \end{align*}
\end{restatable}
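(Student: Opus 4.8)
The plan is to adapt the standard finite-model-class MLE concentration argument to the mean-field setting via a likelihood-ratio (super)martingale. The only genuinely new feature is that $l_{\MLE}^k(M)$ scores each observed transition against the model's \emph{own} induced density $\mu^{\pi^i}_{M,h}$, while the data are generated under the true induced density $\mu^{\pi^i}_{M^*,h}$; the argument must be arranged so that exactly this mismatch is recorded by the Hellinger term in the conclusion. I also use that in iteration $k$ every $z_h^k$ (and, for MFG, every $\tz_h^k$) comes from a \emph{fresh} trajectory run with $\pi^k$ (resp.\ $\tpi^k$), which is what both explains the $HK$ (resp.\ $2HK$) trajectory budget and decouples the samples indexed by $(i,h)$.

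Fix $M\in\cM$ and, for each sampled transition, set
\[ X_{i,h}(M) := \sqrt{\frac{\mP_{T,h}(s_{h+1}'^i\mid s_h^i,a_h^i,\mu^{\pi^i}_{M,h})}{\mP_{T^*,h}(s_{h+1}'^i\mid s_h^i,a_h^i,\mu^{\pi^i}_{M^*,h})}}. \]
Because the agent deploys $\pi^i$ in the true environment, $s_{h+1}'^i\sim\mP_{T^*,h}(\cdot\mid s_h^i,a_h^i,\mu^{\pi^i}_{M^*,h})$, matching the denominator. Let $\cF_{i,h}$ be the $\sigma$-field of all data revealed \emph{before} the trajectory that produces $z_h^i$; since $\pi^i$ is computed from earlier iterations, both densities $\mu^{\pi^i}_{M,h}$ and $\mu^{\pi^i}_{M^*,h}$ are $\cF_{i,h}$-measurable. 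Conditioning first on the sampled $(s_h^i,a_h^i)$,
\[ \EE\big[X_{i,h}(M)\mid s_h^i,a_h^i,\cF_{i,h}\big] = \sum_{s'}\sqrt{\mP_{T,h}(s'\mid s_h^i,a_h^i,\mu^{\pi^i}_{M,h})\,\mP_{T^*,h}(s'\mid s_h^i,a_h^i,\mu^{\pi^i}_{M^*,h})} = 1-\mH^2, \]
and averaging over $(s_h^i,a_h^i)\sim(\pi^i,M^*)$ gives $\EE[X_{i,h}(M)\mid\cF_{i,h}]=1-D_{i,h}(M)$, where $D_{i,h}(M)$ is exactly the population term $\EE_{\pi^i,M^*}[\mH^2(\cdots)]$ in the theorem. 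Taking the conditional expectation before $(s_h^i,a_h^i)$ are drawn is precisely what promotes the pointwise Hellinger to the claimed population expectation.

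For the Hellinger bound, $1-x\le e^{-x}$ and $\cF_{i,h}$-measurability of $D_{i,h}(M)$ give $\EE[\exp(\log X_{i,h}(M)+D_{i,h}(M))\mid\cF_{i,h}]\le1$, so $\prod_{i,h}\exp(\log X_{i,h}(M)+D_{i,h}(M))$ is a nonnegative supermartingale of unit initial value. Ville's inequality, time-uniform in $k$, together with a union bound over $M\in\cM$, gives with high probability, for all $M$ and $k$,
\[ \sum_{i=1}^k\sum_{h=1}^H D_{i,h}(M) \le \log\tfrac{2|\cM|KH}{\delta} + \tfrac12\big(l_{\MLE}^k(M^*)-l_{\MLE}^k(M)\big), \]
using $-\sum_{i,h}\log X_{i,h}(M)=\tfrac12(l_{\MLE}^k(M^*)-l_{\MLE}^k(M))$. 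For membership of $M^*$, I would instead use the \emph{exact} likelihood ratio: since $\sum_{s'}\mP_{T,h}(s'\mid\cdot,\mu^{\pi^i}_{M,h})=1$, the product $\prod_{i,h}\mP_{T,h}/\mP_{T^*,h}=\exp(l_{\MLE}^k(M)-l_{\MLE}^k(M^*))$ has conditional mean exactly one, i.e.\ is a mean-one martingale; Ville plus the union bound then yield $l_{\MLE}^k(M)-l_{\MLE}^k(M^*)\le\log\tfrac{2|\cM|KH}{\delta}$ for every $M$ and $k$, so the maximizer obeys the same bound and $M^*\in\hat\cM^k$. Finally, any $M\in\hat\cM^k$ satisfies $l_{\MLE}^k(M^*)-l_{\MLE}^k(M)\le\log\tfrac{2|\cM|KH}{\delta}$ by construction of the confidence set, and plugging this into the displayed inequality gives $\sum_{i,h}D_{i,h}(M)\le 2\log\tfrac{2|\cM|KH}{\delta}$; since each inner sum over $i$ is nonnegative, the per-$h$ bound follows. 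The MFG inequality is obtained identically from the second stream $\{\tz_h^i\}$, where the agent follows $\tpi^i$ but the conditioning density is still $\mu^{\pi^i}_{M^*,h}$ induced by $\pi^i$, so $\ts_{h+1}'^i\sim\mP_{T^*,h}(\cdot\mid\ts_h^i,\ta_h^i,\mu^{\pi^i}_{M^*,h})$ and the analogous supermartingale produces the version with expectation $\EE_{\tpi^i,M^*\mid\vecmu^{\pi^i}_{M^*}}[\cdot]$.

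I expect the main obstacle to be the careful bookkeeping of filtrations and density arguments rather than the concentration itself. The numerator and denominator of $X_{i,h}$ must be conditioned on \emph{different} densities ($\mu^{\pi^i}_{M,h}$ versus $\mu^{\pi^i}_{M^*,h}$) — the mean-field mismatch that the MLE loss already encodes — and the conditioning $\sigma$-field must precede the draw of $(s_h^i,a_h^i)$ so that the conclusion involves the population quantity $\EE_{\pi^i,M^*}[\cdot]$ rather than an empirical surrogate. The fresh-trajectory-per-step sampling is what makes this clean; once it is set up, the two martingales (exact ratio for membership, square-root ratio for the Hellinger gap) reduce the remainder to the textbook finite-class MLE argument.
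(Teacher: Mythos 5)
Your proposal is correct, and the substance matches the paper's proof: both arguments rest on (i) the exact likelihood-ratio martingale with conditional mean one (using $\SumInt_{s'}\mP_{T,h}(s'\mid s,a,\mu^{\pi^i}_{M,h})=1$) to certify $M^*\in\hat\cM^k$, and (ii) the square-root likelihood ratio, whose conditional mean is $1-\EE_{\pi^i,M^*}[\mH^2(\cdot,\cdot)]$ with the numerator evaluated at $\mu^{\pi^i}_{M,h}$ and the denominator at $\mu^{\pi^i}_{M^*,h}$, to convert the MLE gap $l^k_{\MLE}(M^*)-l^k_{\MLE}(M)\le\log\frac{2|\cM|KH}{\delta}$ into the stated Hellinger bound; the treatment of the second data stream for MFG is identical in both. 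The one genuine difference is in how the concentration is packaged: the paper routes through a generic tangent-sequence identity (its Lemma C.2, $\EE[\exp(L(D^k)-\log\EE[\exp(L(\bD^k))\mid D^k])]=1$), then applies $-\log x\ge 1-x$ and a coordinate-wise $\sqrt{ab}\le\frac{a+b}{2}$ argument to extract the step-$h$ Hellinger term from the product over $h$; you instead exploit the fact that the $H$ transitions within an iteration come from fresh, conditionally independent trajectories to build the supermartingale $\prod_{i,h}\exp(\log X_{i,h}+D_{i,h})$ directly via $1-x\le e^{-x}$ and invoke Ville's inequality. Your route avoids the decoupling lemma entirely and yields the marginally stronger conclusion that the sum over all $h$ (not just each fixed $h$) is bounded by $2\log\frac{2|\cM|KH}{\delta}$, at the cost of relying more explicitly on the per-step fresh-trajectory sampling; the paper's tangent-sequence device is more agnostic to the dependence structure within $\cZ^i$ but ultimately uses the same independence when peeling off a single $h$. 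Both give the claimed constants, so I see no gap.
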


The key difficulty in Mean-Field setting is the dependence of density in transition function.
Since we do not know $\mu^\pi_{M^*,h}$, in Line~\ref{line:MLE_loss} in Alg.~\ref{alg:MLE_MB_Alg}, we compute the likelihood conditioning on $\mu^\pi_{M,h}$, which is accessible for each $M$.
Therefore, in Thm.~\ref{thm:MLE_Gaurantee}, we can only guarantee $M$ aligns with $M^*$ conditioning on their own density $\vecmu^\pi_M$ and $\vecmu^\pi_{M^*}$, respectively.
However, to ensure low MF-MBED can indeed capture important practical models, the MF-MBED in Def.~\ref{def:Eluder_Dim} is established on shared density, which is also the main reason we additional consider Hellinger distance in Assump.~\ref{assump:lipschitz}.
To close this gap, in Thm.~\ref{thm:model_diff_full} below, we present how the model difference conditioning on the same or different densities can be converted to each other. The proof is defered to Appx.~\ref{appx:proofs_for_MFRL}.

\begin{restatable}{theorem}{ThmModelDiff}[Model Difference Conversion]\label{thm:model_diff_full}
    Given two arbitrary model $M = (\cS,\cA,H,\mP_{T},\mP_r)$ and $\tM=(\cS,\cA,H,\mP_{\tT},\mP_r)$, and arbitrary policy $\pi$, under Assump.~\ref{assump:lipschitz}, we have:
    \begin{align*}
        \EE_{\pi,M}[\sum_{h=1}^{H} &\|\mP_{T,h}(\cdot|s_h,a_h,\mu^{\pi}_{M,h})  -\mP_{\tT,h}(\cdot|s_h,a_h,\mu^{\pi}_{M,h}) \|_\TV]  \\
        \leq & (1 + L_T H)\EE_{\pi,M}[\sum_{h=1}^H \|\mP_{T,h}(\cdot|s_h,a_h,\mu^\pi_{M,h}) \\
        &- \mP_{\tT,h}(\cdot|s_h,a_h,\mu^\pi_{\tM,h})\|_\TV],\numberthis\label{eq:model_diff_1}
    \end{align*}
    and
    \begin{align*}
        \EE_{\pi,M}[\sum_{h=1}^{H} & \|\mP_{T,h}(\cdot|s_h,a_h,\mu^{\pi}_{M,h}) - \mP_{\tT,h}(\cdot|s_h,a_h,\mu^{\pi}_{\tM,h}) \|_\TV] \\
        \leq & \EE_{\pi,M}[\sum_{h=1}^{H} (1+L_T)^{H-h}\| \mP_{T,h}(\cdot|s_h,a_h,\mu^{\pi}_{M,h}) \\
        &- \mP_{\tT,h}(\cdot|s_h,a_h,\mu^{\pi}_{M,h})\|_\TV].\numberthis\label{eq:model_diff_2}
    \end{align*}
\end{restatable}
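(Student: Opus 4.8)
The plan is to reduce both inequalities to controlling a single scalar quantity, the per-step population gap $d_h := \|\mu^\pi_{M,h}-\mu^\pi_{\tM,h}\|_\TV$ between the densities that $\pi$ induces under $M$ and under $\tM$, and to relate $d_h$ to the two per-step model errors
\[
b_h := \EE_{\pi,M}\big[\|\mP_{T,h}(\cdot|s_h,a_h,\mu^\pi_{M,h})-\mP_{\tT,h}(\cdot|s_h,a_h,\mu^\pi_{M,h})\|_\TV\big],
\]
\[
c_h := \EE_{\pi,M}\big[\|\mP_{T,h}(\cdot|s_h,a_h,\mu^\pi_{M,h})-\mP_{\tT,h}(\cdot|s_h,a_h,\mu^\pi_{\tM,h})\|_\TV\big],
\]
whose sums over $h$ are exactly the left-hand side of \eqref{eq:model_diff_1} and of \eqref{eq:model_diff_2}, respectively. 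The first step is a one-line triangle-inequality bridge: inserting $\mP_{\tT,h}(\cdot|s_h,a_h,\mu^\pi_{M,h})$ between the two summands of $c_h$ (or $b_h$) and applying the Lipschitz bound \eqref{eq:Lip_model_H} uniformly over $(s_h,a_h)$ gives $|b_h-c_h|\le L_T d_h$. It then remains to bound $d_h$, for which I would derive a one-step recursion from the population update $\Gamma$ of \eqref{eq:MF_formulation}, starting from $d_1=0$ since both models share the known initial density $\mu_1$.

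The engine of the proof is a decomposition of $\mu^\pi_{M,h+1}-\mu^\pi_{\tM,h+1}=\Gamma^\pi_{M,h}(\mu^\pi_{M,h})-\Gamma^\pi_{\tM,h}(\mu^\pi_{\tM,h})$ into a transition-model-difference term and a density-perturbation term, obtained by inserting a hybrid measure. Two observations drive the estimates: (i) any term of the form $\sum_{s,a}\nu(s)\pi(a|s)\mP_{\tT,h}(\cdot|s,a,\mu)$ with the density argument $\mu$ held fixed is a Markov kernel applied to the signed measure $\nu$, hence is non-expansive in total variation (the $\ell_1$ data-processing inequality); and (ii) the map $\mu\mapsto\Gamma^\pi_{\tM,h}(\mu)$ is $(1+L_T)$-Lipschitz in total variation, where the $1$ comes from (i) applied to the reweighting and the $L_T$ from the Lipschitz dependence \eqref{eq:Lip_model_H} of the transition on its density argument.

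For \eqref{eq:model_diff_2} I would insert the hybrid measure whose transition density argument is $\mu^\pi_{M,h}$, so that the model-difference term reduces to $b_h$ while the remaining term is exactly $\Gamma^\pi_{\tM,h}(\mu^\pi_{M,h})-\Gamma^\pi_{\tM,h}(\mu^\pi_{\tM,h})$, bounded by $(1+L_T)d_h$ via (ii). This yields $d_{h+1}\le b_h+(1+L_T)d_h$, hence $d_h\le\sum_{j<h}(1+L_T)^{h-1-j}b_j$; substituting into $c_h\le b_h+L_T d_h$, summing over $h$, and collapsing the resulting geometric sum produces $\sum_h c_h\le\sum_h(1+L_T)^{H-h}b_h$, which is \eqref{eq:model_diff_2}. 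For \eqref{eq:model_diff_1} I would instead freeze the transition density argument at $\mu^\pi_{\tM,h}$ in the hybrid term: then the density-perturbation term is a pure Markov kernel applied to $\mu^\pi_{M,h}-\mu^\pi_{\tM,h}$ and is merely non-expansive by (i), while the model-difference term becomes $c_h$. This gives the tamer recursion $d_{h+1}\le c_h+d_h$, hence $d_h\le\sum_{j<h}c_j$; plugging into $b_h\le c_h+L_T d_h$, summing, and bounding the double sum $\sum_h\sum_{j<h}c_j\le H\sum_h c_h$ produces the factor $(1+L_TH)$.

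The main obstacle — and really the crux — is the choice of which hybrid measure to insert in each direction, since the two conversions are not symmetric. Because the transition kernel itself depends on the evolving density, the natural recursion for the ``different-density to same-density'' direction \eqref{eq:model_diff_2} unavoidably carries the amplification factor $(1+L_T)$ per step, hence the exponential weight $(1+L_T)^{H-h}$; whereas for the reverse conversion \eqref{eq:model_diff_1} the more careful split that freezes the density argument at $\mu^\pi_{\tM,h}$ keeps the density-gap recursion non-expansive and yields only the polynomial factor $(1+L_TH)$. Recognizing that this asymmetric bookkeeping is both what makes the constants come out as stated is where the care lies; the remaining manipulations (triangle inequalities, kernel non-expansiveness, and a geometric-sum swap) are routine.
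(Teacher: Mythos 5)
Your proposal is correct and follows essentially the same route as the paper: the triangle-inequality bridge $|b_h-c_h|\le L_T\,\|\mu^\pi_{M,h}-\mu^\pi_{\tM,h}\|_\TV$ (the paper's Eq.~\eqref{eq:sub_step_1}), combined with the two one-step recursions on the density gap obtained by inserting hybrid population updates (the paper's Lem.~\ref{lem:density_est_err} and Cor.~\ref{coro:accum_density_err}), with the same asymmetric choice of hybrid producing the non-expansive recursion for the $(1+L_TH)$ bound and the $(1+L_T)$-amplified recursion for the $(1+L_T)^{H-h}$ bound. The constants and the geometric-sum bookkeeping match the paper's exactly.
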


In the final lemma, we show if the model predicts well in history, then the growth rate of the accumulative error on new data can be controlled by MF-MBED. We defer the proof to Appx.~\ref{appx:ED_2_Regret}.
\begin{restatable}{lemma}{LemEDtoReg}\label{lem:Eluder_Bound}
    Under the condition as Def.~\ref{def:eps_independent}, consider a fixed $f^* \in \cF$, and suppose we have a sequence $\{f_k\}_{k=1}^K \in \cF$ and $\{x_k\}_{k=1}^K \subset \cS\times\cA\times\Delta(\cS)$, s.t., for all $k\in[K]$, $\sum_{i=1}^{k-1} \matD^2(f_k,f^*)(x_i) \leq \beta$, then for any $\epsilon > 0$, we have $\sum_{k=1}^K \matD(f_k,f^*)(x_k) = O(\sqrt{\beta K \dimE_\alpha(\cM, \epsilon)} + \alpha K \epsilon)$.
\end{restatable}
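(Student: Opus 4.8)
The plan is to follow the classical eluder-dimension-to-cumulative-error argument of \citep{russo2013eluder}, adapted to the model-based, distance-measure setting and the $\alpha$-weak independence notion of Def.~\ref{def:eps_independent}. Write $w_k := \matD(f_k,f^*)(x_k)\in[0,C]$. The whole estimate reduces to a single counting statement: for every threshold $\tau\ge\epsilon$, the number of indices $k\in[K]$ with $w_k>\alpha\tau$ is at most $(\beta/\tau^2+1)\,\dimE_\alpha(\cF,\matD,\epsilon)$. Granting this, I would sort the values in decreasing order as $w_{(1)}\ge\cdots\ge w_{(K)}$, split $\sum_k w_k$ into the part with $w_{(k)}\le\alpha\epsilon$ (which contributes at most $\alpha K\epsilon$, the second term of the claim) and the part with $w_{(k)}>\alpha\epsilon$, and control the latter via the count.

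The hard part, and the heart of the proof, is the counting statement, which I would establish by a greedy binning argument. Fix $\tau\ge\epsilon$ and let $k_1<\cdots<k_N$ be the indices with $w_{k_j}>\alpha\tau$. Process $x_{k_1},\dots,x_{k_N}$ in this order, maintaining a family of bins, each of which is, as a sequence, $\alpha$-weakly-$\tau$-independent; when processing $x_{k_j}$, append it to any current bin of which it is $\alpha$-weakly-$\tau$-independent, and open a new bin only if it is $\alpha$-weakly-$\tau$-\emph{dependent} on every current bin. Each bin then remains a valid $\alpha$-weakly-$\tau$-independent sequence, so by Def.~\ref{def:Eluder_Dim} it has length at most $\dimE_\alpha(\cF,\matD,\tau)\le\dimE_\alpha(\cF,\matD,\epsilon)$, using that the dimension is nonincreasing in its scale argument (the built-in quantifier ``for some $\epsilon'\ge\epsilon$''). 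To bound the number of bins $m$, I would inspect the step that created the last bin, say while processing $x_{k_j}$: there $x_{k_j}$ was dependent on each of the other $m-1$ bins $B_\ell$, so taking the witness pair $(f_{k_j},f^*)$, which satisfies $\matD(f_{k_j},f^*)(x_{k_j})=w_{k_j}>\alpha\tau$, the definition of dependence forces $\sum_{x\in B_\ell}\matD^2(f_{k_j},f^*)(x)>\tau^2$ for each $\ell$. Summing over the disjoint bins, all of whose points precede $x_{k_j}$ in the original index order, gives $(m-1)\tau^2<\sum_{i<k_j}\matD^2(f_{k_j},f^*)(x_i)\le\beta$, hence $m\le\beta/\tau^2+1$ and $N\le m\cdot\dimE_\alpha(\cF,\matD,\epsilon)$.

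To finish, I would convert the count into a tail bound on the sorted values: for each rank $k$ with $w_{(k)}>\alpha\epsilon$, choosing $\tau=w_{(k)}/\alpha\ge\epsilon$ gives $k\le(\alpha^2\beta/w_{(k)}^2+1)\,d$ with $d:=\dimE_\alpha(\cF,\matD,\epsilon)$, i.e. $w_{(k)}\le\alpha\sqrt{\beta d/(k-d)}$ for $k>d$. Bounding the top $d$ values by $C$ and summing the rest against the harmonic-type series $\sum_{j\le K}j^{-1/2}\le 2\sqrt{K}$ yields $\sum_k w_k\le \alpha K\epsilon + Cd + 2\alpha\sqrt{\beta d K}=O(\sqrt{\beta K\,\dimE_\alpha(\cF,\matD,\epsilon)}+\alpha K\epsilon)$, where the $Cd$ term is absorbed into the first term in the regime of interest ($\beta\ge 1$, $K\ge d$), and $\dimE_\alpha(\cF,\matD,\epsilon)$ is upper bounded by the stated $\dimE_\alpha(\cM,\epsilon)$ after specializing $\cF=\cM_h$ and taking the max over $h$ as in Def.~\ref{def:Eluder_Dim}. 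I expect the only delicate points to be the $\alpha$-bookkeeping in passing between the independence threshold $\alpha\tau$ and the per-bin squared-error budget $\tau^2$, and confirming that the disjoint per-bin squared errors are all summed over points preceding $x_{k_j}$ so that the hypothesis bound $\beta$ applies; both are handled cleanly by the inherited ordering $k_1<\cdots<k_N$.
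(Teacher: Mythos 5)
Your proof is correct and follows essentially the same route as the paper's: the identical counting lemma (at most $(\beta/\tau^2+1)\,\dimE_\alpha(\cF,\matD,\epsilon)$ indices with error above threshold $\alpha\tau$, using that a $\tau$-independent sequence with $\tau\ge\epsilon$ still counts toward $\dimE_\alpha(\cF,\matD,\epsilon)$), established by the same greedy disjoint-bin argument with the witness pair $(f_{k_j},f^*)$, followed by sorting the errors and summing the resulting harmonic-type tail plus the $\alpha K\epsilon$ remainder. The only (cosmetic) difference is that your $\alpha$-bookkeeping is slightly more careful than the paper's own derivation --- you retain the factor in $2\alpha\sqrt{\beta dK}$ before absorbing it into the $O(\cdot)$ --- which does not change the stated conclusion.
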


\paragraph{Step 2: Relating Learning Objectives with Model Prediction Error}
First of all, we provide the simulation lemma for Mean-Field Control setting.
\begin{restatable}{lemma}{LemValDecomp}[Value Difference Lemma for MFC]\label{lem:value_decomposition}
    Given an arbitrary model $M$ with transition function $\mP_T$, and an arbitrary policy $\pi$, under Assump.~\ref{assump:lipschitz}, we have:
    \begin{align*}
        &|J_{M^*}(\pi) - J_M(\pi)| \leq \EE_{\pi,M^*}[\sum_{h=1}^H (1+L_rH)\\
        &\cdot \|\mP_{T^*,h}(\cdot|s_h,a_h,\mu^\pi_{{M^*},h})  - \mP_{T,h}(\cdot|s_h,a_h,\mu^\pi_{M,h})\|_\TV].
    \end{align*}
\end{restatable}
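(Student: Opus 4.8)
The plan is to prove this by a performance-difference (simulation-lemma) telescoping rolled out under the true model, together with a density-propagation bound that is the genuinely mean-field part of the argument. First I would work with the value functions $V^\pi_{M^*,h}(\cdot;\vecmu^\pi_{M^*})$ and $V^\pi_{M,h}(\cdot;\vecmu^\pi_M)$, each evaluated under its own induced densities, and write $J_{M^*}(\pi) - J_M(\pi) = \EE_{s_1\sim\mu_1}[V^\pi_{M^*,1}(s_1;\vecmu^\pi_{M^*}) - V^\pi_{M,1}(s_1;\vecmu^\pi_M)]$. Letting $g_h(s_h)$ denote this per-state value gap, I would compare the two Bellman recursions and insert the hybrid term $\EE_{s'\sim\mP_{T^*,h}(\cdot|s_h,a_h,\mu^\pi_{M^*,h})}[V^\pi_{M,h+1}(s')]$ (add and subtract) so that the $g_{h+1}$ contribution telescopes along the $\pi,M^*$ rollout, yielding
\begin{align*}
    J_{M^*}(\pi) - J_M(\pi) = \sum_{h=1}^H \EE_{\pi,M^*}\big[ &\big(r_h(s_h,a_h,\mu^\pi_{M^*,h}) - r_h(s_h,a_h,\mu^\pi_{M,h})\big) \\
    &+ (\mP_{T^*,h} - \mP_{T,h})V^\pi_{M,h+1}\big],
\end{align*}
where the last term abbreviates $\EE_{s'\sim\mP_{T^*,h}(\cdot|s_h,a_h,\mu^\pi_{M^*,h})}[V^\pi_{M,h+1}(s')] - \EE_{s'\sim\mP_{T,h}(\cdot|s_h,a_h,\mu^\pi_{M,h})}[V^\pi_{M,h+1}(s')]$.

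Second, I would bound the two families of terms. Since rewards lie in $[0,1/H]$, each $V^\pi_{M,h+1}$ has range at most $1$, so the transition term is controlled pointwise by the range times the TV distance, hence by $e_h := \EE_{\pi,M^*}[\|\mP_{T^*,h}(\cdot|s_h,a_h,\mu^\pi_{M^*,h}) - \mP_{T,h}(\cdot|s_h,a_h,\mu^\pi_{M,h})\|_\TV]$ after taking expectations. The reward terms share the same $(s_h,a_h)$ and differ only in the conditioning density, so the Lipschitz assumption \eqref{eq:Lip_rew_l1} gives the pointwise bound $L_r\|\mu^\pi_{M^*,h} - \mu^\pi_{M,h}\|_\TV$. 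It then remains to control the density gap $\|\mu^\pi_{M^*,h} - \mu^\pi_{M,h}\|_\TV$ in terms of the quantities $e_{h'}$.

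This density-propagation bound is the main obstacle and the distinctly mean-field step. Expanding both $\mu^\pi_{M^*,h+1} = \Gamma^\pi_{M^*,h}(\mu^\pi_{M^*,h})$ and $\mu^\pi_{M,h+1} = \Gamma^\pi_{M,h}(\mu^\pi_{M,h})$ over $(s_h,a_h)$, I would add and subtract $\sum_{s,a}\mu^\pi_{M^*,h}(s)\pi(a|s)\mP_{T,h}(\cdot|s,a,\mu^\pi_{M,h})$ to split the one-step gap into a transition-mismatch piece equal to $e_h$ and a density-mismatch piece $\sum_{s,a}(\mu^\pi_{M^*,h}(s) - \mu^\pi_{M,h}(s))\pi(a|s)\mP_{T,h}(\cdot|s,a,\mu^\pi_{M,h})$; the latter has TV norm at most $\|\mu^\pi_{M^*,h} - \mu^\pi_{M,h}\|_\TV$ because applying a Markov kernel is non-expansive in total variation. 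This gives the recursion $\|\mu^\pi_{M^*,h+1} - \mu^\pi_{M,h+1}\|_\TV \le \|\mu^\pi_{M^*,h} - \mu^\pi_{M,h}\|_\TV + e_h$, which unrolls from the shared initial condition $\mu^\pi_{M^*,1} = \mu^\pi_{M,1} = \mu_1$ to $\|\mu^\pi_{M^*,h} - \mu^\pi_{M,h}\|_\TV \le \sum_{h'=1}^{h-1} e_{h'} \le \sum_{h'=1}^H e_{h'}$. Combining, the transition terms sum to $\sum_{h=1}^H e_h$ and the reward terms to at most $\sum_{h=1}^H L_r \sum_{h'=1}^H e_{h'} = L_r H \sum_{h=1}^H e_h$, so that $|J_{M^*}(\pi) - J_M(\pi)| \le (1 + L_r H)\sum_{h=1}^H e_h$, which is exactly the claimed inequality.
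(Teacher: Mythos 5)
Your proposal is correct and follows essentially the same route as the paper: a Bellman/simulation-lemma telescoping under the $(\pi,M^*)$ rollout, the reward terms handled by the Lipschitz assumption, the transition terms by the TV distance against the bounded value function, and the density gap controlled by the same add-and-subtract recursion (the paper's Lem.~\ref{lem:density_est_err} and Cor.~\ref{coro:accum_density_err}) unrolled from the shared initial distribution. The only cosmetic difference is that the paper first proves a more general two-policy, two-model version (Eq.~\eqref{eq:value_diff_2}) and then specializes, whereas you argue the stated case directly.
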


By Thm.~\ref{thm:MLE_Gaurantee} and Eq.~\eqref{eq:model_diff_1} in Thm.~\ref{thm:model_diff_full}, with high probability, all the models in $\hat{\cM}_k$ will agrees with each other on the dataset $D^k$ conditioning on the same density $\mu^{\pi^1}_{M^*},...,\mu^{\pi^k}_{M^*}$. 
On good concentration events, the condition for Lem.~\ref{lem:Eluder_Bound} is satisfied, and as a result of Thm.~\ref{lem:value_decomposition} and Eq.~\eqref{eq:model_diff_2}, we can upper bound the accumulative sub-optimal gap $\sum_{k=1}^K \cE_{\Opt}(\pi^{k+1})$. With the regret to PAC convertion process in Alg.~\ref{alg:Regret2PAC}, we can establish the sample complexity guarantee in Thm.~\ref{thm:short_main_results_MFC_MFG}.

For MFG, we first provide an upper bound for $\cE_\NE(\pi^{k+1})$. On the event of $M^* \in \hat\cM^{k+1}$, we have:
\begin{align*}
    \cE_\NE(\pi^{k+1}) & = \max_\pi \Delta_{M^*}(\pi, \pi^{k+1}) \leq \Delta_{\tM^{k+1}}(\tpi^{k+1}, \pi^{k+1}) \\
    \leq& \Delta_{\tM^{k+1}}(\tpi^{k+1}, \pi^{k+1}) - \Delta_{M^{k+1}}(\tpi^{k+1}, \pi^{k+1}) \\
    \leq & |\Delta_{\tM^{k+1}}(\tpi^{k+1}, \pi^{k+1}) - \Delta_{M^*}(\tpi^{k+1}, \pi^{k+1})| \\
    & + |\Delta_{M^*}(\tpi^{k+1}, \pi^{k+1}) - \Delta_{M^{k+1}}(\tpi^{k+1}, \pi^{k+1})|.
\end{align*}

where the first inequality is because of optimism, and the second one is because $\pi^{k+1}$ is the equilibirum of $M^{k+1}$. 
Next, we provide a key lemma to upper bound the RHS.
\begin{restatable}{lemma}{LemExploitDiff}[Value Difference Lemma for MFG]\label{lem:exploitability_diff}
    Given two arbitrary model $M$ and $\tM$, and two policies $\pi$ and $\tpi$, we have:
    \begin{align*}
        &|\Delta_M(\tilde\pi,\pi) - \Delta_\tM(\tilde\pi,\pi)| \\
        \leq & \EE_{\tpi,M|\vecmu^\pi_M}[\sum_{h=1}^H \|\mP_{T,h}(\cdot|s_h,a_h,\mu^\pi_{M,h}) \\
        &\qquad\qquad\qquad - \mP_{\tT,h}(\cdot|s_h,a_h,\mu^\pi_{\tM,h})\|_\TV]\\
        & + (2L_rH + 1) \EE_{\pi,M}[\sum_{h=1}^H  \|\mP_{T,h}(\cdot|s_h,a_h,\mu^\pi_{M,h}) \\
        &\qquad\qquad\qquad - \mP_{\tT,h}(\cdot|s_h,a_h,\mu^\pi_{\tM,h})\|_\TV].\numberthis\label{eq:exploitability_diff}
    \end{align*}
\end{restatable}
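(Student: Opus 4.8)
The plan is to establish Lemma~\ref{lem:exploitability_diff} by expanding the definition $\Delta_M(\tpi,\pi) = J_M(\tpi;\vecmu^\pi_M) - J_M(\pi;\vecmu^\pi_M)$ and splitting the target quantity into two pieces, one controlling the difference in $J_M(\tpi;\vecmu^\pi_M)$ versus $J_{\tM}(\tpi;\vecmu^\pi_{\tM})$ and one controlling $J_M(\pi;\vecmu^\pi_M)$ versus $J_{\tM}(\pi;\vecmu^\pi_{\tM})$. Writing
\begin{align*}
    |\Delta_M(\tpi,\pi) - \Delta_{\tM}(\tpi,\pi)| \leq{} & |J_M(\tpi;\vecmu^\pi_M) - J_{\tM}(\tpi;\vecmu^\pi_{\tM})| \\
    &+ |J_M(\pi;\vecmu^\pi_M) - J_{\tM}(\pi;\vecmu^\pi_{\tM})|,
\end{align*}
I would bound each term separately. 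The second term is the ``on-policy'' comparison where $\pi$ is both the deployed and the evaluated policy, so it is exactly of the form handled by the Value Difference Lemma for MFC (Lem.~\ref{lem:value_decomposition}) applied with the roles of $M^*$ and $M$ replaced by $M$ and $\tM$; this yields a bound of $(1+L_rH)\EE_{\pi,M}[\sum_h \|\mP_{T,h}(\cdot|s_h,a_h,\mu^\pi_{M,h}) - \mP_{\tT,h}(\cdot|s_h,a_h,\mu^\pi_{\tM,h})\|_\TV]$, contributing the bulk of the second expectation term in the statement.

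**First I would** handle the harder first term, which is genuinely new because the evaluation policy $\tpi$ differs from the policy $\pi$ that \emph{generates the density}. The key observation is that both $J_M(\tpi;\vecmu^\pi_M)$ and $J_{\tM}(\tpi;\vecmu^\pi_{\tM})$ condition on a density determined by $\pi$ (not $\tpi$), but the \emph{trajectories} themselves are rolled out under $\tpi$. I would therefore perform a telescoping/simulation-lemma argument along trajectories sampled by $\tpi$, which is why the first expectation in the statement is taken under $\EE_{\tpi,M|\vecmu^\pi_M}$ rather than $\EE_{\pi,M}$. Decomposing the value difference step by step, each term involves (i) the reward difference at step $h$ caused by the density mismatch $\mu^\pi_{M,h}$ versus $\mu^\pi_{\tM,h}$, controlled by the reward Lipschitz constant $L_r$ times $\|\mu^\pi_{M,h} - \mu^\pi_{\tM,h}\|_\TV$, and (ii) the transition prediction difference $\|\mP_{T,h}(\cdot|s_h,a_h,\mu^\pi_{M,h}) - \mP_{\tT,h}(\cdot|s_h,a_h,\mu^\pi_{\tM,h})\|_\TV$ propagated forward. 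The rollout-under-$\tpi$ expectation produces the leading first term of the bound with its distinctive $\EE_{\tpi,M|\vecmu^\pi_M}$ subscript.

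**The main obstacle will be** correctly bookkeeping the density mismatch $\|\mu^\pi_{M,h} - \mu^\pi_{\tM,h}\|_\TV$ that appears in the reward-difference terms. Crucially, these densities are both induced by the \emph{same} policy $\pi$ (since we condition on $\vecmu^\pi$) under the two different models, so this mismatch is an \emph{on-policy} quantity and can be bounded by the accumulated transition-prediction differences under $\EE_{\pi,M}$ — exactly the $\EE_{\pi,M}[\sum_h \|\mP_{T,h}(\cdot|\cdot,\cdot,\mu^\pi_{M,h}) - \mP_{\tT,h}(\cdot|\cdot,\cdot,\mu^\pi_{\tM,h})\|_\TV]$ expression. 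This is where the factor $(2L_rH+1)$ originates: one $L_rH$ factor arises from the Value Difference Lemma applied to the second term, and a second $L_rH$ factor arises from summing the reward-Lipschitz contributions across the $H$ steps when propagating the density mismatch within the first term, with the remaining $+1$ absorbing the direct transition contribution. The delicacy is ensuring that the density-mismatch terms generated while expanding the $\tpi$-rollout (term one) get re-expressed as on-policy $\pi$-expectations so they collapse into the $\EE_{\pi,M}$ term rather than leaving a stray $\EE_{\tpi}$ quantity with density dependence. I would carefully verify this via the same telescoping argument used in the proof of Lem.~\ref{lem:value_decomposition}, reusing Assump.~\ref{assump:lipschitz} (the reward Lipschitz bound Eq.~\eqref{eq:Lip_rew_l1}) at each step and collecting coefficients to match the stated constants.
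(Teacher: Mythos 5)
Your proposal is correct and follows essentially the same route as the paper's proof: the triangle-inequality split into $|J_M(\tpi;\vecmu^\pi_M)-J_{\tM}(\tpi;\vecmu^\pi_{\tM})|$ and $|J_M(\pi;\vecmu^\pi_M)-J_{\tM}(\pi;\vecmu^\pi_{\tM})|$, the telescoping simulation argument rolled out under $\tpi$ (resp.\ $\pi$) while conditioning on the $\pi$-induced densities, and the conversion of the on-policy density mismatch $\sum_h\|\mu^\pi_{M,h}-\mu^\pi_{\tM,h}\|_\TV$ into $H\,\EE_{\pi,M}[\sum_h\|\cdot\|_\TV]$ via the density-estimation bound, yielding exactly the $(2L_rH+1)$ coefficient. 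The only cosmetic difference is that the paper keeps both $L_r\sum_h\|\mu^\pi_{M,h}-\mu^\pi_{\tM,h}\|_\TV$ terms symbolic and collapses them jointly at the end, whereas you fold one of them into the MFC value-difference lemma first; the bookkeeping is equivalent.
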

As we can see, to control the exploitability, we require the model can predict well on the data distribution induced by both $\pi^{k+1}$ and $\tpi^{k+1}$ conditioning on $\vecmu^{\pi^{k+1}}_{M^*}$, which motivates our formulation of Def.~\ref{def:collection_process}.
By combining Lem.~\ref{lem:exploitability_diff} and theorems in the first part, we finish the proof.

\paragraph{A Different Model Difference Conversion under Assump.~\ref{assump:contraction}}
In the following theorem, we provide a model difference conversion conditioning on different densities under Assump.~\ref{assump:contraction} to replace Eq.~\eqref{eq:model_diff_2} in Thm.~\ref{thm:model_diff_full}, which is the key observation to avoid exponential dependence.
\begin{restatable}{theorem}{ThmModelDiffPlus}\label{thm:model_diff_contraction}
    Given two arbitrary model $M = (\cS,\cA,H,\mP_{T},\mP_r)$ and $\tM=(\cS,\cA,H,\mP_{\tT},\mP_r)$, and arbitrary policy $\pi$, under Assump.~\ref{assump:lipschitz} and Assump.~\ref{assump:contraction}, we have:
    \begin{align*}
        \EE_{\pi,M}[\sum_{h=1}^{H} & \|\mP_{T,h}(\cdot|s_h,a_h,\mu^{\pi}_{M,h}) - \mP_{\tT,h}(\cdot|s_h,a_h,\mu^{\pi}_{\tM,h}) \|_\TV] \\
        \leq & (1+\frac{L_T}{1-L_\Gamma})\EE_{\pi,M}[\sum_{h=1}^{H} \|\mP_{T,h}(\cdot|s_h,a_h,\mu^{\pi}_{M,h}) \\
        &\qquad\qquad\qquad\qquad - \mP_{\tT,h}(\cdot|s_h,a_h,\mu^{\pi}_{M,h}) \|_\TV].
    \end{align*}
\end{restatable}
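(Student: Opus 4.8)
The plan is to pass from the \emph{mismatched-density} difference on the left-hand side to the \emph{shared-density} difference on the right-hand side, paying only for the discrepancy between $\mu^\pi_{M,h}$ and $\mu^\pi_{\tM,h}$, and then to absorb the accumulated density mismatch into a geometric series with ratio $L_\Gamma<1$. Throughout I write $e_h := \EE_{\pi,M}[\|\mP_{T,h}(\cdot|s_h,a_h,\mu^\pi_{M,h}) - \mP_{\tT,h}(\cdot|s_h,a_h,\mu^\pi_{M,h})\|_\TV]$ for the shared-density per-step error (so the right-hand side is $(1+\tfrac{L_T}{1-L_\Gamma})\sum_{h=1}^H e_h$) and $d_h := \|\mu^\pi_{M,h} - \mu^\pi_{\tM,h}\|_\TV$ for the density mismatch. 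First I would apply the triangle inequality pointwise in $(s_h,a_h)$, inserting the intermediate term $\mP_{\tT,h}(\cdot|s_h,a_h,\mu^\pi_{M,h})$, and then invoke the Lipschitz bound~\eqref{eq:Lip_model_H} on $\mP_{\tT}$ to control the residual by $L_T\,d_h$. Taking $\EE_{\pi,M}$ and summing over $h$ gives $\text{LHS}\le \sum_{h=1}^H e_h + L_T\sum_{h=1}^H d_h$; note that $d_h$ does not depend on the trajectory, so the expectation acts trivially on that term.

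The crux is a one-step recursion for $d_h$. Since both densities start from $\mu_1$ we have $d_1=0$, and using $\mu^\pi_{M,h+1}=\Gamma^\pi_{M,h}(\mu^\pi_{M,h})$ and $\mu^\pi_{\tM,h+1}=\Gamma^\pi_{\tM,h}(\mu^\pi_{\tM,h})$ I would insert the intermediate operator value $\Gamma^\pi_{\tM,h}(\mu^\pi_{M,h})$ and split $d_{h+1}$ into two pieces. The first piece, $\|\Gamma^\pi_{M,h}(\mu^\pi_{M,h}) - \Gamma^\pi_{\tM,h}(\mu^\pi_{M,h})\|_\TV$, unfolds via the explicit form of $\Gamma$ into a mixture over $(s_h,a_h)\sim\mu^\pi_{M,h}\times\pi$ of $\|\mP_{T,h}(\cdot|\cdot,\cdot,\mu^\pi_{M,h}) - \mP_{\tT,h}(\cdot|\cdot,\cdot,\mu^\pi_{M,h})\|_\TV$, which by convexity of the norm is bounded by $e_h$. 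The second piece, $\|\Gamma^\pi_{\tM,h}(\mu^\pi_{M,h}) - \Gamma^\pi_{\tM,h}(\mu^\pi_{\tM,h})\|_\TV$, is bounded by $L_\Gamma d_h$ via Assumption~\ref{assump:contraction} applied to $\tM$. Together these give $d_{h+1}\le e_h + L_\Gamma d_h$.

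Finally, unrolling from $d_1=0$ yields $d_h \le \sum_{h'=1}^{h-1} L_\Gamma^{\,h-1-h'} e_{h'}$, and summing over $h$ while exchanging the order of summation bounds $\sum_{h=1}^H d_h$ by $\frac{1}{1-L_\Gamma}\sum_{h=1}^H e_h$, since each $e_{h'}$ is multiplied by a truncated geometric sum of size at most $(1-L_\Gamma)^{-1}$. Substituting into the first-step estimate gives $\text{LHS}\le \sum_h e_h + \frac{L_T}{1-L_\Gamma}\sum_h e_h = (1+\frac{L_T}{1-L_\Gamma})\sum_h e_h$, which is exactly the claim. I expect the main obstacle to be the second step, namely recognizing that the cross-operator term evaluated at a common input density collapses precisely to the shared-density per-step error $e_h$ while isolating a clean $L_\Gamma d_h$ contraction term; everything else is the triangle inequality and a geometric sum. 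The payoff is structural: relative to Eq.~\eqref{eq:model_diff_2}, the contraction ratio $L_\Gamma<1$ replaces the growth factor $(1+L_T)$ in the recursion, converting the exponential accumulation $(1+L_T)^{H-h}$ into the bounded factor $(1-L_\Gamma)^{-1}$.
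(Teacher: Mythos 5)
Your proof is correct and follows essentially the same route as the paper's: the triangle-inequality/Lipschitz step is the paper's Eq.~\eqref{eq:sub_step_1}, and your one-step recursion $d_{h+1}\le e_h + L_\Gamma d_h$ (via inserting $\Gamma^\pi_{\tM,h}(\mu^\pi_{M,h})$ and applying the contraction assumption) is exactly Lem.~\ref{lem:density_est_err_2}, whose unrolled geometric sum gives Corollary~\ref{coro:accum_density_err}. No gaps.
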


\section{DISCUSSION AND OPEN PROBLEMS}
In this paper, we study the statistical efficiency of function approximation in MFRL. We propose the notion of MF-MBED and an MLE based algorithm, which can guarantee to efficiently solve MFC and MFG given function classes satisfying basic assumptions including realizablility and Lipschitz continuity.
Under additional structural assumptions, the exponential dependence on Lipschitz factors in sample complexity bounds could be avoided.
In the following, we discuss some potentially interesting open problems.

\paragraph{Tighter Sample Complexity Upper Bounds and Lower Bounds}
Although in this paper, we show that both MFC and MFG can be solved under the same MLE-based model learning framework and establish the same sample complexity depending on MF-MBED, it does not implies MFC and MFG fundamentally have the same sample efficiency.
It would be an interesting direction to investigate other complexity measure which may provide tighter complexity upper bounds than our MF-MBED.
Besides the sample complexity upper bound, another interesting question would be the sample complexity lower bound for MFC and MFG, and whether there exists separation between MFC and MFG in terms of sample efficiency.

\paragraph{Computational Efficiency}
Since we focus on the fundamental sample efficiency, we ignore complicated computation processes and abstract them as computational oracles.
For MFC setting, similar maximization oracles has been treated as mild assumptions .
For MFG setting, although previous literature \citep{guo2019learning,perolat2021scaling} has provided concrete implementations given additional structural assumptions like contractivity and monotonicity, whether NE can be solved efficiently under more general setting is an open problem.
Therefore, another important direction is to combine our results with optimization techniques to design computationally efficient algorithms.

\paragraph{Model-Free Methods in Function Approximation Setting}
In this paper, we consider the model-based methods. Given the popularity of model-free methods in single-agent setting, it remains an open problem what the sample efficiency of model-free methods with general function approximations are.
Even though, we prefer model-based function approximation in mean-field setting because model-free methods suffer from some additional challenges seems intractable. 
For pure value-based methods, without explicit model estimation, one may not infer the density, and therefore may not estimate the true value function accurately.
Policy-based methods would be more promising and has been applied in tabular setting \citep{guo2019learning, yardim2022policy}, where by introducing the policy, one just need to focus on the estimation of values conditioning on policy and density estimation will be unimportant. 
But when generalizing to function approximation setting, the value function class should be powerful enough to approximate the value functions regarding all the policies occurred in the learning process, which might be very strong assumptions.
We leave the investigation of this direction to the future.

\subsubsection*{Acknowledgements}
The work is supported by ETH research grant, NCCR Automation, Swiss National Science Foundation (SNSF)  Project Funding No. 200021-207343. and SNSF Starting Grant.
The authors would like to thank Andreas Krause for valuable discussion and thank Lars Lorch for valuable feedbacks on paper writing.

\bibliographystyle{apalike}
\bibliography{references}

\begin{thebibliography}{}

\bibitem[Agarwal et~al., 2020]{agarwal2020flambe}
Agarwal, A., Kakade, S., Krishnamurthy, A., and Sun, W. (2020).
\newblock Flambe: Structural complexity and representation learning of low rank
  mdps.
\newblock {\em Advances in neural information processing systems},
  33:20095--20107.

\bibitem[Anahtarci et~al., 2023]{anahtarci2023q}
Anahtarci, B., Kariksiz, C.~D., and Saldi, N. (2023).
\newblock Q-learning in regularized mean-field games.
\newblock {\em Dynamic Games and Applications}, 13(1):89--117.

\bibitem[Angiuli et~al., 2021]{angiuli2021reinforcement}
Angiuli, A., Fouque, J.-P., and Lauriere, M. (2021).
\newblock Reinforcement learning for mean field games, with applications to
  economics.
\newblock {\em arXiv preprint arXiv:2106.13755}.

\bibitem[Angiuli et~al., 2022]{angiuli2022unified}
Angiuli, A., Fouque, J.-P., and Lauri{\`e}re, M. (2022).
\newblock Unified reinforcement q-learning for mean field game and control
  problems.
\newblock {\em Mathematics of Control, Signals, and Systems}, 34(2):217--271.

\bibitem[Auer et~al., 2008]{auer2008near}
Auer, P., Jaksch, T., and Ortner, R. (2008).
\newblock Near-optimal regret bounds for reinforcement learning.
\newblock {\em Advances in neural information processing systems}, 21.

\bibitem[Ayoub et~al., 2020]{ayoub2020model}
Ayoub, A., Jia, Z., Szepesvari, C., Wang, M., and Yang, L. (2020).
\newblock Model-based reinforcement learning with value-targeted regression.
\newblock In {\em International Conference on Machine Learning}, pages
  463--474. PMLR.

\bibitem[Azar et~al., 2017]{azar2017minimax}
Azar, M.~G., Osband, I., and Munos, R. (2017).
\newblock Minimax regret bounds for reinforcement learning.
\newblock In {\em International Conference on Machine Learning}, pages
  263--272. PMLR.

\bibitem[Bai et~al., 2020]{bai2020near}
Bai, Y., Jin, C., and Yu, T. (2020).
\newblock Near-optimal reinforcement learning with self-play.
\newblock {\em Advances in neural information processing systems},
  33:2159--2170.

\bibitem[Bensoussan et~al., 2013]{bensoussan2013mean}
Bensoussan, A., Frehse, J., Yam, P., et~al. (2013).
\newblock {\em Mean field games and mean field type control theory}, volume
  101.
\newblock Springer.

\bibitem[Cardaliaguet and Lehalle, 2018]{cardaliaguet2018mean}
Cardaliaguet, P. and Lehalle, C.-A. (2018).
\newblock Mean field game of controls and an application to trade crowding.
\newblock {\em Mathematics and Financial Economics}, 12:335--363.

\bibitem[Carmona et~al., 2023]{carmona2019model}
Carmona, R., Lauri{\`e}re, M., and Tan, Z. (2023).
\newblock Model-free mean-field reinforcement learning: mean-field mdp and
  mean-field q-learning.
\newblock {\em The Annals of Applied Probability}.

\bibitem[Chen et~al., 2022a]{chen2022general}
Chen, Z., Li, C.~J., Yuan, H., Gu, Q., and Jordan, M. (2022a).
\newblock A general framework for sample-efficient function approximation in
  reinforcement learning.
\newblock In {\em The Eleventh International Conference on Learning
  Representations}.

\bibitem[Chen et~al., 2022b]{chen2022almost}
Chen, Z., Zhou, D., and Gu, Q. (2022b).
\newblock Almost optimal algorithms for two-player zero-sum linear mixture
  markov games.
\newblock In {\em International Conference on Algorithmic Learning Theory},
  pages 227--261. PMLR.

\bibitem[Cousin et~al., 2011]{cousin2011mean}
Cousin, A., Cr{\'e}pey, S., Gu{\'e}ant, O., Hobson, D., Jeanblanc, M., Lasry,
  J.-M., Laurent, J.-P., Lions, P.-L., Tankov, P., Gu{\'e}ant, O., et~al.
  (2011).
\newblock Mean field games and applications.
\newblock {\em Paris-Princeton lectures on mathematical finance 2010}, pages
  205--266.

\bibitem[Cui and Koeppl, 2021]{cui2021approximately}
Cui, K. and Koeppl, H. (2021).
\newblock Approximately solving mean field games via entropy-regularized deep
  reinforcement learning.
\newblock In {\em International Conference on Artificial Intelligence and
  Statistics}, pages 1909--1917. PMLR.

\bibitem[Cui et~al., 2023]{cui2023breaking}
Cui, Q., Zhang, K., and Du, S. (2023).
\newblock Breaking the curse of multiagents in a large state space: Rl in
  markov games with independent linear function approximation.
\newblock In {\em The Thirty Sixth Annual Conference on Learning Theory}, pages
  2651--2652. PMLR.

\bibitem[De~Paola et~al., 2019]{de2019mean}
De~Paola, A., Trovato, V., Angeli, D., and Strbac, G. (2019).
\newblock A mean field game approach for distributed control of thermostatic
  loads acting in simultaneous energy-frequency response markets.
\newblock {\em IEEE Transactions on Smart Grid}, 10(6):5987--5999.

\bibitem[Du et~al., 2021]{du2021bilinear}
Du, S., Kakade, S., Lee, J., Lovett, S., Mahajan, G., Sun, W., and Wang, R.
  (2021).
\newblock Bilinear classes: A structural framework for provable generalization
  in rl.
\newblock In {\em International Conference on Machine Learning}, pages
  2826--2836. PMLR.

\bibitem[Elie et~al., 2020]{elie2020convergence}
Elie, R., Perolat, J., Lauri{\`e}re, M., Geist, M., and Pietquin, O. (2020).
\newblock On the convergence of model free learning in mean field games.
\newblock In {\em Proceedings of the AAAI Conference on Artificial
  Intelligence}, volume~34, pages 7143--7150.

\bibitem[Foster et~al., 2021]{foster2021statistical}
Foster, D.~J., Kakade, S.~M., Qian, J., and Rakhlin, A. (2021).
\newblock The statistical complexity of interactive decision making.
\newblock {\em arXiv preprint arXiv:2112.13487}.

\bibitem[Geist et~al., 2022]{geist2021concave}
Geist, M., P{\'e}rolat, J., Lauri{\`e}re, M., Elie, R., Perrin, S., Bachem, O.,
  Munos, R., and Pietquin, O. (2022).
\newblock Concave utility reinforcement learning: The mean-field game
  viewpoint.
\newblock In {\em Proceedings of the 21st International Conference on
  Autonomous Agents and Multiagent Systems}, pages 489--497.

\bibitem[Gu et~al., 2021]{gu2021mean}
Gu, H., Guo, X., Wei, X., and Xu, R. (2021).
\newblock Mean-field multi-agent reinforcement learning: A decentralized
  network approach.
\newblock {\em arXiv preprint arXiv:2108.02731}.

\bibitem[Guo et~al., 2019]{guo2019learning}
Guo, X., Hu, A., Xu, R., and Zhang, J. (2019).
\newblock Learning mean-field games.
\newblock {\em Advances in Neural Information Processing Systems}, 32.

\bibitem[Guo et~al., 2022]{guo2022entropy}
Guo, X., Xu, R., and Zariphopoulou, T. (2022).
\newblock Entropy regularization for mean field games with learning.
\newblock {\em Mathematics of Operations Research}.

\bibitem[Huang et~al., 2021]{huang2021towards}
Huang, B., Lee, J.~D., Wang, Z., and Yang, Z. (2021).
\newblock Towards general function approximation in zero-sum markov games.
\newblock In {\em International Conference on Learning Representations}.

\bibitem[Huang et~al., 2022]{huang2022towards}
Huang, J., Chen, J., Zhao, L., Qin, T., Jiang, N., and Liu, T.-Y. (2022).
\newblock Towards deployment-efficient reinforcement learning: Lower bound and
  optimality.
\newblock In {\em International Conference on Learning Representations}.

\bibitem[Huang et~al., 2006]{huang2006large}
Huang, M., Malham{\'e}, R.~P., and Caines, P.~E. (2006).
\newblock Large population stochastic dynamic games: closed-loop mckean-vlasov
  systems and the nash certainty equivalence principle.

\bibitem[Ismail et~al., 2018]{ismail2018survey}
Ismail, Z.~H., Sariff, N., and Hurtado, E. (2018).
\newblock A survey and analysis of cooperative multi-agent robot systems:
  challenges and directions.
\newblock {\em Applications of Mobile Robots}, pages 8--14.

\bibitem[Jiang et~al., 2017]{jiang2017contextual}
Jiang, N., Krishnamurthy, A., Agarwal, A., Langford, J., and Schapire, R.~E.
  (2017).
\newblock Contextual decision processes with low bellman rank are
  pac-learnable.
\newblock In {\em International Conference on Machine Learning}, pages
  1704--1713. PMLR.

\bibitem[Jin et~al., 2018]{jin2018q}
Jin, C., Allen-Zhu, Z., Bubeck, S., and Jordan, M.~I. (2018).
\newblock Is q-learning provably efficient?
\newblock {\em Advances in neural information processing systems}, 31.

\bibitem[Jin et~al., 2021]{jin2021bellman}
Jin, C., Liu, Q., and Miryoosefi, S. (2021).
\newblock Bellman eluder dimension: New rich classes of rl problems, and
  sample-efficient algorithms.
\newblock {\em Advances in neural information processing systems},
  34:13406--13418.

\bibitem[Jin et~al., 2023]{jin2021v}
Jin, C., Liu, Q., Wang, Y., and Yu, T. (2023).
\newblock V-learning—a simple, efficient, decentralized algorithm for
  multiagent reinforcement learning.
\newblock {\em Mathematics of Operations Research}.

\bibitem[Jin et~al., 2022]{jin2022power}
Jin, C., Liu, Q., and Yu, T. (2022).
\newblock The power of exploiter: Provable multi-agent rl in large state
  spaces.
\newblock In {\em International Conference on Machine Learning}, pages
  10251--10279. PMLR.

\bibitem[Jin et~al., 2020]{jin2020provably}
Jin, C., Yang, Z., Wang, Z., and Jordan, M.~I. (2020).
\newblock Provably efficient reinforcement learning with linear function
  approximation.
\newblock In {\em Conference on Learning Theory}, pages 2137--2143. PMLR.

\bibitem[Lasry and Lions, 2007]{lasry2007mean}
Lasry, J.-M. and Lions, P.-L. (2007).
\newblock Mean field games.
\newblock {\em Japanese journal of mathematics}, 2(1):229--260.

\bibitem[Lee et~al., 2007]{lee2007multiagent}
Lee, J.~W., Park, J., Jangmin, O., Lee, J., and Hong, E. (2007).
\newblock A multiagent approach to $ q $-learning for daily stock trading.
\newblock {\em IEEE Transactions on Systems, Man, and Cybernetics-Part A:
  Systems and Humans}, 37(6):864--877.

\bibitem[Levy et~al., 2022]{levy2022eluderbased}
Levy, O., Cassel, A., Cohen, A., and Mansour, Y. (2022).
\newblock Eluder-based regret for stochastic contextual mdps.

\bibitem[Liu et~al., 2022a]{liu2022partially}
Liu, Q., Chung, A., Szepesv{\'a}ri, C., and Jin, C. (2022a).
\newblock When is partially observable reinforcement learning not scary?
\newblock In {\em Conference on Learning Theory}, pages 5175--5220. PMLR.

\bibitem[Liu et~al., 2022b]{liu2022sample}
Liu, Q., Szepesv{\'a}ri, C., and Jin, C. (2022b).
\newblock Sample-efficient reinforcement learning of partially observable
  markov games.
\newblock {\em Advances in Neural Information Processing Systems},
  35:18296--18308.

\bibitem[Mahajan, 2021]{mahajan2021reinforcement}
Mahajan, A. (2021).
\newblock Reinforcement learning in stationary mean-field games.

\bibitem[Modi et~al., 2024]{modi2021model}
Modi, A., Chen, J., Krishnamurthy, A., Jiang, N., and Agarwal, A. (2024).
\newblock Model-free representation learning and exploration in low-rank mdps.
\newblock {\em Journal of Machine Learning Research}, 25(6):1--76.

\bibitem[Modi et~al., 2020]{modi2020sample}
Modi, A., Jiang, N., Tewari, A., and Singh, S. (2020).
\newblock Sample complexity of reinforcement learning using linearly combined
  model ensembles.
\newblock In {\em International Conference on Artificial Intelligence and
  Statistics}, pages 2010--2020. PMLR.

\bibitem[Nesterov et~al., 2018]{nesterov2018lectures}
Nesterov, Y. et~al. (2018).
\newblock {\em Lectures on convex optimization}, volume 137.
\newblock Springer.

\bibitem[Ni et~al., 2023]{ni2022representation}
Ni, C., Song, Y., Zhang, X., Ding, Z., Jin, C., and Wang, M. (2023).
\newblock Representation learning for low-rank general-sum markov games.
\newblock In {\em The Eleventh International Conference on Learning
  Representations}.

\bibitem[Osband and Van~Roy, 2014]{osband2014model}
Osband, I. and Van~Roy, B. (2014).
\newblock Model-based reinforcement learning and the eluder dimension.
\newblock {\em Advances in Neural Information Processing Systems}, 27.

\bibitem[Parikh et~al., 2014]{parikh2014proximal}
Parikh, N., Boyd, S., et~al. (2014).
\newblock Proximal algorithms.
\newblock {\em Foundations and trends in Optimization}, 1(3):127--239.

\bibitem[P{\'a}sztor et~al., 2023]{pasztor2021efficient}
P{\'a}sztor, B., Krause, A., and Bogunovic, I. (2023).
\newblock Efficient model-based multi-agent mean-field reinforcement learning.
\newblock {\em Transactions on Machine Learning Research}.

\bibitem[P{\'e}rolat et~al., 2022]{perolat2021scaling}
P{\'e}rolat, J., Perrin, S., Elie, R., Lauri{\`e}re, M., Piliouras, G., Geist,
  M., Tuyls, K., and Pietquin, O. (2022).
\newblock Scaling mean field games by online mirror descent.
\newblock In {\em Proceedings of the 21st International Conference on
  Autonomous Agents and Multiagent Systems}, pages 1028--1037.

\bibitem[Perrin et~al., 2020]{perrin2020fictitious}
Perrin, S., P{\'e}rolat, J., Lauri{\`e}re, M., Geist, M., Elie, R., and
  Pietquin, O. (2020).
\newblock Fictitious play for mean field games: Continuous time analysis and
  applications.
\newblock {\em Advances in Neural Information Processing Systems},
  33:13199--13213.

\bibitem[Russo and Van~Roy, 2013]{russo2013eluder}
Russo, D. and Van~Roy, B. (2013).
\newblock Eluder dimension and the sample complexity of optimistic exploration.
\newblock {\em Advances in Neural Information Processing Systems}, 26.

\bibitem[Saldi et~al., 2018]{saldi2018markov}
Saldi, N., Basar, T., and Raginsky, M. (2018).
\newblock Markov--nash equilibria in mean-field games with discounted cost.
\newblock {\em SIAM Journal on Control and Optimization}, 56(6):4256--4287.

\bibitem[Shalev-Shwartz et~al., 2016]{shalev2016safe}
Shalev-Shwartz, S., Shammah, S., and Shashua, A. (2016).
\newblock Safe, multi-agent, reinforcement learning for autonomous driving.
\newblock {\em arXiv preprint arXiv:1610.03295}.

\bibitem[Shalev-Shwartz and Singer, 2007]{shalev2007online}
Shalev-Shwartz, S. and Singer, Y. (2007).
\newblock {\em Online learning: Theory, algorithms, and applications}.
\newblock PhD thesis, Hebrew University.

\bibitem[Sun et~al., 2019]{sun2019model}
Sun, W., Jiang, N., Krishnamurthy, A., Agarwal, A., and Langford, J. (2019).
\newblock Model-based rl in contextual decision processes: Pac bounds and
  exponential improvements over model-free approaches.
\newblock In {\em Conference on learning theory}, pages 2898--2933. PMLR.

\bibitem[Uehara et~al., 2021]{uehara2021representation}
Uehara, M., Zhang, X., and Sun, W. (2021).
\newblock Representation learning for online and offline rl in low-rank mdps.
\newblock In {\em International Conference on Learning Representations}.

\bibitem[Wang et~al., 2023]{wang2023breaking}
Wang, Y., Liu, Q., Bai, Y., and Jin, C. (2023).
\newblock Breaking the curse of multiagency: Provably efficient decentralized
  multi-agent rl with function approximation.
\newblock In {\em Proceedings of Thirty Sixth Conference on Learning Theory},
  Proceedings of Machine Learning Research. PMLR.

\bibitem[Xie et~al., 2020]{xie2020learning}
Xie, Q., Chen, Y., Wang, Z., and Yang, Z. (2020).
\newblock Learning zero-sum simultaneous-move markov games using function
  approximation and correlated equilibrium.
\newblock In {\em Conference on learning theory}, pages 3674--3682. PMLR.

\bibitem[Xie et~al., 2021]{xie2021learning}
Xie, Q., Yang, Z., Wang, Z., and Minca, A. (2021).
\newblock Learning while playing in mean-field games: Convergence and
  optimality.
\newblock In {\em International Conference on Machine Learning}, pages
  11436--11447. PMLR.

\bibitem[Xie et~al., 2022]{xie2022role}
Xie, T., Foster, D.~J., Bai, Y., Jiang, N., and Kakade, S.~M. (2022).
\newblock The role of coverage in online reinforcement learning.
\newblock In {\em The Eleventh International Conference on Learning
  Representations}.

\bibitem[Xiong et~al., 2022]{xiong2022self}
Xiong, W., Zhong, H., Shi, C., Shen, C., and Zhang, T. (2022).
\newblock A self-play posterior sampling algorithm for zero-sum markov games.
\newblock In {\em International Conference on Machine Learning}, pages
  24496--24523. PMLR.

\bibitem[Yang et~al., 2018]{yang2018mean}
Yang, Y., Luo, R., Li, M., Zhou, M., Zhang, W., and Wang, J. (2018).
\newblock Mean field multi-agent reinforcement learning.
\newblock In {\em International conference on machine learning}, pages
  5571--5580. PMLR.

\bibitem[Yardim et~al., 2023]{yardim2022policy}
Yardim, B., Cayci, S., Geist, M., and He, N. (2023).
\newblock Policy mirror ascent for efficient and independent learning in mean
  field games.
\newblock In {\em International Conference on Machine Learning}, pages
  39722--39754. PMLR.

\bibitem[Zanette et~al., 2020]{zanette2020learning}
Zanette, A., Lazaric, A., Kochenderfer, M., and Brunskill, E. (2020).
\newblock Learning near optimal policies with low inherent bellman error.
\newblock In {\em International Conference on Machine Learning}, pages
  10978--10989. PMLR.

\bibitem[Zhang et~al., 2019]{zhang2019policy}
Zhang, K., Yang, Z., and Basar, T. (2019).
\newblock Policy optimization provably converges to nash equilibria in zero-sum
  linear quadratic games.
\newblock {\em Advances in Neural Information Processing Systems}, 32.

\bibitem[Zhang et~al., 2021]{zhang2021multi}
Zhang, K., Yang, Z., and Ba{\c{s}}ar, T. (2021).
\newblock Multi-agent reinforcement learning: A selective overview of theories
  and algorithms.
\newblock {\em Handbook of reinforcement learning and control}, pages 321--384.

\end{thebibliography}

\clearpage
\section*{Checklist}

\begin{enumerate}
 \item For all models and algorithms presented, check if you include:
 \begin{enumerate}
   \item A clear description of the mathematical setting, assumptions, algorithm, and/or model. [Yes]
   \item An analysis of the properties and complexity (time, space, sample size) of any algorithm. [Yes]
   \item (Optional) Anonymized source code, with specification of all dependencies, including external libraries. [Not Applicable]
 \end{enumerate}

 \item For any theoretical claim, check if you include:
 \begin{enumerate}
   \item Statements of the full set of assumptions of all theoretical results. [Yes]
   \item Complete proofs of all theoretical results. [Yes]
   \item Clear explanations of any assumptions. [Yes]
 \end{enumerate}

 \item For all figures and tables that present empirical results, check if you include:
 \begin{enumerate}
   \item The code, data, and instructions needed to reproduce the main experimental results (either in the supplemental material or as a URL). [Not Applicable]
   \item All the training details (e.g., data splits, hyperparameters, how they were chosen). [Not Applicable]
         \item A clear definition of the specific measure or statistics and error bars (e.g., with respect to the random seed after running experiments multiple times). [Not Applicable]
         \item A description of the computing infrastructure used. (e.g., type of GPUs, internal cluster, or cloud provider). [Not Applicable]
 \end{enumerate}

 \item If you are using existing assets (e.g., code, data, models) or curating/releasing new assets, check if you include:
 \begin{enumerate}
   \item Citations of the creator If your work uses existing assets. [Not Applicable]
   \item The license information of the assets, if applicable. [Not Applicable]
   \item New assets either in the supplemental material or as a URL, if applicable. [Not Applicable]
   \item Information about consent from data providers/curators. [Not Applicable]
   \item Discussion of sensible content if applicable, e.g., personally identifiable information or offensive content. [Not Applicable]
 \end{enumerate}

 \item If you used crowdsourcing or conducted research with human subjects, check if you include:
 \begin{enumerate}
   \item The full text of instructions given to participants and screenshots. [Not Applicable]
   \item Descriptions of potential participant risks, with links to Institutional Review Board (IRB) approvals if applicable. [Not Applicable]
   \item The estimated hourly wage paid to participants and the total amount spent on participant compensation. [Not Applicable]
 \end{enumerate}

 \end{enumerate}

\newpage
\appendix
\onecolumn
\tableofcontents

\newpage

\section{Extended Introduction}
\subsection{Additional Related Works}\label{appx:additional_related_works}
\paragraph{Single-Agent RL with General Function Approximation}
Recently, beyond tabular RL \citep{auer2008near, azar2017minimax, jin2018q}, there are significant progress on sample complexity analysis in linear function approximation setting \citep{jin2020provably, zanette2020learning, agarwal2020flambe, modi2021model, uehara2021representation,huang2022towards} or more general function approximation settings \citep{russo2013eluder, jiang2017contextual, sun2019model, jin2021bellman, du2021bilinear, xie2022role, foster2021statistical, chen2022general, ayoub2020model}.
However, the MFRL setting is significantly different from single-agent RL because of the dependence on the state density in transition and reward functions. 
The function complexity measure, especially for value-based function class, and the corresponding algorithms in single-agent RL cannot be trivially generalized to MFRL.

\paragraph{Multi-Agent RL}
Sample complexity of learning in Markov Games has been studied extensively in a recent surge of works \citep{jin2021v,bai2020near,chen2022almost,zhang2019policy,zhang2021multi,xiong2022self}.
A few recent works also consider learning Markov Games with linear or general function approximation \citep{xie2020learning, huang2021towards, jin2022power, ni2022representation}.
None of these results can be directly extended to Mean-Field RL.

Recently, \citep{wang2023breaking,cui2023breaking} also studied how to ``break the curse of multi-agency'' by decentralized learning in MARL setting.
Although they consider a more general setting from ours, where agents can be largely different, there are still some restrictions when generalizing their results to our mean-field setting.
First of all, their algorithm can only guarantee the convergence to the Coarse Correlated Equilibria or the Correlated Equilibria, while ours can converge to Nash Equilibrium in MFG.
Moreover, and more importantly, the sample complexity of their algorithms depend on the number of agents (although polynomially instead of exponentially), which implies that their algorithms still suffer from the ``curse of multi-agency'' when the number of agents is exponentially large.

\paragraph{Other Related Works}
In this paper, we consider MLE based model estimation algorithm. Similar ideas has been adopted in POMDP \citep{liu2022partially} or Partial Observable Markov Games \citep{liu2022sample}.
\section{Proofs for Eluder Dimension Related}
\subsection{Missing Details of Eluder Dimension Related}\label{appx:missing_details_MF-MBED}
In the following, we recall the Eluder Dimension in Value Function Approximation Setting \citep{russo2013eluder}.
\begin{definition}[$\epsilon$-Independence for Scalar Function]
    Given a domain $\cY$ and a function class $\cF\subset \{f|f:\cY\rightarrow\mR\}$, we say $y$ is $\epsilon$-independent w.r.t. $y_1,y_2,...,y_n$, if there exists $f_1,f_2\in\cF$ satisfying $\sqrt{\sum_{i=1}^{n} |f_1(y_i) - f_2(y_i)|^2} \leq \epsilon$ but $|f_1(y) - f_2(y)| > \epsilon$.
\end{definition}
\begin{definition}[Eluder Dimension for Scalar Function]
    Given a function class $\cF\subset \{g|g:\cY\rightarrow\mR\}$, we use $\overline{\dimE}(\cF,\epsilon)$ to denote the length of the longest sequence $y_1,...,y_n \in \cY$, such that, for any $i \in [n]$, $y_i$ is $\epsilon$-independent w.r.t. $y_1,...,y_{i-1}$.
\end{definition}
\paragraph{Remarks on Assump.~\ref{assump:lipschitz}}
The main reason we require the Lipschitz continuity w.r.t. Hellinger distance is to handle the distribution shift issue.
In Thm.~\ref{thm:MLE_Gaurantee}, we show that MLE regression can only guarantee the learned model aligns with $M^*$ under different density. In order to guarantee efficient learning, we need to convert it to upper bound for model error under the same density.

Besides, although in general $\mH$ and $\matDTV$ distance between two distribution can be largely different. For our example in Prop.~\ref{prop:MF-MBED_det_trans_noise}, given two function $f_1,f_2$, we have:
\begin{align*}
    \mH(\mP_{f_1},\mP_{f_2}) = O(\frac{1}{\sigma^2}\|f_1 - f_2\|_2) = O(\frac{1}{\sigma^2}\|f_1 - f_2\|_1).
\end{align*}
Therefore, Assump.~\ref{assump:lipschitz} can be ensured when $f\in\cF$ is Lipschitz w.r.t. $l_1$ distance, which is reasonable.

Moreover, in fact, if we only consider $\dimE_\alpha(\cM,\matDTV,\epsilon)$ as model-based eluder dimension in our framework, we only require Lipschitz continuity w.r.t. $l_1$-distance (or $\matDTV$ distance).

\subsection{Concrete Examples Satisfying Finite Eluder Dimension Assumption}\label{appx:examples_MF-MBED}

\subsubsection{Example 1: Linear Combined Model}
\begin{proposition}[Linearly Combined Model]\label{prop:MF-MBED_Linear_Combined_formal}
    Consider the linear combined model class with known state action feature vector $\phi(s,a,\mu,s') \in \mR^{d}$, such that for arbitrary $s\in\cS,a\in\cA$ and arbitrary $g:\cS\rightarrow[0,1]$, we have $\|\SumInt_{s'\in\cS} \phi(s,a,\mu,s') g(s')\|_2 \leq C_\phi$\footnote{Similar normalization assumptions is common in previous literatures \citep{agarwal2020flambe, modi2020sample, uehara2021representation}\label{fn:normalization}}
    \begin{align*}
        \cP := \{\mP_\theta|\mP_\theta(\cdot|s,a,\mu):=\theta\trans \phi(s,a,\mu,s'),~ \|\theta\|_2 \leq C_\theta;~ \forall s,a,\mu,~\SumInt_{s'\in\cS}\mP(s'|s,a,\mu)= 1,~ \mP(\cdot|s,a,\mu) \geq 0\}.
    \end{align*}
    For $\alpha \geq 1$, we have: $\dimE_\alpha(\cP, \matDTV,\epsilon) = O(d\log (1+\frac{dC_\theta C_\phi}{\epsilon}))$.
\end{proposition}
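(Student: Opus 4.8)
The plan is to reduce the total-variation prediction difference of two linear models to a bilinear form in the parameter difference against a \emph{data-dependent sign feature}, and then run a standard elliptical-potential argument on the resulting (near-)linear structure. First I would fix two models $\mathbb{P}_{\theta_1},\mathbb{P}_{\theta_2}$, set $w=\theta_1-\theta_2$ (so $\|w\|_2\le 2C_\theta$), and write $x=(s,a,\mu)$. Since $\mathbb{P}_\theta(s'|s,a,\mu)=\theta\trans\phi(s,a,\mu,s')$, I get
\[
2\,\matDTV(\mathbb{P}_{\theta_1},\mathbb{P}_{\theta_2})(x)=\SumInt_{s'}\big|w\trans\phi(s,a,\mu,s')\big|=w\trans\Psi(x,w),
\]
where $\Psi(x,w):=\SumInt_{s'}\text{sign}\!\big(w\trans\phi(s,a,\mu,s')\big)\,\phi(s,a,\mu,s')$. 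Splitting $\text{sign}(\cdot)$ into its positive and negative indicator parts, each a map $\cS\to\{0,1\}$, and applying the normalization hypothesis $\|\SumInt_{s'}\phi(\cdot,s')g(s')\|_2\le C_\phi$ twice gives $\|\Psi(x,w)\|_2\le 2C_\phi$. This converts the distribution-valued difference into a bilinear form with a bounded feature; the only catch is that $\Psi(x,w)$ depends on the sign pattern of $w$, not on $x$ alone.

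Next I would take a longest $\alpha$-weakly-$\epsilon'$-independent sequence $x_1,\dots,x_n$ with $\epsilon'\ge\epsilon$, let $w_i$ be the witnessing parameter difference at index $i$, and abbreviate $z_t^{(i)}:=\Psi(x_t,w_i)$. Translated through the display above, independence reads $\sum_{t<i}(w_i\trans z_t^{(i)})^2\le 4(\epsilon')^2$ and $w_i\trans z_i^{(i)}>2\alpha\epsilon'$. The observation that resolves the feature's dependence on $w$ is a sign-monotonicity inequality: because replacing the signs of $w_i$ by any other sign pattern can only shrink the $\ell_1$ sum, $|w_i\trans z_t^{(j)}|\le w_i\trans z_t^{(i)}$ for all $j,t$. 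Hence if I commit to the single self-feature sequence $z_t:=z_t^{(t)}=\Psi(x_t,w_t)$ and form $V_i:=\lambda I+\sum_{t<i}z_tz_t\trans$, this inequality yields $\sum_{t<i}(w_i\trans z_t)^2\le\sum_{t<i}(w_i\trans z_t^{(i)})^2\le 4(\epsilon')^2$, so one fixed Gram matrix suffices even though each witness lives in its own sign pattern.

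Consequently $\|w_i\|_{V_i}^2\le\lambda\|w_i\|_2^2+4(\epsilon')^2\le 4\lambda C_\theta^2+4(\epsilon')^2$, while $w_i\trans z_i=w_i\trans z_i^{(i)}>2\alpha\epsilon'$. Choosing $\lambda=(\epsilon')^2/C_\theta^2$ and using $\|z_i\|_{V_i^{-1}}\ge (w_i\trans z_i)/\|w_i\|_{V_i}$ gives $\|z_i\|_{V_i^{-1}}^2\ge \alpha^2/2\ge 1/2$ at every step. Applying the standard elliptical potential lemma $\sum_{i=1}^n\min(1,\|z_i\|_{V_i^{-1}}^2)\le 2d\log\!\big(1+n(2C_\phi)^2/(\lambda d)\big)$ together with the per-step lower bound $1/2$ yields $n/2\le 2d\log\!\big(1+4nC_\theta^2C_\phi^2/(\epsilon^2 d)\big)$, which solves to $n=O\!\big(d\log(1+dC_\theta C_\phi/\epsilon)\big)$; note that larger $\epsilon'\ge\epsilon$ only enlarges $\lambda$, so the bound stated in $\epsilon$ is the worst case.

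The main obstacle is exactly the data-dependence of the sign feature $\Psi(x,w)$, which blocks a direct appeal to the linear-function Eluder bound whose feature map is fixed. The sign-monotonicity inequality $|w_i\trans z_t^{(j)}|\le w_i\trans z_t^{(i)}$ is the crux: it lets me build one covariance matrix from the self-features $z_t^{(t)}$ while still bounding each witness's historical error, thereby recovering the clean linear potential argument. The remaining pieces — the feature-norm bound, the potential lemma, and solving the transcendental inequality — are routine.
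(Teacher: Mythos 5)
Your proof is correct and follows essentially the same route as the paper's: both reduce $\matDTV$ to a bilinear form $w\trans v$ against the data-dependent sign feature, build the Gram matrix from the self-features $v_{\theta_1^t,\theta_2^t}(x_t)$, bound each witness's historical error via the observation that swapping in another sign pattern only shrinks $\SumInt_{s'}|w\trans\phi|$, and finish with the elliptical-potential/determinant argument under $\lambda=\epsilon^2/C_\theta^2$. If anything, you state the sign-monotonicity step more explicitly than the paper does, and your factor-of-two bookkeeping for $\|\Psi(x,w)\|_2$ (splitting the sign into $\{0,1\}$-valued parts) is the more careful reading of the normalization hypothesis.
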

\begin{proof}
    We focus on the case when $\alpha = 1$ since which directly serves as upper bound for $\alpha > 1$.
    For arbitrary $\theta_1,\theta_2$ with $\|\theta_1\|_2\leq C_\theta,\|\theta_2\|_2\leq C_\theta$, we have:
    \begin{align*}
        \matDTV(\mP_{\theta_1},\mP_{\theta_2})(s,a,\mu) =& \sup_{\bcS}|\SumInt_{s'} (\theta_1-\theta_2)\trans \phi(s,a,\mu,s')|=\frac{1}{2}(\theta_1-\theta_2)\trans \SumInt_{s'} \phi(s,a,\mu,s') g_{\theta_1,\theta_2}(s,a,\mu,s').
    \end{align*}
    where we define:
    \begin{align*}
        g_{\theta_1,\theta_2}(s,a,\mu,s'):=\begin{cases}
            1,\quad \text{if~} (\theta_1-\theta_2)\trans \phi(s,a,\mu,s') \geq 0;\\
            -1,\quad \text{otherwise}.
        \end{cases}
    \end{align*}
    In the following, for simplicity, we use 
    $$
    v_{\theta_1,\theta_2}(s,a,\mu):=\SumInt_{s'} \phi(s,a,\mu,s') g_{\theta_1,\theta_2}(s,a,\mu,s').
    $$ 
    as a short note. Also note that, 
    \begin{align*}
        \|v_{\theta_1,\theta_2}(s,a,\mu)\|_2 \leq \|\SumInt_{s'} \phi(s,a,\mu,s') g_{\theta_1,\theta_2}(s,a,\mu,s')\|_2 \leq C_\phi,\quad \forall \pi,\mu,\theta_1,\theta_2 \in \cB(0; C_\theta).
    \end{align*}
    Suppose we have a sequence of samples $x_1,..,x_n$ with $x_i := (s^i,a^i,\mu^i)$, such that $x_i$ is $\epsilon$-independent of $\{x_1,...,x_{i-1}\}$ for all $i\in[n]$. Then, by definition, for each $i$, there exists $\theta_1^i, \theta_2^i$ such that:
    \begin{align*}
        4\epsilon^2 \leq& 4\|\mP_{\theta^i_1}(\cdot|s^i,a^i,\mu^i) - \mP_{\theta^i_2}(\cdot|s^i,a^i,\mu^i)\|_\TV^2\\
        =&\Big((\theta^i_1-\theta^i_2)\trans v_{\theta^i_1,\theta^i_2}(s^i,a^i,\mu^i)\Big)^2\leq  \|\theta_1^i-\theta_2^i\|^2_{\Lambda^i} \|v_{\theta^i_1,\theta^i_2}(s^i,a^i,\mu^i)\|^2_{(\Lambda^i)^{-1}}.
    \end{align*}
    where we denote
    \begin{align*}
        \Lambda^i :=& \lambda I + \sum_{t=1}^{i-1} 
        v_{\theta_1^t,\theta_2^t}(s^t,a^t,\mu^t)\trans v_{\theta_1^t,\theta_2^t}(s^t,a^t,\mu^t).
    \end{align*}
    Meanwhile,
    \begin{align*}
        \|\theta_1^i-\theta_2^i\|^2_{\Lambda^i} =& \lambda \|\theta_1^i-\theta_2^i\|^2 + \sum_{t=1}^{i-1} \Big((\theta_1^i-\theta_2^i)\trans v_{\theta_1^t,\theta_2^t}(s^t,a^t,\mu^t)\Big)^2\\
        \leq & 4\lambda  C_\theta^2 + \sum_{t=1}^{i-1} \Big((\theta^i_1-\theta^i_2)\SumInt_{s'} \Phi(s^t,a^t,\mu^t)\psi(s') g_{\theta_1^t,\theta_2^t}(s^t,a^t,\mu^t,s')\Big)^2\\
        = & 4\lambda C_\theta^2 + 4\sum_{t=1}^{i-1} \|\mP_{\theta_1^t}(\cdot|s^t,a^t,\mu^t) ,\mP_{\theta_2^t}(\cdot|s^t,a^t,\mu^t) \|_\TV\tag{$|g_{\theta_1^t,\theta_2^t}(\cdot,\cdot,\cdot,\cdot)| = 1$}\\
        \leq & 4\lambda C_\theta^2 + 4\epsilon^2.
    \end{align*}
    By choosing $\lambda = \epsilon^2/ C_\theta^2$, we further have:
    \begin{align*}
        \|v_{\theta^i_1,\theta^i_2}(s^i,a^i,\mu^i)\|^2_{(\Lambda^i)^{-1}} \geq \frac{4\epsilon^2}{4 \lambda  C_\theta^2 + 4\epsilon^2} = \frac{1}{2}.
    \end{align*}
    On the one hand,
    \begin{align*}
        \det \Lambda^{n+1} =& \det (\Lambda^n + v_{\theta^n_1,\theta^n_2}(s^n,a^n,\mu^n)v_{\theta^n_1,\theta^n_2}(s^n,a^n,\mu^n)\trans )\\
        = &  (1 + v_{\theta^n_1,\theta^n_2}(s^n,a^n,\mu^n)\trans (\Lambda^n)^{-1} v_{\theta^n_1,\theta^n_2}(s^n,a^n,\mu^n)) \cdot \det \Lambda^n\\
        \geq & \frac{3}{2}\det \Lambda^n \geq (\frac{3}{2})^n \det \Lambda^1 = \lambda^d (\frac{3}{2})^n.
    \end{align*}
    On the other hand,
    \begin{align*}
        \lambda^d (\frac{3}{2})^n  \leq \det \Lambda^{n+1} \leq (\frac{\Tr(\Lambda^n)}{d})^d \leq (\lambda + \frac{n C^2_\phi}{d})^d.
    \end{align*}
    which implies $n = O(d\log (1+\frac{dC_\theta C_\phi}{\epsilon}))$.
\end{proof}

\paragraph{Linear Combined Model with State-Action-Dependent Weights}
In \citep{modi2020sample}, the authors introduced another style of linear combined model with state-action dependent weights, which can be generalized to MFRL setting by:
\begin{align*}
    \mP_W(s'|s,a,\mu) := \sum_{i=1}^d [W\phi(s,a,\mu,s')]_k \mP_i(s'|s,a,\mu).
\end{align*}
where $W \in \mR^{d\times d}$ is an unknown matrix, $\phi(s,a)$ are known feature class, $\{\mP_i\}_{i=1}^d$ are $d$ known models to combine.
If we further define $\psi(s,a,\mu,s') := [\mP_1(s'|s,a,\mu),...,\mP_d(s'|s,a,\mu)]\trans \in \mR^{d}$, we can rewrite the model by:
\begin{align*}
    \mP_W(s'|s,a,\mu) = \phi(s,a,\mu,s')\trans W\trans \psi(s,a,\mu,s') = \vectorize(W\trans)\trans \vectorize(\psi(s,a,\mu,s')\phi(s,a,\mu,s')\trans).
\end{align*}
Therefore, by treating $\theta = \vectorize(W\trans)$ to be the parameter and $\vectorize(\psi(s,a,\mu,s')\phi(s,a,\mu,s')\trans)$ to be the feature taking place the role of $\phi(s,a,\mu,s')$ in Prop.~\ref{prop:MF-MBED_Linear_Combined_formal}, we can absorb this model class into linearly combined model framework, and $\tilde{O}(d^2)$ will be an upper bound for its MF-MBED.

\subsubsection{Example 2: Linear MDP with Known Feature}
\begin{proposition}[Low-Rank MF-MDP; Formal Version of Prop.~\ref{prop:MF-MBED_Linear_MFMDP}]\label{prop:MF-MBED_Linear_MFMDP_formal}
    Consider the Low-Rank MF-MDP with known feature $\phi:\cS\times\cA\times\Delta(\cS)\rightarrow \mR^d$ satisfying $\|\phi\| \leq C_\phi$, and unknown next state feature $\psi:\cS\rightarrow\mR^d$. Given a next state feature function class $\Psi$ satisfying $\forall \psi\in\Psi,~\forall s'\in\cS,~\forall g:\cS\rightarrow \{-1,1\}$, $\|\SumInt_{s'}\psi(s')g(s')\|_2\leq C_\Psi$, consider the following model class:
    \begin{align*}
        \cP_\Psi := \{\mP_\psi|\mP_\psi(\cdot|s,a,\mu):=\phi(s,a,\mu)\trans\psi(s');\forall s,a,\mu,~\SumInt_{s'\in\cS}\mP_\psi(s'|s,a,\mu) = 1,~ \mP_\psi(s'|s,a,\mu) \geq 0;\psi\in \Psi\},
    \end{align*}
    for $\alpha \geq 1$, we have $\dimE_\alpha(\cP_\Psi,\matDTV, \epsilon) = O(d\log (1+\frac{dC_\phi C_\Psi}{\epsilon}))$.
\end{proposition}
\begin{proof}
    Again we focus on the case when $\alpha = 1$.
    Suppose there is a sequence of samples $x_1,...,x_n$ (with $x_i := (s^i,a^i,\mu^i)$) such that for any $i\in[n]$, $x_i$ is $\epsilon$-independent w.r.t. $x_1,...,x_{i-1}$ w.r.t. $\cP_\Psi$ and $\matDTV$, then for each $i\in[n]$, there should exists $\psi^i,\tpsi^i \in \Psi$, such that:
    \begin{align*}
        \epsilon^2 \geq \sum_{t=1}^{i-1} \|\mP_{\psi^i}(\cdot|s^t,a^t,\mu^t), \mP_{\tpsi^i}(\cdot|s^t,a^t,\mu^t)\|_\TV^2.
    \end{align*}
    and
    \begin{align*}
        \epsilon^2 \leq & \|\mP_{\psi^i}(\cdot|s^i,a^i,\mu^i) - \mP_{\tpsi^i}(\cdot|s^i,a^i,\mu^i)\|_\TV^2\\
        =&\sup_{\bcS\subset\cS}\Big(\SumInt_{s'\in\bcS}\phi(s^i,a^i,\mu^i)\trans({\psi^i}(s')-\tpsi^i(s'))\Big)^2\\
        =&\frac{1}{4}\Big(\phi(s^i,a^i,\mu^i)\trans\SumInt_{s'\in\cS}({\psi^i}(s')-\tpsi^i(s'))g_{{\psi^i},\tpsi^i}(s^i,a^i,\mu^i,s')\Big)^2\\
        \leq & \frac{1}{4}\|\phi(s^i,a^i,\mu^i)\|_{(\Lambda^i)^{-1}}^2\|\SumInt_{s'\in\cS}({\psi^i}(s')-\tpsi^i(s'))g_{{\psi^i},\tpsi^i}(s^i,a^i,\mu^i,s')\|_{\Lambda^i}^2.
    \end{align*}
    where we define:
    \begin{align*}
        \Lambda^i := \lambda I + \sum_{t=1}^{i-1} \phi(s^i,a^i,\mu^i)\phi(s^i,a^i,\mu^i)\trans;\quad 
        g_{\psi^i,\tpsi^i}(s,a,\mu,s'):=&\begin{cases}
            1,\quad \text{if~} \phi(s^i,a^i,\mu^i)\trans({\psi^i}(s')-\tpsi^i(s')) \geq 0;\\
            -1,\quad \text{otherwise}.
        \end{cases}
    \end{align*}
    For simplicity, we use $v_{\psi,\tpsi}(s,a,\mu) := \SumInt_{s'}({\psi}(s')-\tpsi(s'))g_{{\psi},\tpsi}(s,a,\mu,s')$ as a shortnote. Therefore, for each $i$,
    \begin{align*}
        \|v_{\psi^i,\tpsi^i}(s^i,a^i,\mu^i)\|_{\Lambda^i}^2 =&\lambda \|v_{\psi^i,\tpsi^i}(s^i,a^i,\mu^i)\|^2 + \sum_{t=1}^{i-1}\Big(\phi(s^t,a^t,\mu^t)\trans v_{\psi^i,\tpsi^i}(s^i,a^i,\mu^i)\Big)^2\\
        =&\lambda \|v_{\psi^i,\tpsi^i}(s^i,a^i,\mu^i)\|^2 + \sum_{t=1}^{i-1}\Big(\phi(s^t,a^t,\mu^t)\trans\SumInt_{s'} (\psi^i(s)-\tpsi^i(s'))g_{\psi^i,\tpsi^i}(s^i,a^i,\mu^i,s')\Big)^2\\
        =& 4\lambda C_\Psi^2 + 4\sum_{t=1}^{i-1} \|\mP_{\psi^i}(\cdot|s^t,a^t,\mu^t) - \mP_{\tpsi^i}(\cdot|s^t,a^t,\mu^t)\|_\TV^2\\
        \leq & 4\lambda C_\Psi^2 + 4\epsilon^2.
    \end{align*}
    By choosing $\lambda = \epsilon^2/C^2_\Psi$, we have:
    \begin{align*}
        \|\phi(s^i,a^i,\mu^i)\|_{(\Lambda^i)^{-1}}^2 \geq \frac{4\epsilon^2}{4\lambda C^2_\Psi + 4\epsilon^2} = \frac{1}{2}.
    \end{align*}
    On the one hand,
    \begin{align*}
        \det \Lambda^{n+1} =& \det (\Lambda^n + \phi(s^n,a^n,\mu^n)\phi(s^n,a^n,\mu^n)\trans ) = (1 + \|\phi(s^n,a^n,\mu^n)\|_{(\Lambda^n)^{-1}}^2) \cdot \det \Lambda^n\\
        \geq & \frac{3}{2}\det \Lambda^n\geq (\frac{3}{2})^n \det \Lambda^1 = \lambda^d (\frac{3}{2})^n.
    \end{align*}
    Therefore,
    \begin{align*}
        \lambda^d (\frac{3}{2})^n \leq \det \Lambda^{n+1} \leq (\frac{\Tr(\Lambda^n)}{d})^d \leq (\lambda + \frac{n C^2_\phi}{d})^d.
    \end{align*}
    which implies $n = O(d\log(1+\frac{d C_\phi C_\Psi}{\epsilon}))$.
\end{proof}

\subsubsection{Example 3: Kernel Mean-Field MDP}
We first introduce the notion of Effective Dimension, which is also known as the critical information gain in \citep{du2021bilinear}:
\begin{definition}[Effective Dimension]
    The $\epsilon$-effective dimension of a set $\cY$ is the minimum integer $d_\eff(\cY,\epsilon) = n$, such that,
    \begin{align*}
        \sup_{y_1,...,y_n\in \cY} \frac{1}{n} \log\det (I + \frac{1}{\epsilon^2}\sum_{i=1}^n y_i y_i\trans) \leq \frac{1}{e}.
    \end{align*}
\end{definition}
In the next theorem, we show that, the MF-MBED of kernel MF-MDP generalized from kernel MDP in single-agent setting \citep{jin2021bellman} can be upper bounded by the effiective dimension in certain Hilbert spaces.
\begin{proposition}[Kernel MF-MDP]
    Given a separable Hilbert space $\cH$, a feature mapping $\phi:\cS\times\cA\times\Delta(\cS) \rightarrow \cH$ such that $\|\phi(s,a,\mu)\|_\cH \leq C_\phi$ for all $s\in\cS,a\in\cA,\mu\in\Delta(\cS)$, and a next state feature class $\Psi\subset\{\psi:\cS \rightarrow \cH\}$ satisfying the normalization property that $\forall~\psi\in\Psi$ and $g:\cS\rightarrow\{-1,1\}$, $\|\SumInt_{s'\in\cS}\psi(s')g(s')\|_\cH \leq 1$ \footnote{To align with \citep{jin2021bellman}, we assume $\psi$ is normalized.}.
    Consider the model class $\cP_{\Psi, \cH}$ defined by:
    \begin{align*}
        \cP_{\Psi,\cH} := \{\mP_\psi|\mP_\psi(s'|s,a,\mu) = \langle \phi(s,a,\mu), \psi(s') \rangle_\cH,~\SumInt_{s'\in\cS}\mP_\psi(s'|s,a,\mu) = 1,~\mP_\psi(\cdot|s,a,\mu) \geq 0,~\psi \in \Psi\}.
    \end{align*}
    For $\alpha \geq 1$, we have
    \begin{align*}
        \dimE_\alpha(\cP_{\Psi,\cH},\matDTV,\epsilon) = O(d_\eff(\phi(\cX),\epsilon)),
    \end{align*}
    where we use $\cX:=\cS\times\cA\times\Delta(\cS)$ as a short note, and $\phi(\cX) := \{\phi(x)|x\in\cX\}$.
\end{proposition}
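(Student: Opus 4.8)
The plan is to follow the same template as the proof of Prop.~\ref{prop:MF-MBED_Linear_MFMDP_formal}, lifting each step from $\mR^d$ to the separable Hilbert space $\cH$, and then replacing the final determinant–trace (AM–GM) step by an argument driven directly by the definition of $d_\eff$. As in the finite-dimensional examples, it suffices to treat $\alpha = 1$, since a longer $\alpha$-weakly-$\epsilon$-independent sequence for larger $\alpha$ can only be shorter.

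First I would linearize the total-variation distance using a data-dependent sign function. For $\psi,\tpsi\in\Psi$ and $x=(s,a,\mu)$, taking $g_{\psi,\tpsi}(x,s')\in\{-1,1\}$ to be the sign of $\langle\phi(x),\psi(s')-\tpsi(s')\rangle_\cH$ and writing $v_{\psi,\tpsi}(x):=\SumInt_{s'}(\psi(s')-\tpsi(s'))g_{\psi,\tpsi}(x,s')$, one gets
\[
\matDTV(\mP_\psi,\mP_{\tpsi})(x) = \tfrac12\big\langle \phi(x),\, v_{\psi,\tpsi}(x)\big\rangle_\cH,\qquad \|v_{\psi,\tpsi}(x)\|_\cH\le 2,
\]
where the norm bound comes from the triangle inequality and the normalization assumption on $\Psi$. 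Given an $\epsilon$-independent sequence $x_1,\dots,x_n$ with witnesses $\psi^i,\tpsi^i$, I would introduce the regularized covariance operator $\Lambda^i := \lambda\,\mathrm{Id} + \SumInt_{t=1}^{i-1}\phi(x_t)\otimes\phi(x_t)$ and apply Cauchy–Schwarz with respect to $\Lambda^i$:
\[
\epsilon^2 \le \tfrac14\|\phi(x_i)\|_{(\Lambda^i)^{-1}}^2\,\|v_{\psi^i,\tpsi^i}(x_i)\|_{\Lambda^i}^2, \qquad \|v_{\psi^i,\tpsi^i}(x_i)\|_{\Lambda^i}^2 \le 4\lambda + 4\epsilon^2 .
\]
The second bound splits $\|v\|_{\Lambda^i}^2$ into the regularizer term (controlled by $\|v\|_\cH\le 2$) plus the historical sum $\SumInt_{t<i}\langle\phi(x_t),v_{\psi^i,\tpsi^i}(x_i)\rangle^2$; here one uses that the sup-characterization of $\matDTV$ dominates any fixed sign choice, so each summand is at most $4\,\matDTV^2(\mP_{\psi^i},\mP_{\tpsi^i})(x_t)$ and the total is at most $4\epsilon^2$. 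Choosing $\lambda = \epsilon^2$ then forces $\|\phi(x_i)\|_{(\Lambda^i)^{-1}}^2 \ge \tfrac12$ for every $i\in[n]$.

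Next, via the rank-one telescoping identity $\det\Lambda^{i+1}/\det\Lambda^{i} = 1+\|\phi(x_i)\|_{(\Lambda^i)^{-1}}^2$ — well defined in $\cH$ since each $\Lambda^i$ differs from $\lambda\,\mathrm{Id}$ by a finite-rank operator — I would accumulate the per-step lower bounds into
\[
\log\det\Big(\mathrm{Id}+\tfrac{1}{\epsilon^2}\SumInt_{t=1}^n \phi(x_t)\otimes\phi(x_t)\Big) = \SumInt_{i=1}^n \log\big(1+\|\phi(x_i)\|_{(\Lambda^i)^{-1}}^2\big) \ge n\log\tfrac32 .
\]
To convert this into a bound on $n$, I would set $f(m):=\sup_{y_1,\dots,y_m\in\phi(\cX)}\log\det(\mathrm{Id}+\tfrac{1}{\epsilon^2}\SumInt_t y_t\otimes y_t)$ and use the subadditivity $f(m+m')\le f(m)+f(m')$, which follows from $\det(\mathrm{Id}+A+B)\le\det(\mathrm{Id}+A)\det(\mathrm{Id}+B)$ for positive operators $A,B$. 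Combined with $f(d)\le d/e$ at $d:=d_\eff(\phi(\cX),\epsilon)$, subadditivity yields $f(n)\le (n+d)/e$; comparing with $n\log\tfrac32\le f(n)$ and using $\log\tfrac32 > \tfrac1e$ gives $n\le \tfrac{d/e}{\log(3/2)-1/e} = O(d_\eff(\phi(\cX),\epsilon))$.

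The linearization and the Cauchy–Schwarz / elliptical-potential steps are essentially identical to Prop.~\ref{prop:MF-MBED_Linear_MFMDP_formal} and should be routine. The step I expect to be the main obstacle is the last one: making the determinant argument rigorous in infinite dimensions — justifying the Fredholm determinants through the finite-rank structure — and correctly relating the accumulated log-determinant to $d_\eff$ via the subadditivity property, since the trace / AM–GM argument used in the finite-dimensional proofs is no longer available.
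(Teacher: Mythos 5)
Your proposal is correct and follows essentially the same route as the paper's proof: the sign-function linearization of the $\matDTV$ distance, the Cauchy--Schwarz/elliptical-potential bound with $\lambda=\epsilon^2$ forcing $\|\phi(x_i)\|^2_{(\Lambda^i)^{-1}}\ge\frac12$, and the comparison of the accumulated log-determinant with $d_\eff(\phi(\cX),\epsilon)$. Your final step is in fact a more careful version of the paper's: the paper's displayed chain passes directly from $n\log\frac32$ to $\frac1e\,d_\eff(\phi(\cX),\epsilon)$ without addressing the case $n>d_\eff$, whereas your subadditivity argument for $f(m)=\sup_{y_1,\dots,y_m}\log\det(\mathrm{Id}+\frac{1}{\epsilon^2}\sum_t y_t\otimes y_t)$ combined with $\log\frac32>\frac1e$ supplies the missing justification.
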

\begin{proof}
    The proof idea is similar to the proof of Prop.~\ref{prop:MF-MBED_Linear_MFMDP_formal}. Again, we only focus on the case when $\alpha = 1$.
    Suppose there is a sequence of samples $x_1,...,x_n$ (with $x_i := (s^i,a^i,\mu^i)$) such that for any $i\in[n]$, $x_i$ is $\epsilon$-independent w.r.t. $x_1,...,x_{i-1}$ w.r.t. $\cP_{\Psi,\cH}$ and $\matDTV$, then for each $i\in[n]$, there should exists $\psi^i,\tpsi^i \in \Psi$, such that:
    \begin{align*}
        \epsilon^2 \geq \sum_{t=1}^{i-1} \|\mP_{\psi^i}(\cdot|s^t,a^t,\mu^t) - \mP_{\tpsi^i}(\cdot|s^t,a^t,\mu^t)\|_\TV^2.
    \end{align*}
    and
    \begin{align*}
        4\epsilon^2 \leq & 4\|\mP_{\psi^i}(\cdot|s^i,a^i,\mu^i) - \mP_{\tpsi^i}(\cdot|s^i,a^i,\mu^i)\|_\TV^2\\
        =&\Big(\langle \phi(s^i,a^i,\mu^i),\SumInt_{s'}({\psi^i}(s')-\tpsi^i(s'))g_{{\psi^i},\tpsi^i}(s^i,a^i,\mu^i,s') \rangle_\cH \Big)^2\\
        \leq & \|\phi(s^i,a^i,\mu^i)\|_{(\Lambda^i)^{-1}}^2\|\SumInt_{s'}({\psi^i}(s')-\tpsi^i(s'))g_{{\psi^i},\tpsi^i}(s^i,a^i,\mu^i,s')\|_{\Lambda^i}^2.
    \end{align*}
    where we define:
    \begin{align*}
        \Lambda^i := \lambda I + \sum_{t=1}^{i-1} \phi(s^i,a^i,\mu^i)\phi(s^i,a^i,\mu^i)\trans;\quad 
        g_{\psi^i,\tpsi^i}(s,a,\mu,s'):=&\begin{cases}
            1,\quad \text{if~} \langle \phi(s^i,a^i,\mu^i), {\psi^i}(s')-\tpsi^i(s')\rangle_\cH \geq 0;\\
            -1,\quad \text{otherwise}.
        \end{cases}
    \end{align*}
    Based on a similar discussion and choice of $\lambda = \epsilon^2$, as Prop.~\ref{prop:MF-MBED_Linear_MFMDP_formal}, we have:
    \begin{align*}
        (\frac{3}{2})^n \det \Lambda^1 \leq \det \Lambda^{n+1} = \det (\epsilon^2 I + \sum_{i=1}^n \phi(s^i,a^i,\mu^i) \phi(s^i,a^i,\mu^i)\trans),
    \end{align*}
    Therefore,
    \begin{align*}
        n \log \frac{3}{2} \leq \frac{\det \Lambda^{n+1}}{\det \Lambda^1} = \det (I + \frac{1}{\epsilon^2} \sum_{i=1}^n \phi(s^i,a^i,\mu^i) \phi(s^i,a^i,\mu^i)\trans) \leq \frac{1}{e} d_\eff(\phi(\cX),\epsilon),
    \end{align*}
    which implies $n = O(d_\eff(\phi(\cX),\epsilon))$.
\end{proof}

\subsubsection{Example 4: Generalized Linear Function Class}
In this section, we extend the Generalized Linear Models in single-agent RL \citep{russo2013eluder} to MF-MDP.
\begin{proposition}[Generalized Linear MF-MDP]
Given a differentiable and strictly increasing function $h:\mR\rightarrow\mR$ satisfying $ 0 < \underline{h}\leq h' \leq \overline{h}$, where $h'$ is its derivative, suppose we have a feature mapping $\phi:\cS\times\cA\times\Delta(\cS)\rightarrow \mR^d$ satisfying $\|\phi(\cdot,\cdot,\cdot)\|_2 \leq C_\phi$ and a feature class $\Psi\subset \{\psi|\psi:\cS\rightarrow\mR\}$ such that for any $\psi \in \Psi$, $\|\SumInt_{s'\in\cS} \psi(s')g(s')\|_2 \leq C_\Psi$ for any $g:\cS\rightarrow\{-1,1\}$. 
Consider the model class:
\begin{align*}
    \cP_{h,\Psi} := \{\mP_\psi|\mP_\psi(\cdot|s,a,\mu):=h(\phi(s,a,\mu)\trans\psi(s'));\forall s,a,\mu,~\|\mP_\psi(\cdot|s,a,\mu)\|_1 = 1,~ \mP_\psi(s'|s,a,\mu) \geq 0;\psi\in \Psi\},
\end{align*}
For any $\alpha \geq 1$, we have $\dimE_\alpha(\cP_{h,\Psi}, \matDTV, \epsilon) = \tilde{O}(dr^2)$, where $r := \overline{h}/\underline{h}$.
\end{proposition}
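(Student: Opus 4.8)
The plan is to follow the elliptical-potential argument already used for the low-rank MF-MDP in Prop.~\ref{prop:MF-MBED_Linear_MFMDP_formal}, inserting a mean-value-theorem step that trades the nonlinearity $h$ for its uniform derivative bounds $\underline{h}\le h'\le\overline{h}$; this is exactly where the factor $r^2=(\overline{h}/\underline{h})^2$ will enter. As in the previous examples it suffices to treat $\alpha=1$, since $\dimE_\alpha$ is non-increasing in $\alpha$. For $x=(s,a,\mu)$ and $\psi,\tpsi\in\Psi$, applying the mean value theorem to each difference $h(\phi(x)\trans\psi(s'))-h(\phi(x)\trans\tpsi(s'))$ gives the sandwich
\begin{align*}
    \tfrac{\underline{h}}{2}\SumInt_{s'}|\phi(x)\trans(\psi(s')-\tpsi(s'))| &\le \matDTV(\mP_\psi,\mP_{\tpsi})(x) \\ &\le \tfrac{\overline{h}}{2}\SumInt_{s'}|\phi(x)\trans(\psi(s')-\tpsi(s'))|.
\end{align*}
Defining $g_{\psi,\tpsi}(x,s'):=\mathrm{sign}(\phi(x)\trans(\psi(s')-\tpsi(s')))$ and $v_{\psi,\tpsi}(x):=\SumInt_{s'}(\psi(s')-\tpsi(s'))g_{\psi,\tpsi}(x,s')$, the middle quantities become $\phi(x)\trans v_{\psi,\tpsi}(x)$, and the normalization hypothesis on $\Psi$ gives $\|v_{\psi,\tpsi}(x)\|_2\le 2C_\Psi$.

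Next I would suppose $x_1,\dots,x_n$ is an $\epsilon$-independent sequence, set $\Lambda^i:=\lambda I+\sum_{t=1}^{i-1}\phi(x^t)\phi(x^t)\trans$, and for each $i$ take the witnessing $\psi^i,\tpsi^i$. The independence condition combined with the \emph{upper} sandwich bound and Cauchy--Schwarz yields
\begin{align*}
    \epsilon^2 < \matDTV^2(\mP_{\psi^i},\mP_{\tpsi^i})(x^i) \le \tfrac{\overline{h}^2}{4}\,\|\phi(x^i)\|_{(\Lambda^i)^{-1}}^2\,\|v_{\psi^i,\tpsi^i}(x^i)\|_{\Lambda^i}^2.
\end{align*}
Expanding $\|v_{\psi^i,\tpsi^i}(x^i)\|_{\Lambda^i}^2=\lambda\|v_{\psi^i,\tpsi^i}(x^i)\|^2+\sum_{t<i}(\phi(x^t)\trans v_{\psi^i,\tpsi^i}(x^i))^2$, I would bound the first term by $4\lambda C_\Psi^2$ and each summand using $|g_{\psi^i,\tpsi^i}|=1$ and then the \emph{lower} sandwich at $x^t$, i.e. $|\phi(x^t)\trans v_{\psi^i,\tpsi^i}(x^i)|\le\SumInt_{s'}|\phi(x^t)\trans(\psi^i(s')-\tpsi^i(s'))|\le\tfrac{2}{\underline{h}}\matDTV(\mP_{\psi^i},\mP_{\tpsi^i})(x^t)$, so the history condition $\sum_{t<i}\matDTV^2(\mP_{\psi^i},\mP_{\tpsi^i})(x^t)\le\epsilon^2$ gives $\|v_{\psi^i,\tpsi^i}(x^i)\|_{\Lambda^i}^2\le 4\lambda C_\Psi^2+4\epsilon^2/\underline{h}^2$. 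Choosing $\lambda=\epsilon^2/(\underline{h}^2 C_\Psi^2)$ then forces $\|\phi(x^i)\|_{(\Lambda^i)^{-1}}^2>1/(2r^2)$ for every $i$.

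Finally, the elliptical potential identity $\det\Lambda^{i+1}=(1+\|\phi(x^i)\|_{(\Lambda^i)^{-1}}^2)\det\Lambda^i$ together with the uniform lower bound gives $\det\Lambda^{n+1}\ge(1+\tfrac{1}{2r^2})^n\lambda^d$, while $\det\Lambda^{n+1}\le(\lambda+nC_\phi^2/d)^d$ by AM--GM on the trace. Taking logarithms and using $\log(1+\tfrac{1}{2r^2})=\Omega(r^{-2})$ produces a self-bounding inequality $n=O(r^2 d\log(1+nC_\phi^2/(d\lambda)))$, which solves to $n=\tilde{O}(dr^2)$. The main obstacle, and the only genuine departure from the linear case, is controlling the state-dependence of $h'$: the Lagrange remainder is evaluated at a different intermediate point for every $s'$, so $h'$ cannot be pulled out of the sum. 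The remedy is to use the two bounds \emph{asymmetrically}---$\overline{h}$ in the independence step and $\underline{h}$ both in bounding the historical summands and in the choice of $\lambda$---which is exactly what makes the ratio $r=\overline{h}/\underline{h}$ (rather than a single derivative constant) surface, while the secondary subtlety that $v_{\psi^i,\tpsi^i}$ is assembled from signs at $x^i$ yet paired against $\phi(x^t)$ is absorbed by $|g_{\psi^i,\tpsi^i}|=1$.
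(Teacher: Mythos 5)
Your proposal is correct and follows essentially the same route as the paper's proof: the same elliptical-potential argument with the data-dependent sign vector $v_{\psi,\tpsi}$, the same asymmetric use of $\overline{h}$ (at the new point, via the mean value theorem) and $\underline{h}$ (on the historical sum and in the choice $\lambda=\epsilon^2/(\underline{h}^2C_\Psi^2)$), yielding the lower bound $\|\phi(x^i)\|^2_{(\Lambda^i)^{-1}}=\Omega(1/r^2)$ and hence $n=\tilde O(dr^2)$. Your termwise application of the MVT to $\frac12\|\cdot\|_1$ is an equivalent (and slightly cleaner) packaging of the paper's $\sup_{\bcS}$ step, and your constant $1/(2r^2)$ is in fact the correct one.
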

\begin{proof}
    The proof is similar to Prop.~\ref{prop:MF-MBED_Linear_MFMDP_formal}.
    Suppose there is a sequence of samples $x_1,...,x_n$ (with $x_i := (s^i,a^i,\mu^i)$) such that for any $i\in[n]$, $x_i$ is $\epsilon$-independent w.r.t. $x_1,...,x_{i-1}$ w.r.t. $\cP_\Psi$ and $\matDTV$, then for each $i\in[n]$, there should exists $\psi^i,\tpsi^i \in \Psi$, such that:
    \begin{align*}
        \epsilon^2 \geq & \sum_{t=1}^{i-1} \|\mP_{\psi^i}(\cdot|s^t,a^t,\mu^t) - \mP_{\tpsi^i}(\cdot|s^t,a^t,\mu^t)\|_\TV^2 \\
        = & \sum_{t=1}^{i-1} \sup_{\bcS\subset\cS}\Big(\SumInt_{s'\in\bcS} h(\phi(s^t,a^t,\mu^t)\trans \psi^i(s')) - h(\phi(s^t,a^t,\mu^t)\trans \tpsi^i(s'))\Big)^2 \\
        \geq & \underline{h}^2 \sum_{t=1}^{i-1} \sup_{\bcS\subset\cS}\Big(\SumInt_{s'\in\bcS} \phi(s^t,a^t,\mu^t)\trans (\psi^i(s')-\tpsi^i(s'))\Big)^2 \tag{Mean Value Theorem}.
    \end{align*}
    Besides,
    \begin{align*}
        4\epsilon^2 \leq &4 \|\mP_{\psi^i}(\cdot|s^i,a^i,\mu^i) - \mP_{\tpsi^i}(\cdot|s^i,a^i,\mu^i)\|_\TV^2\\
        =& 4\sup_{\bcS\subset\cS} \Big(\SumInt_{s'\in\bcS} h(\phi(s^i,a^i,\mu^i)\trans \psi^i(s')) - h(\phi(s^i,a^i,\mu^i)\trans \tpsi^i(s'))\Big)^2\\
        \leq &4 \overline{h}^2 \sup_{\bcS\subset\cS} \Big(\SumInt_{s'\in\bcS}\phi(s^i,a^i,\mu^i)\trans (\psi^i(s')-\tpsi^i(s'))\Big)^2\\
        = & \overline{h}^2 \Big(\SumInt_{s'\in\bcS}\phi(s^i,a^i,\mu^i)\trans (\psi^i(s')-\tpsi^i(s'))g_{{\psi^i},\tpsi^i}(s^i,a^i,\mu^i,s')\Big)^2 \\
        \leq & \overline{h}^2\|\phi(s^i,a^i,\mu^i)\|_{(\Lambda^i)^{-1}}^2\|\SumInt_{s'}({\psi^i}(s')-\tpsi^i(s'))g_{{\psi^i},\tpsi^i}(s^i,a^i,\mu^i,s')\|_{\Lambda^i}^2.
    \end{align*}
    where in the second inequality, we use the mean value theorem and the fact that $h' \leq \overline{h}$; $\Lambda^i$ and $g_{\psi^i,\tpsi^i}$ are the same as those in Prop.~\ref{prop:MF-MBED_Linear_MFMDP_formal}.
    By denoting $v_{\psi,\tpsi}(s,a,\mu) := \SumInt_{s'}({\psi}(s')-\tpsi(s'))g_{{\psi},\tpsi}(s,a,\mu,s')$, similar to the proof in Prop.~\ref{prop:MF-MBED_Linear_MFMDP_formal}, we have the following upper bound:
    \begin{align*}
        &\|v_{\psi^i,\tpsi^i}(s^i,a^i,\mu^i)\|_{\Lambda^i}^2\leq 4\lambda C_\Psi^2 + 4\epsilon^2 / \underline{h}^2.
    \end{align*}
    By choosing $\lambda = \epsilon^2/\underline{h}^2C^2_\Psi$, we have:
    \begin{align*}
        \|\phi(s^i,a^i,\mu^i)\|_{(\Lambda^i)^{-1}}^2 \geq \frac{4\epsilon^2}{\overline{h}(4\lambda C^2_\Psi + 4\epsilon^2/\underline{h}^2)} = \frac{1}{r^2}.
    \end{align*}
    By a similar discussion, we have:
    \begin{align*}
        (1+\frac{1}{r^2})^n \det \Lambda^1 \leq \det \Lambda^{n+1} \leq (\lambda + \frac{nC_\phi^2}{d})^d.
    \end{align*}
    which implies:
    \begin{align*}
        n = O(d\log(1 + \frac{\overline{h} dC_\phi C_\Psi}{\epsilon})/ \log (1+ \frac{1}{r^2})) = O(dr^2 \log(1 + \frac{\overline{h} dC_\phi C_\Psi}{\epsilon})).
    \end{align*}
\end{proof}

\subsubsection{Example 5: Deterministic Transition with Gaussian Noise}
\PropExampleHD*
\begin{proof}
    First of all, consider the function $h(x) = 1 - \exp(-x/8)$, in general, we have:
    \begin{align*}
        \frac{x}{8} \geq h(x).
    \end{align*}
    Besides, for $x \in [0, 1]$, we have $0\leq h(x)\leq 1 - \exp(-1/8)$ and
    \begin{align*}
        h(x) = 1 - \exp(-\frac{x}{8}) = \exp(0) - \exp(-\frac{x}{8}) \geq -\frac{\exp(-\frac{1}{8}) - \exp(0)}{1 - 0} x > \frac{1}{16}x.
    \end{align*}
    Given $\epsilon \leq 0.3 < \sqrt{1 - \exp(-1/8)}$, suppose we have a sequence of samples $x_1,...,x_{n}\in\cX:=\cS\times\cA\times\Delta(\cS)$, with $x_i := (s^i,a^i,\mu^i)$ , such that for any $i\in[n]$, $x_i$ is $\alpha$-weakly-$\epsilon$-independent w.r.t. $x_1,...,x_{i-1}$. For any $i\in[n]$, there must exists $g_i^1, g_i^2 \in \cG$ such that, $f_{g_i^1},f_{g_i^2} \in \cF_\cG$, and
    \begin{align*}
        \epsilon^2 \geq & \sum_{j=1}^{i-1} \mH^2(\mP_{f_{g_i^1}}(\cdot|s^j,a^j,\mu^j), \mP_{f_{g_i^2}}(\cdot|s^j,a^j,\mu^j)) \\
        =& \sum_{j=1}^{i-1} h(\|f_{g_i^1}(s^j,a^j,\mu^j) - f_{g_i^2}(s^j,a^j,\mu^j)\|_{\Sigma^{-1}}^2) \\
        \geq & \sum_{j=1}^{i-1} \frac{1}{16\sigma^2}\|f_{g_i^1}(s^j,a^j,\mu^j) - f_{g_i^2}(s^j,a^j,\mu^j)\|_2^2.\\
        =& \frac{1}{16\sigma^2} \sum_{j=1}^{i-1} \sum_{t=1}^d |g_i^1(s^j,a^j,\mu^j,t) - g_i^2(s^j,a^j,\mu^j,t)|^2.
    \end{align*}
    and
    \begin{align*}
        \alpha^2\epsilon^2 <& \mH^2(\mP_{f_{g_i^1}}(\cdot|s^i,a^i,\mu^i), \mP_{f_{g_i^2}}(\cdot|s^i,a^i,\mu^i)) \\
        \leq & \frac{1}{8\sigma^2}\|f_{g_i^1}(s^i,a^i,\mu^i) - f_{g_i^2}(s^i,a^i,\mu^i)\|_2^2\\
        =& \frac{1}{8\sigma^2} \sum_{t=1}^d |g_i^1(s^i,a^i,\mu^i,t) - g_i^2(s^i,a^i,\mu^i,t)|^2\\
        \leq &  \frac{d}{8\sigma^2} \max_{t\in[d]}|g_i^1(s^i,a^i,\mu^i,t) - g_i^2(s^i,a^i,\mu^i,t)|^2.
    \end{align*}
    By choosing $\alpha = \sqrt{2}$, we know that, for any $i\in[n]$, $x_i$ is $4\sigma\epsilon$-independent w.r.t. $\{x_1,...,x_{i-1}\}$ on function class $\cF_\cG$. Therefore,
    \begin{align*}
        \dimE_{\alpha=\sqrt{2}}(\cP_\cG,\mH,\epsilon) \leq \dimE_{\alpha=1}(\cF_\cG,4\sigma\epsilon).
    \end{align*}
    Besides, considering the sequence $t_1,t_2,...,t_n$ with 
    $$
    t_i := \argmax_{t\in[d]}|g_i^1(s^i,a^i,\mu^i,t) - g_i^2(s^i,a^i,\mu^i,t)|^2,
    $$
    and choosing $\alpha = \sqrt{2d}$, we have $(s^{i},a^{i},\mu^{i},t_{i})$ is $4\sigma\epsilon$-independent w.r.t. $\{(s^1,a^1,\mu^1,t_1),...,(s^{i-1},a^{i-1},\mu^{i-1},t_{i-1})\}$ for any $i\in[n]$. Therefore,
    \begin{align*}
        \dimE_{\alpha=\sqrt{2d}}(\cP_\cG,\mH,\epsilon) \leq \overline{\dimE}(\cG,4\sigma\epsilon).
    \end{align*}
\end{proof}

\subsection{From Eluder Dimension to Regret Bound}\label{appx:ED_2_Regret}
\begin{lemma}\label{lem:finite_violation}
    Under the condition and notation as Def.~\ref{def:eps_independent}, consider a fixed $f^* \in \cF$, and suppose we have a sequence $\{f_k\}_{k=1}^K \in \cF$ and $\{x_k\}_{k=1}^K$ with $x_k := (s^k,a^k,\mu^k) \in \cS\times\cA\times\Delta(\cS)$ satisfying that, for all $k\in[K]$, $\sum_{i=1}^{k-1} \matD^2(f_k, f^*)(x_i) \leq \beta$. Then for all $k\in[K]$, and arbitrary $\epsilon > 0$, we have:
    \begin{align*}
        \sum_{k=1}^K \mathbb{I}[\matD(f_k, f^*)(x_k) > \alpha\epsilon] \leq (\frac{\beta}{\epsilon^2} + 1)\dimE_\alpha(\cF,\epsilon).
    \end{align*}
\end{lemma}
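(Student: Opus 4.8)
The plan is to adapt the classical ``width-to-Eluder-dimension'' counting argument of \citep{russo2013eluder} to our distributional-distance setting. Write $B := \{k \in [K] : \matD(f_k, f^*)(x_k) > \alpha\epsilon\}$ for the set of ``violating'' rounds and set $d := \dimE_\alpha(\cF,\epsilon)$. The goal is to partition the violating points $(x_k)_{k\in B}$ into a small number of \emph{buckets}, each of which forms an $\alpha$-weakly-$\epsilon$-independent sequence in the sense of Def.~\ref{def:eps_independent} (every element independent of its predecessors \emph{within the same bucket}). Since any such sequence has length at most $d$ by definition of the longest independent sequence, controlling the number of buckets $N$ immediately yields $|B| \le Nd$.

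First I would establish a per-round dependence bound: for any $k \in B$, the point $x_k$ can be $\alpha$-weakly-$\epsilon$-dependent on at most $\beta/\epsilon^2$ \emph{disjoint} subsequences of $\{x_1,\dots,x_{k-1}\}$. The key observation is that the pair $(f_k, f^*)$ serves as an explicit witness: since $\matD(f_k,f^*)(x_k) > \alpha\epsilon$, the failure of independence on a subsequence $S$ forces $\sum_{x_i \in S}\matD^2(f_k,f^*)(x_i) > \epsilon^2$. Summing this strict inequality over $m$ disjoint dependent subsequences and using the hypothesis $\sum_{i=1}^{k-1}\matD^2(f_k,f^*)(x_i) \le \beta$ gives $m\epsilon^2 < \beta$, hence $m < \beta/\epsilon^2$.

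Next I would run the greedy bucketing procedure over $k \in B$ in increasing order: assign $x_k$ to any existing bucket on whose current contents $x_k$ is $\alpha$-weakly-$\epsilon$-independent, and open a new bucket only when $x_k$ is dependent on every existing one. This maintains the invariant that each bucket is an independent sequence, hence has length $\le d$. A new ($N$-th) bucket is opened at round $k$ only when $x_k$ is dependent on all $N-1$ current buckets, which are disjoint subsequences of the predecessors $\{x_1,\dots,x_{k-1}\}$; the per-round bound then forces $N-1 < \beta/\epsilon^2$, so $N \le \beta/\epsilon^2 + 1$. Combining, $|B| \le Nd \le (\beta/\epsilon^2 + 1)\dimE_\alpha(\cF,\epsilon)$, as claimed.

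I expect the main obstacle to be purely bookkeeping: making sure the buckets stay \emph{disjoint subsequences drawn only from past rounds} so that the per-round dependence bound applies with exactly the $N-1$ current buckets, and tracking the strict-versus-nonstrict inequalities so the count lands exactly at $\beta/\epsilon^2 + 1$ rather than an off-by-one worse constant. A secondary point worth stating carefully is that the buckets built with the fixed tolerance $\epsilon$ are legitimate witnesses for $\dimE_\alpha(\cF,\epsilon)$ (whose definition permits any $\epsilon' \ge \epsilon$), so the length bound $d$ indeed applies to each bucket.
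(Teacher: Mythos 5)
Your proposal is correct and follows essentially the same route as the paper's proof: both rest on the per-round bound that the witness pair $(f_k,f^*)$ forces any dependent disjoint subsequence to absorb more than $\epsilon^2$ of squared distance (hence at most $\beta/\epsilon^2$ such subsequences), combined with greedy bucketing of the violating rounds into $\alpha$-weakly-$\epsilon$-independent sequences each of length at most $\dimE_\alpha(\cF,\epsilon)$. The only difference is bookkeeping — you bound the number of buckets directly, while the paper extracts a single element dependent on $\lfloor\kappa/\dimE_\alpha(\cF,\epsilon)\rfloor$ disjoint subsequences and compares — and both land on the same constant.
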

\begin{proof}
    We first show that, for some $k$, if $\matD(f_k, f^*)(x_k) > \alpha\epsilon$, then $x_k$ is $\epsilon$-dependent on at most $\beta/\epsilon^2$ disjoint sub-sequence in $\{x_1,...,x_{k-1}\}$. To see this, by Def.~\ref{def:Eluder_Dim}, if $\matD(f_k, f^*)(x_k) > \alpha\epsilon$ and $x_k$ is $\alpha$-weakly-$\epsilon$-dependent w.r.t. a sub-sequence $\{x_{k_1},...,x_{k_\kappa}\} \subset \{x_i\}_{i=1}^{k-1}$, we must have:
    \begin{align*}
        \sum_{i=1}^\kappa \matD^2(f_k,f^*)(x_{k_i}) \geq \epsilon^2.
    \end{align*}
    Given that $\sum_{i=1}^{k-1} \matD^2(f_k,f^*)(x_i) \leq \beta$, the number of such kind of disjoint sub-sequence is upper bounded by $\beta / \epsilon^2$.

    On the other hand, for arbitrary sub-sequence $\{x_{k_1},...,x_{k_\kappa}\} \subset \{x_i\}_{i=1}^{k-1}$, there exists $j\in[\kappa]$ such that $x_{k_j}$ is $\alpha$-weakly-$\epsilon$-dependent on $L:=\lfloor \kappa / \dimE_\alpha(\cF,\epsilon)\rfloor$ disjoint sub-sequence of $\{x_{k_1},...x_{k_{j-1}}\}$. To see this, we first construct $L$ bins $B_1=\{x_{k_1}\},...,B_{L}=\{x_{k_L}\}$. Then, we start with $j =L+1$, and if $x_{k_j}$ is already $\alpha$-weakly-$\epsilon$-dependent w.r.t. sequences $B_1,...,B_L$, then we finish directly. Otherwise, there must exists $B_l$ for some $l\in[L]$ such that $x_{k_j}$ is $\alpha$-weakly-$\epsilon$-independent w.r.t. $B_l$, and we set $B_l \gets B_l \cup \{x_{k_j}\}$ and $j \gets j + 1$. Because the MF-MBED is bounded, $B_l$ can not be larger than $\dimE_\alpha(\cF,\epsilon)$ if the above process continues. Therefore, the process must stop before $j \leq L \cdot \dimE_\alpha(\cF,\epsilon)  \leq \kappa$.

    For arbitrary fixed $k \in [K]$, we use $\{x_{k_1},...,x_{k_\kappa}\} \subset \{x_1,...,x_{k-1}\}$ to denote the elements such that $\matD(f_i,f^*)(x_{k_i}) > \alpha\epsilon$ for $i\in[\kappa]$. There must exists $j\in[\kappa]$, such that, on the one hand, $x_{k_j}$ is $\alpha$-weakly-$\epsilon$-dependent with at most $\beta/\epsilon^2$ disjoint sub-sequence of $\{x_{k_1},...x_{k_{j-1}}\}$, and on the other hand, $x_{k_j}$ is $\alpha$-weakly-$\epsilon$-dependent on at least $L:=\lfloor \kappa / \dimE_\alpha(\cF,\epsilon)\rfloor$ disjoint sub-sequence of $\{x_{k_1},...x_{k_{j-1}}\}$. Therefore, we have:
    \begin{align*}
        \frac{\beta}{\epsilon^2} \geq \lfloor \kappa / \dimE_\alpha(\cF,\epsilon)\rfloor \geq  \kappa  / \dimE_\alpha(\cF,\epsilon) - 1.
    \end{align*}
    which implies $\kappa \leq (\frac{\beta}{\epsilon^2} + 1)\dimE_\alpha(\cF,\epsilon)$.
\end{proof}

\LemEDtoReg*
\begin{proof}
    We first sort the sequence $\{\matD(f_k,f^*)(x_k)\}_{k=1}^K$ and denote them by $e_1,e_2,...,e_k$ with $e_1 \geq e_2...\geq e_K$.  
    For $t\in[K]$, given any $\epsilon > 0$, by Lem.~\ref{lem:finite_violation}, for those $e_t > \alpha\epsilon$, we should have:
    \begin{align*}
        t \leq \sum_{k=1}^K \mathbb{I}[e_k \geq e_t] \leq (\frac{\beta}{e_t^2} + 1)\dimE_\alpha(\cF,\epsilon).
    \end{align*}
    which implies $e_t \leq \sqrt{\frac{\beta\dimE_\alpha(\cF,\epsilon)}{t-\dimE_\alpha(\cF,\epsilon)}}$. Therefore, for any $\epsilon$, we have:
    \begin{align*}
        \sum_{k=1}^K e_k \leq & \alpha K \epsilon + \sum_{k=1}^K \mathbb{I}[e_k > \alpha\epsilon]e_k\\
        \leq & \alpha K \epsilon + (\dimE_\alpha(\cF,\epsilon) + 1) C + \sum_{k=\dimE_\alpha(\cF,\epsilon) + 2}^K \sqrt{\frac{\beta \dimE_\alpha(\cF,\epsilon)}{t - \dimE_\alpha(\cF,\epsilon)}} \tag{Recall the constant $C$ is the upper bound for $\matD(f,f^*)(x)$}\\
        \leq & \alpha K \epsilon + (\dimE_\alpha(\cF,\epsilon) + 1) C + \sqrt{\beta \dimE_\alpha(\cF,\epsilon)} \SumInt_{t=\dimE_\alpha(\cF,\epsilon) + 1}^K \frac{1}{\sqrt{t-\dimE_\alpha(\cF,\epsilon)}} dt\\
        =& O(\sqrt{\beta K \dimE_\alpha(\cF,\epsilon)} + \alpha K \epsilon).
    \end{align*}
\end{proof}

\section{Proofs for MLE Arguments}\label{appx:proof_for_MLE}
In this section, we only provide the proof for the MLE arguments of the algorithm flow for Mean Field Game, where in each iteration, we collect two data w.r.t. two policies in two modes.
One can easily obtain the proof for the DCP of MFC by directly assigning $\tpi = \pi$ and removing the discussion for data $\{\ts,\ta,\ts'\}$, so we omit it.

In the following, given the data collected at iteration $k$, $\cZ^k := \{\{s^k_h,a^k_h,s'^k_{h+1}\}_{h=1}^H \cup \{\ts^k_h,\ta^k_h,\ts'^k_{h+1}\}_{h=1}^H\}$, we use $f^{\pi^k,\tpi^k}_M(\cZ^k)$ to denote the conditional probability w.r.t. model $M$ with transition function $\{\mP_{T,h}\}_{h=1}^H$, i.e.:
\begin{align*}
    f^{\pi^k,\tpi^k}_M(\cZ^k) = \prod_{h\in[H]} \mP_{T,h}(s'^k_{h+1}|s^k_h,a^k_h,\mu^{\pi^k}_{M,h})\mP_{T,h}(\ts'^k_{h+1}|\ts^k_h,\ta^k_h,\mu^{\pi^k}_{M,h}).
\end{align*}
For the simplicity of notations, we divide the random variables in $\cZ^k$ into two parts depending on whether they are conditioned or not:
\begin{align*}
    \cZ^k_{cond} := \{(s_h^k,a_h^k)_{h=1}^H \cup (\ts_h^k,\ta_h^k)_{h=1}^H\},\quad \cZ^k_{pred} := \{(s'^k_{h+1})_{h=1}^H \cup (\ts'^k_{h+1})_{h=1}^H\}.
\end{align*}
Note that for different $h\in[H]$, $(s^k_h,a^k_h,s'^k_{h+1})$ or $(\ts^k_h,\ta^k_h,\ts'^k_{h+1})$ are sampled from different trajectories. Therefore, there is no correlation between $s^k_h,a^k_h$ (or $\ts^k_h,\ta^k_h$) with $s'^k_\ph,a'^k_\ph$ (or $\ts'^k_\ph,\ta'^k_\ph$) for those $h\neq \ph$.
\begin{lemma}\label{lem:MLE_Diff}
    In the following, for the data $\cZ^1,...,\cZ^k$ collected in Alg.~\ref{alg:MLE_MB_Alg} in $M^*$, for any $\delta\in(0,1)$:
    \begin{align*}
        \Pr(\max_{M\in\cM} \sum_{i=1}^k\log \frac{f^{\pi^i,\tpi^i}_M(\cZ^i)}{f^{\pi^i,\tpi^i}_{M^*}(\cZ^i)} \geq \log \frac{|\cM|K}{\delta}) \leq \delta,\quad \forall k \in [K].
    \end{align*}
\end{lemma}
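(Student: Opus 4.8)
The plan is to exhibit, for each fixed model $M\in\cM$, the accumulated likelihood ratio as a nonnegative martingale of mean one under the true sampling law, and then to combine Markov's inequality with a union bound over $\cM$ and over $k\in[K]$. First I would introduce the filtration $\cF_i:=\sigma(\cZ^1,\dots,\cZ^i)$ generated by the first $i$ batches of collected data. Since $\pi^i$ and $\tpi^i$ are computed from $\hat\cM^{i-1}$, they are $\cF_{i-1}$-measurable, and hence the induced densities $\mu^{\pi^i}_{M,h}$ and $\mu^{\pi^i}_{M^*,h}$ are deterministic given $\cF_{i-1}$. Writing $R_i(M):=f^{\pi^i,\tpi^i}_M(\cZ^i)/f^{\pi^i,\tpi^i}_{M^*}(\cZ^i)$, the target event coincides with $\{\prod_{i=1}^k R_i(M)\ge |\cM|K/\delta\}$ because $\sum_{i=1}^k\log R_i(M)=\log\prod_{i=1}^k R_i(M)$.

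The crucial step is to verify $\EE[R_i(M)\mid\cF_{i-1}]=1$. Conditioned on $\cF_{i-1}$, the $\pi^i$-trajectory contributing to $\cZ^i$ is generated by $M^*$, so its law is $\mu_1(s_1)\prod_h \pi^i(a_h\mid s_h)\,\mP_{T^*,h}(s_{h+1}\mid s_h,a_h,\mu^{\pi^i}_{M^*,h})$. When computing the conditional expectation of $R_i(M)$, the true transition factors $\mP_{T^*,h}$ cancel against the denominators appearing in $R_i(M)$, and the remaining summand is exactly
\[
\mu_1(s_1)\prod_h \pi^i(a_h\mid s_h)\,\mP_{T,h}(s_{h+1}\mid s_h,a_h,\mu^{\pi^i}_{M,h}),
\]
which is a valid probability measure over trajectories of $M$ driven by the fixed densities $\mu^{\pi^i}_{M,h}$, and therefore sums to one. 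The main obstacle --- and the feature specific to the mean-field setting --- is precisely this density mismatch: the surviving measure conditions on $M$'s own densities $\mu^{\pi^i}_{M,h}$ rather than the densities $\pi^i$ would self-consistently induce, yet normalization still holds because $\mP_{T,h}(\cdot\mid s,a,\mu)$ is a probability distribution over the next state for every fixed conditioning density, and each $\mu^{\pi^i}_{M,h}$ is constant under the conditional expectation. Running the identical telescoping argument on the $\tpi^i$-trajectory --- whose states evolve under the $\pi^i$-induced density by Def.~\ref{def:collection_process}, and which is conditionally independent of the $\pi^i$-trajectory given $\cF_{i-1}$ --- yields $\EE[R_i(M)\mid\cF_{i-1}]=1$.

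Consequently $L_k(M):=\prod_{i=1}^k R_i(M)$ is a nonnegative $\{\cF_k\}$-martingale with $\EE[L_k(M)]=1$, so Markov's inequality gives $\Pr(L_k(M)\ge |\cM|K/\delta)\le \delta/(|\cM|K)$ for each fixed pair $(M,k)$. A union bound over the $|\cM|$ models and the $K$ indices $k\in[K]$ caps the total failure probability at $\delta$, which establishes the bound simultaneously for all $k\in[K]$ and in particular the stated per-$k$ inequality. I expect essentially no difficulty in the concentration step itself; the entire subtlety sits in the conditional-expectation identity above, i.e.\ in confirming that the likelihood ratio remains mean-one despite evaluating each model's likelihood at its own induced density.
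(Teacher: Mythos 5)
Your proposal is correct and follows essentially the same route as the paper's proof: both establish that the accumulated likelihood ratio $\prod_{i=1}^k f^{\pi^i,\tpi^i}_M(\cZ^i)/f^{\pi^i,\tpi^i}_{M^*}(\cZ^i)$ has expectation one by peeling off the last factor via the tower property and using that $\mP_{T,h}(\cdot\mid s,a,\mu^{\pi^i}_{M,h})$ normalizes to one even though each model is evaluated at its own induced density, then conclude with Markov's inequality and a union bound over $\cM\times[K]$. The only cosmetic difference is that you phrase the mean-one property as a nonnegative martingale while the paper writes it as $\EE[\exp(\sum_i\log R_i)]=1$, which is the same computation.
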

\begin{proof}
    We denote $\EE_k := \EE[\cdot|\{(\pi^i,\tpi^i,\cZ^i)\}_{i=1}^{k-1}\cup \{\pi^k,\tpi^k\}, M^*]$.
First of all, for any $M \in \cM$, we have:
\begin{align*}
    \EE[\exp(\sum_{i=1}^k \log \frac{f^{\pi^i,\tpi^i}_M(\cZ^i)}{f^{\pi^i,\tpi^i}_{M^*}(\cZ^i)})]=&\EE[\exp(\sum_{i=1}^{k-1} \log \frac{f^{\pi^i,\tpi^i}_M(\cZ^i)}{f^{\pi^i,\tpi^i}_{M^*}(\cZ^i)})\EE_k[\exp(\log \frac{f^{\pi^k,\tpi^k}_{M}(\cZ^k)}{f^{\pi^k,\tpi^k}_{M^*}(\cZ^k)})]]\\
    =&\EE[\exp(\sum_{i=1}^{k-1} \log \frac{f^{\pi^i,\tpi^i}_M(\cZ^i)}{f^{\pi^i,\tpi^i}_{M^*}(\cZ^i)})\EE_k[\frac{f^{\pi^k,\tpi^k}_{M}(\cZ^k)}{f^{\pi^k,\tpi^k}_{M^*}(\cZ^k)}]]\\
    =& \EE[\exp(\sum_{i=1}^{k-1} \log \frac{f^{\pi^i,\tpi^i}_M(\cZ^i)}{f^{\pi^i,\tpi^i}_{M^*}(\cZ^i)})]\\
    =& 1.
\end{align*}
Here the last but two step is because:
\begin{align*}
    \EE_k[\frac{f^{\pi^k,\tpi^k}_{M}(\cZ^k)}{f^{\pi^k,\tpi^k}_{M^*}(\cZ^k)}] =& \EE_{\cZ_{cond}^k}[\EE_{\cZ_{pred}^k}[\frac{f^{\pi^k,\tpi^k}_{M}(\cZ^k)}{f^{\pi^k,\tpi^k}_{M^*}(\cZ^k)}|\cZ_{cond}^k, \vecmu^{\pi^k}_{M^*},M^*]|\pi^k,\tpi^k, M^*] \\
    =& \EE_{\cZ_{cond}^k}[\sum_{\cZ^k_{pred}} f^{\pi^k,\tpi^k}_{M^*}(\cZ^k) \frac{f^{\pi^k,\tpi^k}_{M}(\cZ^k)}{f^{\pi^k,\tpi^k}_{M^*}(\cZ^k)}|\pi^k,\tpi^k, M^*]\\
    =&\EE_{\cZ_{cond}^k}[\sum_{\cZ_{pred}} f^{\pi^k,\tpi^k}_{M}(\cZ^k)|\pi^k,\tpi^k, M^*] = \EE_{\cZ^k_{cond}}[1||\pi^k,\tpi^k, M^*]=1.
\end{align*}
where $\sum_{\cZ^k_{pred}}$ means summation over all possible value of $\cZ^k_{pred}$.

Therefore, by Markov Inequality, for any fixed $M\in\cM$ and fixed $k\in[K]$, and arbitrary $\delta\in(0,1)$, we have:
\begin{align*}
    \Pr(\sum_{i=1}^k \log \frac{f^{\pi^i,\tpi^i}_M(\cZ^i)}{f^{\pi^i,\tpi^i}_{M^*}(\cZ^i)} \geq \log \frac{1}{\delta}) \leq \delta \cdot \EE[\exp(\sum_{i=1}^k \log \frac{f^{\pi^i,\tpi^i}_M(\cZ^i)}{f^{\pi^i,\tpi^i}_{M^*}(\cZ^i)})] = \delta.
\end{align*}
By taking union bound over all $M \in \cM$ and all $k\in[K]$, we have:
\begin{align*}
    \Pr(\max_{M\in\cM} \sum_{i=1}^k\log \frac{f^{\pi^i,\tpi^i}_M(\cZ^i)}{f^{\pi^i,\tpi^i}_{M^*}(\cZ^i)} \geq \log \frac{|\cM|K}{\delta}) \leq \delta,\quad \forall k \in [K].
\end{align*}
\end{proof}
\noindent Given dataset $D^k:=\{(\pi^i, \tpi^i, \cZ^i)\}_{i=1}^k$, we use $\bD^k$ to denote the ``tangent'' sequence $\{(\pi^i,\tpi^i,\bar{\cZ}^i)\}_{i=1}^k$ where the policies are the same as $D^k$ while each $\bar{\cZ}^i$ is independently sampled from the same distribution as $\cZ^i$ conditioning on $\pi^i$ and $\tpi^i$.
\begin{lemma}\label{lem:expectation_of_loss}
    Let $l:\Pi\times\Pi\times(\cS\times\cA\times\cS)^H\times (\cS\times\cA\times\cS)^H \rightarrow \mR$ be a real-valued loss function which maps from the joint space of two policies and space of $\cZ^k$ to $\mR$. Define $L(D^k):=\sum_{i=1}^k l(\pi^i,\tpi^i,\cZ^i)$ and $L(\bD^k):=\sum_{i=1}^k l(\pi^i,\tpi^i,\bar{\cZ}^i)$. Then, for arbitrary $k\in[K]$,
    \begin{align*}
        \EE[\exp(L(D^k) - \log \EE[\exp(L(\bD^k))|D^k])] = 1.
    \end{align*}
\end{lemma}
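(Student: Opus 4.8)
The plan is to exponentiate and recognize the quantity inside the expectation as the terminal value of a nonnegative martingale with unit mean. First I would set up the filtration: let $\cG_i$ be the $\sigma$-algebra generated by the history $D^{i-1} := \{(\pi^j,\tpi^j,\cZ^j)\}_{j=1}^{i-1}$ together with the policies $(\pi^i,\tpi^i)$ chosen at round $i$ (these are computable from $D^{i-1}$ up to exogenous randomization, so $\cG_i$ is a legitimate filtration). The crucial structural fact is that, conditioned on $\cG_i$, both the realized observation $\cZ^i$ and the tangent draw $\bar{\cZ}^i$ follow the same law $P(\cdot \mid \pi^i,\tpi^i)$, and across $i$ the tangent observations $\bar{\cZ}^i$ are mutually independent given the policies.

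Next I would factor the inner conditional expectation. Since the tangent observations are fresh independent draws whose laws depend on $D^k$ only through the reused policies, I can write
\begin{align*}
    \EE[\exp(L(\bD^k)) \mid D^k] = \prod_{i=1}^k g_i, \qquad g_i := \EE_{\bar{\cZ}^i \sim P(\cdot|\pi^i,\tpi^i)}[\exp(l(\pi^i,\tpi^i,\bar{\cZ}^i))],
\end{align*}
where each $g_i$ is $\cG_i$-measurable. Consequently
\begin{align*}
    \exp\Big(L(D^k) - \log \EE[\exp(L(\bD^k)) \mid D^k]\Big) = \prod_{i=1}^k \frac{\exp(l(\pi^i,\tpi^i,\cZ^i))}{g_i} =: M_k.
\end{align*}

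Then I would run the standard telescoping/martingale argument. Setting $X_i := \exp(l(\pi^i,\tpi^i,\cZ^i))/g_i$, the fact that $\cZ^i \mid \cG_i$ has the same law as $\bar{\cZ}^i \mid \cG_i$ gives
\begin{align*}
    \EE[X_i \mid \cG_i] = \frac{1}{g_i}\EE[\exp(l(\pi^i,\tpi^i,\cZ^i)) \mid \cG_i] = \frac{g_i}{g_i} = 1,
\end{align*}
so that $M_k = M_{k-1} X_k$ and, by the tower property, $\EE[M_k] = \EE[M_{k-1}\,\EE[X_k \mid \cG_k]] = \EE[M_{k-1}]$. Iterating down to $M_0 = 1$ yields $\EE[M_k] = 1$, which is exactly the claim.

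The main obstacle is the decoupling/factorization step: making precise that conditioning on the entire realized dataset $D^k$ renders the tangent sequence conditionally independent with conditional laws depending on $D^k$ only through the $\cG_i$-measurable policies. This requires care because the policies $(\pi^i,\tpi^i)$ are adaptive (data-dependent), so the tangent observations are \emph{not} unconditionally i.i.d.; the argument hinges on applying the tower property with respect to $\cG_i$ and on the defining property of the tangent sequence, namely that it reuses the same policies while drawing fresh observations. Once this factorization is in place, the remainder is a routine unit-mean martingale computation.
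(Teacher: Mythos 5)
Your proposal is correct and follows essentially the same route as the paper: the paper's quantity $E^i := \EE_{\cZ^i}[\exp(l(\pi^i,\tpi^i,\cZ^i))\mid\pi^i,\tpi^i,M^*]$ is exactly your $g_i$, the factorization $\EE[\exp(L(\bD^k))\mid D^k]=\prod_i E^i$ from the conditional independence of the tangent draws is the same decoupling step, and the paper's iterated peeling of the last factor via the tower property is precisely your unit-mean martingale telescoping, just written without the explicit filtration language. No gaps; the argument matches.
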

\begin{proof}
    We denote $E^i := \EE_{\cZ^i}[\exp(l(\pi^i,\tpi^i,\cZ^i))|\pi^i,\tpi^i,M^*]$. By definition of $\bar{\cZ}^i$, we should also have:
    \begin{align*}
        \EE_{\bD^k}[\exp(\sum_{i=1}^k l(\pi^i,\tpi^i,\bar{\cZ}^i))|D^k] = \prod_{i=1}^k E^i.
    \end{align*}
    Therefore,
    \begin{align*}
        &\EE_{D^k}[\exp(L(D^k) - \log \EE_{\bD^k}[\exp(L(\bD^k))|D^k])]\\
        =&\EE_{D^{k-1}\cup\{\pi^k,\tpi^k\}}[\EE_{\cZ^k}[\frac{\exp(\sum_{i=1}^{k}l(\pi^i,\tpi^i,\cZ^i))}{\EE_{\bD^k}[\exp(\sum_{i=1}^k l(\pi^i,\tpi^i,\bar{\cZ}^i))|D^k]} |D^{k-1}\cup\{\pi^k,\tpi^k\}]]\\
        =&\EE_{D^{k-1}\cup\{\pi^k,\tpi^k\}}[\EE_{\cZ^k}[\frac{\exp(\sum_{i=1}^{k}l(\pi^i,\tpi^i,\cZ^i))}{\prod_{i=1}^k E^i} |D^{k-1}\cup\{\pi^k,\tpi^k\}]]\\
        =&\EE_{D^{k-1}\cup\{\pi^k,\tpi^k\}}[\frac{\exp(\sum_{i=1}^{k-1}l(\pi^i,\tpi^i,\cZ^i))}{\prod_{i=1}^{k-1} E^i} \cdot \EE_{\cZ^k}[\frac{l(\pi^k,\tpi^k,\cZ^k)}{E^k} |D^{k-1}\cup\{\pi^k,\tpi^k\}]]\\
        =&\EE_{D^{k-1}\cup\{\pi^k,\tpi^k\}}[\frac{\exp(\sum_{i=1}^{k-1}l(\pi^i,\tpi^i,\cZ^i))}{\prod_{i=1}^{k-1} E^i}] \\
        =&\EE_{D^{k-1}}[\frac{\exp(\sum_{i=1}^{k-1}l(\pi^i,\tpi^i,\cZ^i))}{\prod_{i=1}^{k-1} E^i}]  = ... = 1.
    \end{align*}
\end{proof}
\ThmMLE*
\begin{proof}
Given a model $M\in\cM$, we consider the loss function:
\begin{align*}
    l_M(\pi, \tpi, \cZ) := 
        \begin{cases}
            \frac{1}{2}\log\frac{f^{\pi,\tpi}_M(\cZ)}{f^{\pi,\tpi}_{M^*}(\cZ)},& \text{if}~ f^{\pi,\tpi}_{M^*}(\cZ) \neq 0\\
            0,              & \text{otherwise}
        \end{cases}
\end{align*}
Define $M_{\MLE}^k \gets \arg\max_{M\in\cM} l_{\MLE}^k(M)$. Considering the event $\cE$:
\begin{align*}
    \cE := \{l_{\MLE}^k(M_{\MLE}^k) - l_{\MLE}^k(M^*) \leq \log \frac{2|\cM|KH}{\delta},\quad \forall k\in[K] \}.
\end{align*}
and the event $\cE'$ defined by:
\begin{align*}
    \cE' := \{-\log \EE_{\bD^k}[\exp L_M(\bD^k)|D^k] \leq - L_M(D^k) + \log (\frac{2|\cM|KH}{\delta}),\quad \forall M\in\cM, k\in[K]\}.
\end{align*}
where we define $L_M(D^k):=\sum_{i=1}^k l_M(\pi^i,\tpi^i,\cZ^i)$ and $L_M(\bD^k):=\sum_{i=1}^k l_M(\pi^i,\tpi^i,\bar{\cZ}^i)$. 
By Lem.~\ref{lem:MLE_Diff}, we have $\Pr(\cE) \geq 1-\frac{\delta}{2H}$.
Besides, by applying Lem.~\ref{lem:expectation_of_loss} on $l_M$ defined above and applying Markov inequality and the union bound over all $M\in\cM$ and $k\in[K]$, we have $\Pr(\cE') \geq 1-\frac{\delta}{2H}$.

On the event $\cE \cap \cE'$, for any $k\in[K]$, we have $M^* \in \hat\cM^k$, and for any $M \in \hat\cM^k$:
\begin{align*}
    -\log \EE_{\bD^k}[\exp L_{M}(\bD^k)|D^k] \leq& - L_{M}(D^k) + \log (\frac{2|\cM|KH}{\delta}) \\
    = & l_{\MLE}^k(M^*) - l_{\MLE}^k(M) + \log (\frac{2|\cM|KH}{\delta}) \\
    \leq & l_{\MLE}^k(M^k_\MLE) - l_{\MLE}^k(M) + \log (\frac{2|\cM|KH}{\delta}) \\
    \leq & 2 \log (\frac{2|\cM|KH}{\delta}).
\end{align*}
Therefore, for any $k$ and any $M\in\hat\cM^k$,
\begin{align*}
    2 \log (\frac{2|\cM|KH}{\delta}) \geq & -\sum_{i=1}^k \log \EE_{\cZ^i}[\sqrt{\frac{f^{\pi^i,\tpi^i}_{M}(\cZ^i)}{f^{\pi^i,\tpi^i}_{M^*}(\cZ^i)}}|\pi^i,\tpi^i,M^*]\\
    \geq & \sum_{i=1}^k 1 - \EE_{\cZ^i}[\sqrt{\frac{f^{\pi^i,\tpi^i}_{M}(\cZ^i)}{f^{\pi^i,\tpi^i}_{M^*}(\cZ^i)}}|\pi^i,\tpi^i,M^*] \tag{$-\log x \geq 1-x$}\\
    =&\sum_{i=1}^k \EE_{\cZ^i_{cond}}[1 - \sum_{\cZ^i_{pred}} \sqrt{f^{\pi^i,\tpi^i}_{M}(\cZ^i)f^{\pi^i,\tpi^i}_{M^*}(\cZ^i)}|\pi^i,\tpi^i,M^*].
\end{align*}
For any $i\in[k]$ and for arbitrary random variable $s_h^i,a_h^i \in \cZ^i_{cond}$ and $s_{h+1}'^i \in \cZ^i_{pred}$, we have:
\begin{align*}
    &\EE_{\cZ^i_{cond}}[1 - \sum_{\cZ^i_{pred}} \sqrt{f^{\pi^i,\tpi^i}_{M}(\cZ^i)f^{\pi^i,\tpi^i}_{M^*}(\cZ^i)}|\pi^i,\tpi^i,M^*] \\
    =& \EE_{\cZ^i_{cond}}[1 - \SumInt_{s_{h+1}'^i}\sqrt{\mP_{T,h}(s_{h+1}'^i|s_h^i,a_h^i,\mu^{\pi^i}_{M,h})\mP_{T^*,h}(s_{h+1}'^i|s_h^i,a_h^i,\mu^{\pi^i}_{M^*,h})} \sum_{\cZ^i_{pred}\setminus\{s_{h+1}'^i\}} \sqrt{f^{\pi^i,\tpi^i}_{M}(\cZ^i)f^{\pi^i,\tpi^i}_{M^*}(\cZ^i)}|\pi^i,\tpi^i,M^*] \tag{Independence between $s_{h+1}'^i$ and $\cZ^i\setminus\{s_{h+1}'^i\}$ conditioning on $\cZ^i_{cond}$} \\
    \geq & \EE_{\cZ^i_{cond}}[1 - \SumInt_{s_{h+1}'^i}\sqrt{\mP_{T,h}(s_{h+1}'^i|s_h^i,a_h^i,\mu^{\pi^i}_{M,h})\mP_{T^*,h}(s_{h+1}'^i|s_h^i,a_h^i,\mu^{\pi^i}_{M^*,h})}|\pi^i,\tpi^i,M^*] \tag{$\sqrt{ab} \leq \frac{a+b}{2}$}\\
    = & \EE_{s_h^i,a_h^i}[1 - \SumInt_{s_{h+1}'^i}\sqrt{\mP_{T,h}(s_{h+1}'^i|s_h^i,a_h^i,\mu^{\pi^i}_{M,h})\mP_{T^*,h}(s_{h+1}'^i|s_h^i,a_h^i,\mu^{\pi^i}_{M^*,h})}|\pi^i,\tpi^i,M^*]\\
    = & \EE_{\pi^i,M^*}[\mH^2(\mP_{T,h}(\cdot|s_h^i,a_h^i,\mu^{\pi^i}_{M,h}),~\mP_{T^*,h}(\cdot|s_h^i,a_h^i,\mu^{\pi^i}_{M^*,h}))].
\end{align*}
Similarly, for arbitrary random variable $\ts_h^i,\ta_h^i \in \cZ^i_{cond}$ and $\ts_{h+1}'^i \in \cZ^i_{pred}$, we have:
\begin{align*}
    &\EE_{\cZ^i_{cond}}[1 - \sum_{\cZ^i_{pred}} \sqrt{f^{\pi^i,\tpi^i}_{M}(\cZ^i)f^{\pi^i,\tpi^i}_{M^*}(\cZ^i)}|\pi^i,\tpi^i,M^*]\geq \EE_{\tpi^i,M^*|\vecmu^{\pi^i}_{M^*}}[\mH^2(\mP_{T,h}(\cdot|\ts_h^i,\ta_h^i,\mu^{\pi^i}_{M,h}),~\mP_{T^*,h}(\cdot|\ts_h^i,\ta_h^i,\mu^{\pi^i}_{M^*,h}))].
\end{align*}
Therefore, on the event $\cE'$, for any $k\in[K]$, $M\in\hat\cM^{k}$, and a fixed $h\in[H]$, we have:
\begin{align*}
    2 \log (\frac{2|\cM|KH}{\delta})\geq & \sum_{i=1}^k \EE_{\pi^i,M^*}[\mH^2(\mP_{T,h}(\cdot|s_h^i,a_h^i,\mu^{\pi^i}_{M,h}),~\mP_{T^*,h}(\cdot|s_h^i,a_h^i,\mu^{\pi^i}_{M^*,h}))]\\
    2 \log (\frac{2|\cM|KH}{\delta}) \geq & \sum_{i=1}^k \EE_{\tpi^i,M^*|\vecmu^{\pi^i}_{M^*}}[\mH^2(\mP_{T,h}(\cdot|\ts_h^i,\ta_h^i,\mu^{\pi^i}_{M,h}),~\mP_{T^*,h}(\cdot|\ts_h^i,\ta_h^i,\mu^{\pi^i}_{M^*,h}))].
\end{align*}
By taking the union bound for all $h\in[H]$, we finish the proof for DCP of MFG. The analysis and results for MFC is similar and easier so we omit it here.
\end{proof}
\begin{corollary}\label{coro:TV_square_bound}
    Under the same event in Thm.~\ref{thm:MLE_Gaurantee}, for any $k\in[K]$, $M\in\hat\cM^{k}$, and a fixed $h\in[H]$, we have:
    \begin{align*}
        \sum_{i=1}^k \EE_{\pi^i,M^*}[\TV^2(\mP_{T,h}(\cdot|s_h^i,a_h^i,\mu^{\pi^i}_{M^*,h}),~\mP_{T^*,h}(\cdot|s_h^i,a_h^i,\mu^{\pi^i}_{M^*,h}))] \leq& (4 + 8 L_T^2 H^2) \log (\frac{2|\cM|KH}{\delta}),\\
        \sum_{i=1}^k \EE_{\tpi^i,M^*|\vecmu^{\pi^i}_{M^*}}[\TV^2(\mP_{T,h}(\cdot|\ts_h^i,\ta_h^i,\mu^{\pi^i}_{M^*,h}),~\mP_{T^*,h}(\cdot|\ts_h^i,\ta_h^i,\mu^{\pi^i}_{M^*,h}))] \leq& (4 + 8 L_T^2 H^2) \log (\frac{2|\cM|KH}{\delta}).
    \end{align*}
\end{corollary}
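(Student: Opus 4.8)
The statement differs from Theorem~\ref{thm:MLE_Gaurantee} only in the density feeding the first transition: the corollary conditions $\mP_{T,h}$ on $\mu^{\pi^i}_{M^*,h}$, whereas Theorem~\ref{thm:MLE_Gaurantee} conditions it on the model's own density $\mu^{\pi^i}_{M,h}$. The plan is therefore to absorb this density shift through the triangle inequality for the Hellinger metric and to pay for it with the Lipschitz continuity of the transition in the density argument. Concretely, for each $(i,h)$ I would insert the intermediate distribution $\mP_{T,h}(\cdot|s_h^i,a_h^i,\mu^{\pi^i}_{M,h})$, apply $\mH(P,R)\le \mH(P,Q)+\mH(Q,R)$ and $(a+b)^2\le 2a^2+2b^2$, obtaining for the target summand the bound $2C_{i,h}+2B_{i,h}$, where $B_{i,h}:=\EE_{\pi^i,M^*}[\mH^2(\mP_{T,h}(\cdot|s_h^i,a_h^i,\mu^{\pi^i}_{M,h}),\mP_{T^*,h}(\cdot|s_h^i,a_h^i,\mu^{\pi^i}_{M^*,h}))]$ is exactly the quantity controlled by Theorem~\ref{thm:MLE_Gaurantee} (so $\sum_i B_{i,h}\le 2\log\frac{2|\cM|KH}{\delta}$), and $C_{i,h}:=\EE_{\pi^i,M^*}[\mH^2(\mP_{T,h}(\cdot|s_h^i,a_h^i,\mu^{\pi^i}_{M^*,h}),\mP_{T,h}(\cdot|s_h^i,a_h^i,\mu^{\pi^i}_{M,h}))]$ measures only the density shift within the single model $M$.

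For $C_{i,h}$ I would invoke the Hellinger version of the Lipschitz assumption (Assump.~\ref{assump:lipschitz}), namely $\mH(\mP_{T,h}(\cdot|s,a,\mu),\mP_{T,h}(\cdot|s,a,\mu'))\le L_T\|\mu-\mu'\|_\TV$, which gives $C_{i,h}\le L_T^2\,(\delta_h^{(i)})^2$ with $\delta_h^{(i)}:=\|\mu^{\pi^i}_{M,h}-\mu^{\pi^i}_{M^*,h}\|_\TV$ (the bound is uniform in $(s,a)$, so the expectation drops out). Everything then reduces to showing $\sum_{i=1}^k (\delta_h^{(i)})^2\le 4H^2\log\frac{2|\cM|KH}{\delta}$, which would yield $\sum_i C_{i,h}\le 4L_T^2H^2\log(\cdot)$ and hence the claimed $(4+8L_T^2H^2)\log(\cdot)$.

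The crux---and the step I expect to be the main obstacle---is bounding $\delta_h^{(i)}$ \emph{without} the exponential $(1+L_T)^h$ factor that a naive density-propagation recursion produces. The trick (the same one underlying Eq.~\eqref{eq:model_diff_1}) is to decompose $\mu^{\pi^i}_{M,h+1}-\mu^{\pi^i}_{M^*,h+1}=\Gamma^{\pi^i}_{M,h}(\mu^{\pi^i}_{M,h})-\Gamma^{\pi^i}_{M^*,h}(\mu^{\pi^i}_{M^*,h})$ by adding and subtracting $\sum_{s,a}\mu^{\pi^i}_{M^*,h}(s)\pi^i(a|s)\mP_{T,h}(\cdot|s,a,\mu^{\pi^i}_{M,h})$, so that the ``weight difference'' $\mu^{\pi^i}_{M,h}-\mu^{\pi^i}_{M^*,h}$ is transported by the \emph{same} kernel $\mP_{T,h}(\cdot|\cdot,\cdot,\mu^{\pi^i}_{M,h})$ (a Markov kernel, hence TV-non-expansive, contributing a factor exactly $1$ rather than $1+L_T$), while all density dependence is pushed into the residual $\EE_{\pi^i,M^*}[\|\mP_{T,h}(\cdot|s,a,\mu^{\pi^i}_{M,h})-\mP_{T^*,h}(\cdot|s,a,\mu^{\pi^i}_{M^*,h})\|_\TV]$. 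Unrolling from $\delta_1^{(i)}=0$ then gives the exponential-free bound $\delta_h^{(i)}\le\sum_{\ph=1}^{h-1}\bar e_\ph^{(i)}$ with $\bar e_\ph^{(i)}:=\EE_{\pi^i,M^*}[\|\mP_{T,\ph}(\cdot|s_\ph,a_\ph,\mu^{\pi^i}_{M,\ph})-\mP_{T^*,\ph}(\cdot|s_\ph,a_\ph,\mu^{\pi^i}_{M^*,\ph})\|_\TV]$.

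Finally I would close the estimate. Cauchy--Schwarz over the $\le H$ time indices gives $(\delta_h^{(i)})^2\le H\sum_{\ph<h}(\bar e_\ph^{(i)})^2$, and $\matDTV\le\sqrt2\,\mH$ together with Jensen gives $(\bar e_\ph^{(i)})^2\le 2B_{i,\ph}$; summing over $i$ and applying Theorem~\ref{thm:MLE_Gaurantee} at each $\ph$ yields $\sum_i(\delta_h^{(i)})^2\le H\sum_{\ph<h}\sum_i 2B_{i,\ph}\le 4H^2\log\frac{2|\cM|KH}{\delta}$, as needed. The second ($\tpi$) inequality follows identically, since $\delta_h^{(i)}$ depends only on $\pi^i,M,M^*$ (not on $\tpi^i$) and the intermediate MLE bound on the $\tpi$-trajectories is supplied by the second display of Theorem~\ref{thm:MLE_Gaurantee}.
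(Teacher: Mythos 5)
Your proposal is correct and follows essentially the same route as the paper: triangle inequality for $\mH$ plus $(a+b)^2\le 2a^2+2b^2$, the Hellinger--Lipschitz bound to reduce the density-shift term to $L_T^2\|\mu^{\pi^i}_{M,h}-\mu^{\pi^i}_{M^*,h}\|_\TV^2$, the exponential-free density recursion (this is exactly Eq.~\eqref{eq:density_diff1} of Lem.~\ref{lem:density_est_err}, which you re-derive inline), and Cauchy--Schwarz with $\matDTV\le\sqrt{2}\mH$ to land on the same $(4+8L_T^2H^2)$ constant. The only cosmetic caveat is that, like the paper, you invoke Lipschitzness of the transition in Hellinger distance, which the displayed Assump.~\ref{assump:lipschitz} states only for $\TV$ but which the paper's appendix remarks make clear is the intended hypothesis.
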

\begin{proof}
    By Assump.~\ref{assump:lipschitz}, for any $i$, we have:
    \begin{align*}
        & \EE_{\pi^i,M^*}[\TV^2(\mP_{T,h}(\cdot|s_h^i,a_h^i,\mu^{\pi^i}_{M^*,h}),~\mP_{T^*,h}(\cdot|s_h^i,a_h^i,\mu^{\pi^i}_{M^*,h}))] \\
        \leq & 2\EE_{\pi^i,M^*}[\TV^2(\mP_{T,h}(\cdot|s_h^i,a_h^i,\mu^{\pi^i}_{M,h}),~\mP_{T^*,h}(\cdot|s_h^i,a_h^i,\mu^{\pi^i}_{M^*,h}))] + 2L_T^2\|\mu^{\pi^i}_{M,h} - \mu^{\pi^i}_{M^*,h}\|_\TV^2 \\
        \leq & 2\EE_{\pi^i,M^*}[\TV^2(\mP_{T,h}(\cdot|s_h^i,a_h^i,\mu^{\pi^i}_{M,h}),~\mP_{T^*,h}(\cdot|s_h^i,a_h^i,\mu^{\pi^i}_{M^*,h}))] \\
        & + 4L_T^2 H \EE_{\pi,M}[\sum_{\ph=1}^{h-1} \TV^2(\mP_{T,\ph}(\cdot|s_\ph^i,a_\ph^i,\mu^{\pi^i}_{M,\ph}),~\mP_{T^*,\ph}(\cdot|s_\ph^i,a_\ph^i,\mu^{\pi^i}_{M^*,\ph}))]. \tag{Lem.~\ref{lem:density_est_err}; Cauchy-Schwarz inequality;} 
    \end{align*}
    Therefore, on the event $\cE'$, for any $k\in[K]$, $M\in\hat\cM^k$, and a fixed $h\in[H]$, we have:
    \begin{align*}
        \sum_{i=1}^k \EE_{\pi^i,M^*}[\TV^2(\mP_{T,h}(\cdot|s_h^i,a_h^i,\mu^{\pi^i}_{M^*,h}),~\mP_{T^*,h}(\cdot|s_h^i,a_h^i,\mu^{\pi^i}_{M^*,h}))] \leq (4 + 8 L_T^2 H^2) \log (\frac{2|\cM|KH}{\delta}).
    \end{align*}
    Similarly, we have:
    \begin{align*}
        &\EE_{\tpi^i,M^*|\vecmu^{\pi^i}_{M^*}}[\TV^2(\mP_{T,h}(\cdot|\ts_h^i,\ta_h^i,\mu^{\pi^i}_{M,h}),~\mP_{T^*,h}(\cdot|\ts_h^i,\ta_h^i,\mu^{\pi^i}_{M^*,h}))]\\
        \leq & 2\EE_{\tpi^i,M^*|\vecmu^{\pi^i}_{M^*}}[\TV^2(\mP_{T,h}(\cdot|\ts_h^i,\ta_h^i,\mu^{\pi^i}_{M^*,h}),~\mP_{T^*,h}(\cdot|\ts_h^i,\ta_h^i,\mu^{\pi^i}_{M^*,h}))] + 2 L_T^2 \|\mu^{\pi^i}_{M,h} - \mu^{\pi^i}_{M^*,h}\|_\TV^2.
    \end{align*}
    By similar discussion, we have:
    \begin{align*}
        \sum_{i=1}^k \EE_{\tpi^i,M^*|\vecmu^{\pi^i}_{M^*}}[\TV^2(\mP_{T,h}(\cdot|\ts_h^i,\ta_h^i,\mu^{\pi^i}_{M^*,h}),~\mP_{T^*,h}(\cdot|\ts_h^i,\ta_h^i,\mu^{\pi^i}_{M^*,h}))] \leq (4 + 8 L_T^2 H^2) \log (\frac{2|\cM|KH}{\delta}).
    \end{align*}
\end{proof}

\begin{restatable}{theorem}{ThmAccuDiff}[Accumulative Model Difference]\label{thm:accumu_model_diff}
    For any $\delta \in (0,1)$, with probability $1-3\delta$, for any sequence $\{\hM^{k+1}\}_{k\in[K]} $ with $\hat M^{k+1}\in \hat\cM^{k+1}$ for all $k\in[K]$, and any $h\in[H]$, we have:
    \begin{align*}
        &\sum_{k=1}^{K} \EE_{\pi^{k+1},M^*}[\|\mP_{\hat{T}^{k+1},h}(\cdot|s_h,a_h,\mu^{\pi^{k+1}}_{M^*,h})- \mP_{T^*,h}(\cdot|s_h,a_h,\mu^{\pi^{k+1}}_{M^*,h})\|_\TV]\\
        & \quad\quad = O\Big((1 + L_TH)\sqrt{ K \dimE_\alpha(\cM, \epsilon_0) \log\frac{2|\cM|KH}{\delta} }+ \alpha K\epsilon_0\Big)\\
        &\sum_{k=1}^{K} \EE_{\tpi^{k+1},M^*|\vecmu^{\pi^{k+1}}_{M^*}}[\|\mP_{\hat{T}^{k+1},h}(\cdot|s_h,a_h,\mu^{\pi^{k+1}}_{M^*,h}) - \mP_{T^*,h}(\cdot|s_h,a_h,\mu^{\pi^{k+1}}_{M^*,h}) \|_\TV]\\
        & \quad\quad =  O\Big((1 + L_TH)\sqrt{ K \dimE_\alpha(\cM, \epsilon_0) \log\frac{2|\cM|KH}{\delta} } + \alpha K\epsilon_0\Big).
    \end{align*}
\end{restatable}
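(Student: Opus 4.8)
The plan is to fix the step index $h\in[H]$ and prove the $\pi^{k+1}$-bound first; the $\tpi^{k+1}$-bound is entirely symmetric. The argument combines three ingredients: the in-sample guarantee of MLE, the Eluder argument of Lemma~\ref{lem:Eluder_Bound}, and a concentration step bridging realized transitions with their expectations. First I would condition on the event of Theorem~\ref{thm:MLE_Gaurantee} (probability $1-\delta$), which already performs the crucial same-density conversion: the corollary stated immediately after it gives, for every $k$ and every $M\in\hat\cM^{k+1}$, in particular $\hat M^{k+1}$, the bound $\sum_{i=1}^{k+1}\EE_{\pi^i,M^*}[\mH^2(\mP_{\hat T^{k+1},h}(\cdot|s_h,a_h,\mu^{\pi^i}_{M^*,h}),\mP_{T^*,h}(\cdot|s_h,a_h,\mu^{\pi^i}_{M^*,h}))]\leq\beta$ with $\beta:=(4+8L_T^2H^2)\log\frac{2|\cM|KH}{\delta}$. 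The key point is that this is already stated for the \emph{common} density $\mu^{\pi^i}_{M^*,h}$, which is exactly the density appearing on the left-hand side of the theorem, so no further density conversion is required.

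Next I would cast this into Lemma~\ref{lem:Eluder_Bound} with $f^*=\mP_{T^*,h}$, $f_k=\mP_{\hat T^{k+1},h}$, and data points $x_k=(s_h,a_h,\mu^{\pi^{k+1}}_{M^*,h})$ taken along the trajectory collected with $\pi^{k+1}$. To handle the $\min_{\matD\in\{\matDTV,\mH\}}$ in Definition~\ref{def:Eluder_Dim}, I would let $\matD^\star$ be the distance achieving the minimum at this $h$ and run the Eluder argument with it. Since $\matDTV\leq\sqrt2\,\mH$ pointwise, the corollary yields $\sum_i\EE[\matD^{\star 2}(f_{k+1},f^*)]\leq 2\beta$ in either case, while the target $\|\cdot-\cdot\|_\TV$ is controlled by $\sqrt2\,\matD^\star$; the nesting $\hat M^{k+1}\in\hat\cM^{k+1}$ ensures the history hypothesis $\sum_{i<k}\matD^{\star 2}(f_k,f^*)(x_i)\leq O(\beta)$. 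Applying Lemma~\ref{lem:Eluder_Bound} with this $\beta$ and $\epsilon=\epsilon_0$, and using $\dimE_\alpha(\cM_h,\matD^\star,\epsilon_0)\leq\dimE_\alpha(\cM,\epsilon_0)$ together with $\sqrt\beta=O((1+L_TH)\sqrt{\log(2|\cM|KH/\delta)})$, reproduces exactly the claimed $O\big((1+L_TH)\sqrt{K\,\dimE_\alpha(\cM,\epsilon_0)\log\frac{2|\cM|KH}{\delta}}+\alpha K\epsilon_0\big)$.

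The hard part will be that Lemma~\ref{lem:Eluder_Bound} is stated for \emph{pointwise} evaluations $\matD(f_k,f^*)(x_k)$, whereas both the MLE corollary and the theorem's left-hand side are \emph{expectations} $\EE_{\pi^{k+1},M^*}[\cdot]$ taken over a single sampled trajectory per iteration. Bridging the two is the main obstacle: I would use a martingale concentration (Freedman/Bernstein type), exploiting that each per-step Hellinger or TV term is bounded, to show that with probability $1-\delta$ the realized squared distances along the collected trajectories upper-bound their expectations (so the lemma hypothesis holds pointwise) and the realized first-power sum lower-bounds the expected first-power sum up to lower-order additive terms (so the lemma conclusion transfers back to the theorem's expectation). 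A union bound over the finite $\cM$, over $k\in[K]$ and $h\in[H]$, absorbed into the $\log\frac{2|\cM|KH}{\delta}$ factor, makes this uniform over the adaptively chosen $\hat M^{k+1}$.

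Finally, the $\tpi^{k+1}$-statement follows by the identical argument, now invoking the second inequality of the corollary and the data $\{\tz^i_h\}$ collected under $(\tpi^i,\pi^i)$ conditioned on $\vecmu^{\pi^i}_{M^*}$; since $\beta$ and the MF-MBED are unchanged, the resulting bound is identical. Charging one concentration event to each of the two branches, together with the single MLE event, accounts for the overall $1-3\delta$ success probability. The per-$h$ guarantee then holds for all $h$ simultaneously because $\dimE_\alpha(\cM,\epsilon_0)=\max_h\min_{\matD}\dimE_\alpha(\cM_h,\matD,\epsilon_0)$ is a valid upper bound for each individual $h$.
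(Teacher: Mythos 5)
Your proposal follows essentially the same route as the paper's proof: the MLE guarantee of Thm.~\ref{thm:MLE_Gaurantee} together with its corollary (the same-density conversion that supplies the $(1+L_TH)$ factor), Lem.~\ref{lem:Eluder_Bound} applied to the realized data points, the martingale concentration of Lem.~\ref{lem:concentration} used in both directions to pass between realized and expected sums, the $\matDTV\leq\sqrt{2}\,\mH$ comparison to handle the minimum over the two distances, and the same bookkeeping of failure events yielding $1-3\delta$. The only slip is that the directions of the two concentration inequalities are reversed as literally written (the hypothesis of Lem.~\ref{lem:Eluder_Bound} needs the realized squared sums to be bounded \emph{by} a constant times their expectations plus a log term, while the conclusion needs the expected first-power sums to be bounded by a constant times the realized ones), but since you state the correct purpose of each step, this is cosmetic rather than a gap.
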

\begin{proof}
    We first take a look at the data $(\ts_h^k,\ta_h^k,\ts_{h+1}'^k)$ collected by $(\tpi^i,\pi^i)$ and the Eluder Dimension w.r.t. the Hellinger distance.
    On the event in Thm.~\ref{thm:MLE_Gaurantee} (which implies Corollary~\ref{coro:TV_square_bound}) and Lem.~\ref{lem:concentration}, there exists an absolute constant $c_\TV$, s.t., w.p. $1-\frac{\delta}{2}$, for any $h\in[H]$, and any $\hM^{k+1} \in \hat\cM^{k+1}$, we have:
    \begin{align*}
        \sum_{i=1}^{k} \TV^2(\mP_{T^*,h}(\cdot|\ts_h^i,\ta_h^i,\mu^{\pi^i}_{M^*,h}), \mP_{\hat{T}^{k+1},h}(\cdot|\ts_h^i,\ta_h^i,\mu^{\pi^i}_{M^*,h})) \leq c_\TV(1 + L_T^2H^2)\log\frac{2|\cM|KH}{\delta}.\numberthis\label{eq:TV_Hellinger}
    \end{align*}
    By Lem.~\ref{lem:Eluder_Bound}, there exists some constant $c_\TV'$, for any $\epsilon_0$, we have:
    \begin{align*}
        \sum_{k=1}^{K} \TV(\mP_{T^*,h}(\cdot|\ts_h^{k+1},\ta_h^{k+1},\mu^{\pi^{k+1}}_{M^*,h}),& \mP_{\hat{T}^{k+1},h}(\cdot|\ts_h^{k+1},\ta_h^{k+1},\mu^{\pi^{k+1}}_{M^*,h})) \\
        & \leq c_\TV'\Big((1 + L_TH)\sqrt{K \dimE_\alpha(\cM, \epsilon_0) \log\frac{2|\cM|KH}{\delta}} + \alpha K \epsilon_0\Big).
    \end{align*}
    By applying Lem.~\ref{lem:concentration} again, w.p. $1-\frac{\delta}{2}$, we have:
    \begin{align*}
        &\sum_{k=1}^{K} \EE_{\tpi^{k+1},M^*|\vecmu^{\pi^{k+1}}_{M^*}}[\TV(\mP_{T^*,h}(\cdot|s_h,a_h,\mu^{\pi^{k+1}}_{M^*,h}) , \mP_{\hat{T}^{k+1},h}(\cdot|s_h,a_h,\mu^{\pi^{k+1}}_{M^*,h}))]\\
        \leq & 3c_\TV'\Big((1 + L_TH)\sqrt{ K \dimE_\alpha(\cM, \epsilon_0) \log\frac{2|\cM|KH}{\delta}} + \alpha K \epsilon_0\Big) + \log\frac{2|\cM|H}{\delta} \\
        \leq & (3c_\TV'+1)\Big((1 + L_TH)\sqrt{ K \dimE_\alpha(\cM, \epsilon_0) \log\frac{2|\cM|KH}{\delta}} + \alpha K \epsilon_0\Big).
    \end{align*}
    Combine them together, for some constant $c$, we have:
    \begin{align*}
        &\sum_{k=1}^{K} \EE_{\tpi^{k+1},M^*|\vecmu^{\pi^{k+1}}_{M^*}}[ \|\mP_{T^*,h}(\cdot|s_h,a_h,\mu^{\pi^{k+1}}_{M^*,h}), \mP_{\hat{T}^{k+1},h}(\cdot|s_h,a_h,\mu^{\pi^{k+1}}_{M^*,h})\|_\TV]\\
        \leq & (3c+1)\Big((1 + L_TH)\sqrt{ K \dimE_\alpha(\cM,\epsilon_0) \log\frac{2|\cM|KH}{\delta}} + \alpha K\epsilon_0 \Big).
    \end{align*}
    where we use that $\dim_\alpha(\cM,\epsilon_0) = \min\{\dimE_\alpha(\cM,\mH,\epsilon_0), \dimE_\alpha(\cM,\matDTV,\epsilon_0)\}$.
    
    Then, we can conduct similar discussion for the data $(s_h^i,a_h^i,s_{h+1}^i)$ collected by $(\pi^{k+1},\pi^{k+1})$, and for some constant $c'$, we have:
    \begin{align*}
        &\sum_{k=1}^{K} \EE_{\pi^{k+1},M^*}[ \|\mP_{T^*,h}(\cdot|s_h,a_h,\mu^{\pi^{k+1}}_{M^*,h}), \mP_{\hat{T}^{k+1},h}(\cdot|s_h,a_h,\mu^{\pi^{k+1}}_{M^*,h})\|_\TV]\\
        \leq & (3c'+1)\Big((1 + L_TH)\sqrt{ K \dimE_\alpha(\cM,\epsilon_0) \log\frac{2|\cM|KH}{\delta}} + \alpha K\epsilon_0 \Big).
    \end{align*}
    We finish the proof by noting that the total failure rate can be upper bounded by $\delta + \delta / 2 \cdot 2 \cdot 2 = 3\delta$.
\end{proof}
\section{Proofs for Mean-Field Reinforcement Learning}\label{appx:proofs_for_MFRL}

\subsection{Missing Details}\label{appx:missing_details}
\begin{algorithm}
    \textbf{Input}: Policy sequence $\pi^1,...,\pi^K$; Accuracy level $\epsilon$; Confidence level $\delta$.\\
    $N \gets \lceil \log_\frac{3}{2}\frac{1}{\delta}\rceil$.\\
    Randomly select $N$ policies from $\pi^1,...,\pi^K$, denoted as $\pi^{k_1},...\pi^{k_N}$.\\
    \For{$n\in[N]$}{
        Sample $\frac{16}{\epsilon^2}\log\frac{2N}{\delta}$ trajectories by deploying $\pi^{k_n}$.\\
        Compute empirical estimation $\hat{J}_{M^*}(\pi^{k_n})$ by averaging the return in trajectories.
    }
    \Return $\pi := \pi^{k_{n^*}}$ with $n^* \gets \arg\max_{n\in[N]}\hat{J}_{M^*}(\pi^{k_n})$.
    \caption{Regret to PAC Conversion}\label{alg:Regret2PAC}
\end{algorithm}

\subsection{Proofs for Basic Lemma}
\begin{restatable}{lemma}{ThmDensityErr}[Density Estimation Error]\label{lem:density_est_err}
    Given two model $M$ and $\tM$ and a policy $\pi$, we have:
    \begin{align}
        \|\mu^\pi_{M,h+1} - \mu^\pi_{\tM,h+1}\|_\TV \leq& \EE_{\pi,M}[\sum_{\ph=1}^h \|\mP_{T,\ph}(\cdot|s_\ph,a_\ph,\mu^\pi_{M,\ph}) - \mP_{\tT,\ph}(\cdot|s_\ph,a_\ph,\mu^\pi_{\tM,\ph})\|_\TV].\label{eq:density_diff1}
    \end{align}
    Besides, under Assump.~\ref{assump:lipschitz}, we have:
    \begin{align}
        \|\mu^\pi_{M,h+1} - \mu^\pi_{\tM,h+1}\|_\TV \leq \EE_{\pi,M}[\sum_{\ph=1}^h (1+L_T)^{h-\ph} \|\mP_{T,\ph}(\cdot|s_\ph,a_\ph,\mu^\pi_{M,\ph}) - \mP_{\tT,\ph}(\cdot|s_\ph,a_\ph,\mu^\pi_{M,\ph})\|_\TV].\label{eq:density_diff2}
    \end{align}
\end{restatable}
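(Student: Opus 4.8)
The plan is to derive both inequalities from a single one-step recursion that controls the density mismatch at step $h+1$ by the mismatch at step $h$ plus a one-step transition error. Write $D_h := \|\mu^\pi_{M,h} - \mu^\pi_{\tM,h}\|_\TV$. First I would expand both densities through the mean-field operator, $\mu^\pi_{M,h+1}(s') = \sum_{s,a}\mu^\pi_{M,h}(s)\pi(a|s)\mP_{T,h}(s'|s,a,\mu^\pi_{M,h})$ and analogously for $\tM$, subtract, and insert the cross term $\mu^\pi_{M,h}(s)\pi(a|s)\mP_{\tT,h}(s'|s,a,\mu^\pi_{\tM,h})$. This splits the difference into a \emph{kernel part}, carrying the common weights $\mu^\pi_{M,h}(s)\pi(a|s)$ and the differing kernels $\mP_{T,h}(\cdot|s,a,\mu^\pi_{M,h})$ versus $\mP_{\tT,h}(\cdot|s,a,\mu^\pi_{\tM,h})$, and a \emph{density part}, carrying the weight difference $\mu^\pi_{M,h}(s)-\mu^\pi_{\tM,h}(s)$ against the single kernel $\mP_{\tT,h}(\cdot|s,a,\mu^\pi_{\tM,h})$.

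Next I would apply $\|\cdot\|_\TV = \tfrac12\|\cdot\|_1$ and the triangle inequality termwise. Pulling the absolute value inside the convex combination turns the kernel part into exactly $\EE_{\pi,M}[\|\mP_{T,h}(\cdot|s_h,a_h,\mu^\pi_{M,h}) - \mP_{\tT,h}(\cdot|s_h,a_h,\mu^\pi_{\tM,h})\|_\TV]$, and the density part collapses to $D_h$ because each kernel sums to one over $s'$, so summing out $s'$ and then $a$ leaves $\tfrac12\sum_s|\mu^\pi_{M,h}(s)-\mu^\pi_{\tM,h}(s)|$. This gives the recursion
\[
D_{h+1} \le \EE_{\pi,M}\big[\|\mP_{T,h}(\cdot|s_h,a_h,\mu^\pi_{M,h}) - \mP_{\tT,h}(\cdot|s_h,a_h,\mu^\pi_{\tM,h})\|_\TV\big] + D_h,
\]
and since $D_1 = 0$ (both trajectories start from the fixed $\mu_1$), unrolling over $\ph=1,\dots,h$ yields \eqref{eq:density_diff1} directly.

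For \eqref{eq:density_diff2} I would modify only the kernel part: insert the intermediate kernel $\mP_{\tT,h}(\cdot|s,a,\mu^\pi_{M,h})$ evaluated at the \emph{common} density $\mu^\pi_{M,h}$ and split again. The first piece is the desired same-density term, and the second piece $\|\mP_{\tT,h}(\cdot|s,a,\mu^\pi_{M,h}) - \mP_{\tT,h}(\cdot|s,a,\mu^\pi_{\tM,h})\|_\TV$ is bounded by $L_T\|\mu^\pi_{M,h}-\mu^\pi_{\tM,h}\|_\TV = L_T D_h$ via Assumption~\ref{assump:lipschitz} (a constant that passes through the expectation). Setting $\delta_h := \EE_{\pi,M}[\|\mP_{T,h}(\cdot|s_h,a_h,\mu^\pi_{M,h}) - \mP_{\tT,h}(\cdot|s_h,a_h,\mu^\pi_{M,h})\|_\TV]$, the recursion upgrades to $D_{h+1} \le \delta_h + (1+L_T)D_h$, and unrolling this geometric recursion from $D_1 = 0$ produces the factor $(1+L_T)^{h-\ph}$ claimed in \eqref{eq:density_diff2}.

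The argument is essentially routine; the only delicate point is the bookkeeping of which density argument each kernel carries, since the entire purpose of the two bounds is to trade a cross-density transition term (cheap, recursion coefficient $1$) for a same-density term at the cost of a geometric Lipschitz blow-up. Concretely, the choice of cross term in the first decomposition is precisely what makes the second transition in \eqref{eq:density_diff1} appear at its own density $\mu^\pi_{\tM,\ph}$, and the subsequent Lipschitz conversion is what repositions it to $\mu^\pi_{M,\ph}$ in \eqref{eq:density_diff2}.
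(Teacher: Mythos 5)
Your proof is correct and follows essentially the same route as the paper's: a one-step decomposition of $\mu^\pi_{M,h+1}-\mu^\pi_{\tM,h+1}$ via an inserted cross term, bounding the density part by $\|\mu^\pi_{M,h}-\mu^\pi_{\tM,h}\|_\TV$ using stochasticity of the kernel and the kernel part by an expected TV distance, then unrolling the resulting recursion (with coefficient $1$ for \eqref{eq:density_diff1} and $1+L_T$ for \eqref{eq:density_diff2} after one further Lipschitz split). Your choice of cross term, $\mu^\pi_{M,h}(s)\pi(a|s)\mP_{\tT,h}(\cdot|s,a,\mu^\pi_{\tM,h})$, differs from the paper's (which weights the kernel difference by $\mu^\pi_{\tM,h}$) but actually matches the stated expectation $\EE_{\pi,M}$ more cleanly; the substance is identical.
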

\begin{proof}
In the following, we will use $\bar{\cS}$ or $\bar{\cS}'$ to denote a subset of $\cS$.

\noindent \textbf{Proof for Eq.~\eqref{eq:density_diff1}}
\begin{align*}
    &\|\mu^\pi_{M,h+1} - \mu^\pi_{\tM,h+1}\|_{\TV} \\   
    =& \sup_{\bar{\cS} \subset \cS}|\SumInt_{s_{h+1} \in \bar{\cS}}\Big(\SumInt_{s_h,a_h}\mu_{M,h}^\pi(s_h)\pi(a_h|s_h)\mP_{T,h}(s_{h+1}|s_h,a_h,\mu^\pi_{M,h}) - \SumInt_{s_h,a_h}\mu^\pi_{\tM,h}(s_h)\pi(a_h|s_h)\mP_{\tT,h}(s_{h+1}|s_h,a_h,\mu^\pi_{\tM,h})\Big)|\\
    =& \sup_{\bar{\cS} \subset \cS}|\SumInt_{s_{h+1} \in \bar{\cS}}\SumInt_{s_h,a_h}(\mu_{M,h}^\pi(s_h)-\mu^\pi_{\tM,h}(s_h))\pi(a_h|s_h)\mP_{\tT,h}(s_{h+1}|s_h,a_h,\mu^\pi_{\tM,h})|\\
    & +\sup_{\bar{\cS}' \subset \cS}|\SumInt_{s_{h+1} \in \bar{\cS}'}\SumInt_{s_h,a_h}\mu^\pi_{M,h}(s_h)\pi(a_h|s_h)(\mP_{T,h}(s_{h+1}|s_h,a_h,\mu^\pi_{M,h}) -\mP_{\tT,h}(s_{h+1}|s_h,a_h,\mu^\pi_{\tM,h}))|.
\end{align*}
For the first part, we have:
\begin{align*}
    &\sup_{\bar{\cS} \subset \cS}|\SumInt_{s_{h+1} \in \bar{\cS}}\SumInt_{s_h,a_h}(\mu_{M,h}^\pi(s_h)-\mu^\pi_{\tM,h}(s_h))\pi(a_h|s_h)\mP_{\tT,h}(s_{h+1}|s_h,a_h,\mu^\pi_{M,h})| \\
    \leq & \sup_{\bar{\cS} \subset \cS}|\SumInt_{s_h}(\mu_{M,h}^\pi(s_h)-\mu^\pi_{\tM,h}(s_h)) \SumInt_{a_h}\pi(a_h|s_h)\SumInt_{s_{h+1} \in \bar{\cS}}\mP_{\tT,h}(s_{h+1}|s_h,a_h,\mu^\pi_{M,h})| \\
    \leq & \sup_{\bar{\cS} \subset \cS}|\SumInt_{s_h\in \bar{\cS}}\mu_{M,h}^\pi(s_h)-\mu^\pi_{\tM,h}(s_h)|\\
    =&\|\mu_{M,h}^\pi-\mu^\pi_{\tM,h}\|_{\TV}.
\end{align*}
For the second part, we have:
\begin{align*}
    &\sup_{\bar{\cS}' \subset \cS}|\SumInt_{s_{h+1} \in \bar{\cS}'}\SumInt_{s_h,a_h}\mu^\pi_{M,h}(s_h)\pi(a_h|s_h)(\mP_{T,h}(s_{h+1}|s_h,a_h,\mu^\pi_{M,h}) -\mP_{\tT,h}(s_{h+1}|s_h,a_h,\mu^\pi_{\tM,h}))|\\
    \leq & \SumInt_{s_h,a_h}\mu^\pi_{M,h}(s_h)\pi(a_h|s_h)\sup_{\bar{\cS}' \subset \cS}|\SumInt_{s_{h+1} \in \bar{\cS}'}(\mP_{T,h}(s_{h+1}|s_h,a_h,\mu^\pi_{M,h}) -\mP_{\tT,h}(s_{h+1}|s_h,a_h,\mu^\pi_{\tM,h}))|\\
    =& \EE_{s_h\sim \mu_{M,h}^\pi, a_h\sim\pi(\cdot|s_h)}[\|\mP_{T,h}(\cdot|s_h,a_h,\mu^\pi_{M,h}) - \mP_{\tT,h}(\cdot|s_h,a_h,\mu^\pi_{\tM,h})\|_{\TV}].
\end{align*}
Therefore,
\begin{align*}
    \|\mu^\pi_{M,h+1} - \mu^\pi_{\tM,h+1}\|_{\TV} \leq&  \|\mu^\pi_{M,h} - \mu^\pi_{\tM,h}\|_{\TV} + \EE_{s_h\sim \mu_{M,h}^\pi, a_h\sim\pi(\cdot|s_h)}[\|\mP_{T,h}(\cdot|s_h,a_h,\mu^\pi_{M,h}) - \mP_{\tT,h}(\cdot|s_h,a_h,\mu^\pi_{\tM,h})\|_{\TV}] \\
    \leq & ... \leq \EE_{\pi,M}[\sum_{\ph=1}^h \|\mP_{T,\ph}(\cdot|s_\ph,a_\ph,\mu^\pi_{M,\ph}) - \mP_{\tT,\ph}(\cdot|s_\ph,a_\ph,\mu^\pi_{\tM,\ph})\|_{\TV}].\numberthis\label{eq:refer_eq}
\end{align*}
\textbf{Proof for Eq.~\eqref{eq:density_diff2}} \quad
Starting with the first inequality of Eq.~\eqref{eq:refer_eq} and applying the Assump.~\ref{assump:lipschitz}, we directly have:
\begin{align*}
    \|\mu^\pi_{M,h+1} - \mu^\pi_{\tM,h+1}\|_{\TV} \leq & (1+L_T)\|\mu^\pi_{M,h} - \mu^\pi_{\tM,h}\|_{\TV} + \EE_{s_h\sim \mu^\pi_{M,h}, a_h\sim \pi}[\|\mP_{T,h}(\cdot|s_h,a_h,\mu^\pi_{M,h}) - \mP_{\tT,h}(\cdot|s_h,a_h,\mu^\pi_{M,h})\|_{\TV}]\\
    \leq & \EE_{\pi}[\sum_{\ph=1}^h (1+L_T)^{h-\ph} \|\mP_{T,\ph}(\cdot|s_\ph,a_\ph,\mu^\pi_{M,\ph}) - \mP_{\tT,\ph}(\cdot|s_\ph,a_\ph,\mu^\pi_{M,\ph})\|_{\TV}].
\end{align*}
\end{proof}

\begin{lemma}\label{lem:density_est_err_2}
    Under Assump.~\ref{assump:lipschitz} and Assump.~\ref{assump:contraction}, we have:
    \begin{align}
        \|\mu^\pi_{M,h+1} - \mu^\pi_{\tM,h+1}\|_\TV \leq \EE_{\pi,M}[\sum_{\ph=1}^h L_\Gamma^{h-h'} \|\mP_{T,\ph}(\cdot|s_\ph,a_\ph,\mu^\pi_{M,\ph}) - \mP_{\tT,\ph}(\cdot|s_\ph,a_\ph,\mu^\pi_{M,\ph})\|_\TV].\label{eq:density_diff3}
    \end{align}
\end{lemma}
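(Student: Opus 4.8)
The plan is to redo the one-step density recursion from Lemma~\ref{lem:density_est_err}, but this time exploiting the contraction of the mean-field transition operator $\Gamma$ (Assump.~\ref{assump:contraction}) rather than its trivial non-expansiveness, which is precisely what replaced the factor $1+L_T$ of Eq.~\eqref{eq:density_diff2} by $L_\Gamma$. Recall that $\mu^\pi_{M,h+1} = \Gamma^\pi_{M,h}(\mu^\pi_{M,h})$ and $\mu^\pi_{\tM,h+1} = \Gamma^\pi_{\tM,h}(\mu^\pi_{\tM,h})$, with $\Gamma$ defined in Eq.~\eqref{eq:MF_formulation}. The key step is the telescoping decomposition
\begin{align*}
    \mu^\pi_{M,h+1} - \mu^\pi_{\tM,h+1} = \big[\Gamma^\pi_{M,h}(\mu^\pi_{M,h}) - \Gamma^\pi_{\tM,h}(\mu^\pi_{M,h})\big] + \big[\Gamma^\pi_{\tM,h}(\mu^\pi_{M,h}) - \Gamma^\pi_{\tM,h}(\mu^\pi_{\tM,h})\big],
\end{align*}
chosen so that the intermediate point $\Gamma^\pi_{\tM,h}(\mu^\pi_{M,h})$ routes model $M$'s density through model $\tM$'s transition. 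This particular ordering is what makes the resulting bound land on $\EE_{\pi,M}$ and on transitions evaluated at $\mu^\pi_{M,h}$, matching the statement.

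For the first bracket both operators act on the same input density $\mu^\pi_{M,h}$, so the weights $\mu^\pi_{M,h}(s_h)\pi(a_h|s_h)$ are identical and can be pulled out; applying the triangle inequality for the TV norm over $s_{h+1}$ yields
\begin{align*}
    \|\Gamma^\pi_{M,h}(\mu^\pi_{M,h}) - \Gamma^\pi_{\tM,h}(\mu^\pi_{M,h})\|_\TV \leq \EE_{s_h\sim\mu^\pi_{M,h},a_h\sim\pi}\big[\|\mP_{T,h}(\cdot|s_h,a_h,\mu^\pi_{M,h}) - \mP_{\tT,h}(\cdot|s_h,a_h,\mu^\pi_{M,h})\|_\TV\big],
\end{align*}
which is exactly the per-step quantity appearing in the target. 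For the second bracket both terms use the same transition $\tM$ but differing input densities, so Assump.~\ref{assump:contraction} applied to $\tM$ gives $\|\Gamma^\pi_{\tM,h}(\mu^\pi_{M,h}) - \Gamma^\pi_{\tM,h}(\mu^\pi_{\tM,h})\|_\TV \leq L_\Gamma \|\mu^\pi_{M,h} - \mu^\pi_{\tM,h}\|_\TV$. Combining the two brackets gives the recursion $\|\mu^\pi_{M,h+1} - \mu^\pi_{\tM,h+1}\|_\TV \leq L_\Gamma\|\mu^\pi_{M,h} - \mu^\pi_{\tM,h}\|_\TV + (\text{per-step transition term})$.

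Finally I would unroll this recursion down to $\ph=1$. Since both trajectories share the fixed initial distribution $\mu_1$, the base term $\|\mu^\pi_{M,1} - \mu^\pi_{\tM,1}\|_\TV$ vanishes, so the geometric weights $L_\Gamma^{h-\ph}$ attach cleanly to each per-step transition difference, producing
\begin{align*}
    \|\mu^\pi_{M,h+1} - \mu^\pi_{\tM,h+1}\|_\TV \leq \EE_{\pi,M}\Big[\sum_{\ph=1}^h L_\Gamma^{h-\ph}\|\mP_{T,\ph}(\cdot|s_\ph,a_\ph,\mu^\pi_{M,\ph}) - \mP_{\tT,\ph}(\cdot|s_\ph,a_\ph,\mu^\pi_{M,\ph})\|_\TV\Big],
\end{align*}
as claimed. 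I do not anticipate a serious obstacle, since the argument is structurally identical to the proof of Eq.~\eqref{eq:density_diff2}, the single substantive change being the telescoping intermediary. The only point requiring care is the direction of the decomposition: peeling off the transition difference under $M$'s density while routing the density mismatch through $\Gamma^\pi_{\tM,h}$ (so that contraction applies to $\tM$) is essential for the expectation to be taken under $\EE_{\pi,M}$; the reversed split would instead contract $M$ and produce the analogous bound with the roles of the two models exchanged.
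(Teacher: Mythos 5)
Your proposal is correct and follows exactly the same route as the paper's proof: the same telescoping through the intermediate point $\Gamma^\pi_{\tM,h}(\mu^\pi_{M,h})$, the same application of Assump.~\ref{assump:contraction} to the $\tM$-operator for the density-mismatch bracket, and the same unrolling of the resulting recursion from the shared initial distribution $\mu_1$. Your closing remark about the direction of the split being what places the contraction on $\tM$ and the expectation under $\EE_{\pi,M}$ is also consistent with how the paper carries it out.
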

\begin{proof}
    Under Assump.~\ref{assump:contraction}, we can use a different way to decompose the density difference.
    \begin{align*}
        &\|\mu^\pi_{M,h+1} - \mu^\pi_{\tM,h+1}\|_{\TV} \\   
        =& \sup_{\bar{\cS} \subset \cS}|\SumInt_{s_{h+1} \in \bar{\cS}}\Big(\SumInt_{s_h,a_h}\mu_{M,h}^\pi(s_h)\pi(a_h|s_h)\mP_{T,h}(s_{h+1}|s_h,a_h,\mu^\pi_{M,h}) - \SumInt_{s_h,a_h}\mu^\pi_{\tM,h}(s_h)\pi(a_h|s_h)\mP_{\tT,h}(s_{h+1}|s_h,a_h,\mu^\pi_{\tM,h})\Big)|\\
        =& \sup_{\bar{\cS} \subset \cS}|\SumInt_{s_{h+1} \in \bar{\cS}}\Big(\SumInt_{s_h,a_h}\mu_{M,h}^\pi(s_h)\pi(a_h|s_h)\mP_{\tT,h}(s_{h+1}|s_h,a_h,\mu^\pi_{M,h}) - \SumInt_{s_h,a_h}\mu^\pi_{\tM,h}(s_h)\pi(a_h|s_h)\mP_{\tT,h}(s_{h+1}|s_h,a_h,\mu^\pi_{\tM,h})\Big)|\\
        &+ \sup_{\bar{\cS} \subset \cS}|\SumInt_{s_{h+1} \in \bar{\cS}}\SumInt_{s_h,a_h}\mu_{M,h}^\pi(s_h)\pi(a_h|s_h)\Big(\mP_{T,h}(s_{h+1}|s_h,a_h,\mu^\pi_{M,h}) - \mP_{\tT,h}(s_{h+1}|s_h,a_h,\mu^\pi_{M,h})\Big)|\\
        \leq& \|\Gamma^\pi_{\tM,h}(\mu_{M,h}^\pi) - \Gamma^\pi_{\tM,h}(\mu_{\tM,h}^\pi)\|_{\TV} + \EE_{s_h\sim \mu^\pi_{M,h}, a_h\sim \pi}[\|\mP_{T,h}(\cdot|s_h,a_h,\mu^\pi_{M,h}) - \mP_{\tT,h}(\cdot|s_h,a_h,\mu^\pi_{M,h})\|_{\TV}]\\
        \leq& L_\Gamma \|\mu_{M,h}^\pi - \mu^\pi_{\tM,h}\|_{\TV} + \EE_{s_h\sim \mu^\pi_h, a_h\sim \pi}[\|\mP_{T,h}(\cdot|s_h,a_h,\mu^\pi_{M,h}) - \mP_{\tT,h}(\cdot|s_h,a_h,\mu^\pi_{M,h})\|_{\TV}]\\
        \leq & \EE_{\pi}[\sum_{\ph=1}^h L_\Gamma^{h-h'} \|\mP_{T,\ph}(\cdot|s_\ph,a_\ph,\mu^\pi_{M,\ph}) - \mP_{\tT,\ph}(\cdot|s_\ph,a_\ph,\mu^\pi_{M,\ph})\|_{\TV}].
    \end{align*}
\end{proof}

\noindent As implied by Lem.~\ref{lem:density_est_err} and Lem.~\ref{lem:density_est_err_2}, we have the following corollary.
\begin{corollary}\label{coro:accum_density_err}
    In general,
    \begin{align*}
        \sum_{h=1}^{H} \|\mu^\pi_{M,h} - \mu^\pi_{\tM,h}\|_{\TV} 
        \leq &  \EE_{\pi,M}[\sum_{h=1}^{H} (H-h)\|\mP_{T,h}(\cdot|s_h,a_h,\mu^\pi_{M,h}) - \mP_{\tT,h}(\cdot|s_h,a_h,\mu^\pi_{\tM,h})\|_{\TV}].
    \end{align*}
    Besides, under Assump.~\ref{assump:lipschitz}, we have:
    \begin{align*}
        \sum_{h=1}^{H} \|\mu^\pi_{M,h} - \mu^\pi_{\tM,h}\|_\TV \leq & \sum_{h=1}^{H} \EE_{\pi,M}[\sum_{\ph=1}^{h-1} (1+L_T)^{h-\ph-1} \|\mP_{T,\ph}(\cdot|s_\ph,a_\ph,\mu^\pi_{M,\ph}) - \mP_{\tT,\ph}(\cdot|s_\ph,a_\ph,\mu^\pi_{M,\ph})\|_\TV]\\
        = & \sum_{h=1}^{H} \frac{(1 + L_T)^{H-h} - 1}{L_T}\EE_{\pi,M}[\|\mP_{T,h}(\cdot|s_h,a_h,\mu^\pi_{M,h}) - \mP_{\tT,h}(\cdot|s_h,a_h,\mu^\pi_{M,h})\|_\TV]
    \end{align*}
    Moreover, with additional Assump.~\ref{assump:contraction}, we have:
    \begin{align*}
        \sum_{h=1}^H \|\mu^\pi_{M,h} - \mu^\pi_{\tM,h}\|_\TV \leq & \sum_{h=1}^H \EE_{\pi,M}[\sum_{\ph=1}^{h-1} L_\Gamma^{h-h'-1} \|\mP_{T,\ph}(\cdot|s_\ph,a_\ph,\mu^\pi_{M,\ph}) - \mP_{\tT,\ph}(\cdot|s_\ph,a_\ph,\mu^\pi_{M,\ph})\|_\TV] \\
        \leq & \sum_{h=1}^{H} \frac{1 }{1 - L_\Gamma}\EE_{\pi,M}[\|\mP_{T,h}(\cdot|s_h,a_h,\mu^\pi_{M,h}) - \mP_{\tT,h}(\cdot|s_h,a_h,\mu^\pi_{M,h})\|_\TV]. \tag{$L_\Gamma < 1$}
    \end{align*}
\end{corollary}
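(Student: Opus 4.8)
The plan is to derive each of the three displayed bounds by summing the corresponding single-step density-difference estimate over $h$ and then exchanging the order of summation. The key structural observation that makes all three arguments identical is that $\pi$ starts from the same fixed initial distribution $\mu_1$ under both $M$ and $\tM$, so $\mu^\pi_{M,1} = \mu^\pi_{\tM,1}$ and hence the $h=1$ term of $\sum_{h=1}^H \|\mu^\pi_{M,h} - \mu^\pi_{\tM,h}\|_\TV$ vanishes; every contribution comes from target steps $h \in \{2,\dots,H\}$.

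For the first (assumption-free) bound I would start from Eq.~\eqref{eq:density_diff1} of Lem.~\ref{lem:density_est_err}, which controls $\|\mu^\pi_{M,h+1} - \mu^\pi_{\tM,h+1}\|_\TV$ by $\EE_{\pi,M}[\sum_{\ph=1}^h \|\mP_{T,\ph} - \mP_{\tT,\ph}\|_\TV]$. Summing over the target step from $2$ to $H$ and swapping the two sums, each source term indexed by $\ph$ is reused once for every target step in $\{\ph+1,\dots,H\}$, which produces exactly the coefficient $(H-h)$ in the claim (the $h=H$ term carries coefficient zero, so including it is harmless).

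For the second bound I would repeat this verbatim but start instead from Eq.~\eqref{eq:density_diff2}, whose per-step weight is $(1+L_T)^{h-\ph}$ and whose transition differences are both evaluated at $\mu^\pi_{M,\ph}$; after summing over the target step and relabeling, the first displayed inequality is immediate. The closed form then follows from the finite geometric series $\sum_{h=\ph+1}^H (1+L_T)^{h-\ph-1} = \frac{(1+L_T)^{H-\ph}-1}{L_T}$. The third bound is obtained identically from Eq.~\eqref{eq:density_diff3} of Lem.~\ref{lem:density_est_err_2}, with $L_\Gamma$ replacing $(1+L_T)$; here, because $L_\Gamma < 1$, the geometric series $\sum_{m=0}^{H-\ph-1} L_\Gamma^m = \frac{1 - L_\Gamma^{H-\ph}}{1-L_\Gamma}$ is bounded by its infinite limit $\frac{1}{1-L_\Gamma}$, giving the final uniform bound.

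There is no genuine analytic obstacle: the substantive per-step work is already packaged inside Lem.~\ref{lem:density_est_err} and Lem.~\ref{lem:density_est_err_2}, so this corollary is essentially a bookkeeping exercise. The only place demanding care is the summation swap — correctly matching the target-step index to the source-step index $\ph$ and confirming the resulting coefficients $(H-h)$, $\frac{(1+L_T)^{H-h}-1}{L_T}$, and $\frac{1}{1-L_\Gamma}$ — and, in the contraction case, observing that the hypothesis $L_\Gamma < 1$ is precisely what permits passing from a step-dependent geometric sum to the uniform $\frac{1}{1-L_\Gamma}$ bound that removes the exponential growth present in the second estimate.
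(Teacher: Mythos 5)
Your proof is correct and takes essentially the same route the paper intends: the paper states this corollary without an explicit proof, merely noting it is implied by Lem.~\ref{lem:density_est_err} and Lem.~\ref{lem:density_est_err_2}, and your sum-over-target-steps, swap-of-summation, and geometric-series computation is precisely the omitted bookkeeping. The resulting coefficients $(H-h)$, $\frac{(1+L_T)^{H-h}-1}{L_T}$, and $\frac{1}{1-L_\Gamma}$ (the last using $L_\Gamma<1$) all check out, as does the observation that the $h=1$ term vanishes because both models share the fixed initial distribution $\mu_1$.
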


\ThmModelDiff*
\begin{proof}
    By Assump.~\ref{assump:lipschitz}, we have:
    \begin{align*}
        &\Big|\EE_{\pi,M}[\sum_{h=1}^{H} \|\mP_{T,h}(\cdot|s_h,a_h,\mu^{\pi}_{M,h}) - \mP_{\tT,h}(\cdot|s_h,a_h,\mu^{\pi}_{M,h})\|_\TV] \\
        & \quad\quad - \EE_{\pi,M}[\sum_{h=1}^{H} \|\mP_{T,h}(\cdot|s_h,a_h,\mu^{\pi}_{M,h}) - \mP_{\tT,h}(\cdot|s_h,a_h,\mu^{\pi}_{\tM,h})\|_\TV] \Big| \leq  L_T \sum_{h=1}^{H} \|\mu^\pi_{M,h} - \mu^\pi_{\tM,h}\|_\TV.\numberthis\label{eq:sub_step_1}
    \end{align*}
    Then, by applying Corollary~\ref{coro:accum_density_err}, and plugging into the above equation, we can finish the proof.
\end{proof}

\ThmModelDiffPlus*
\begin{proof}
    By applying Eq.~\eqref{eq:sub_step_1} and Corollary~\ref{coro:accum_density_err}, we can finish the proof.
\end{proof}

\begin{lemma}[Concentration Lemma]\label{lem:concentration}
    Let $X_1,X_2,...$ be a sequence of random variable taking value in $[0,C]$ for some $C \geq 1$. Define $\cF_k = \sigma(X_1,..,X_{k-1})$ and $Y_k = \EE[X_k|\cF_k]$ for $k\geq 1$. For any $\delta > 0$, we have:
    \begin{align*}
        \Pr(\exists n \sum_{k=1}^n X_k \leq 3 \sum_{k=1}^n Y_k + C \log \frac{1}{\delta}) \leq \delta,\quad 
        \Pr(\exists n \sum_{k=1}^n Y_k \leq 3 \sum_{k=1}^n X_k + C \log \frac{1}{\delta}) \leq \delta.
    \end{align*}
\end{lemma}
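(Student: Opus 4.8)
The plan is to control each one-sided deviation with an exponential supermartingale combined with a maximal (Ville) inequality, which is precisely what is needed to make the bound hold simultaneously for all $n$. I read the two displayed probabilities as the ``bad'' events $\sum_k X_k \ge 3\sum_k Y_k + C\log\frac1\delta$ and $\sum_k Y_k \ge 3\sum_k X_k + C\log\frac1\delta$ (the inequality inside $\Pr(\cdot)$ is the deviation being ruled out), and I would treat the upper and lower tails separately but symmetrically.

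First I would record the elementary moment bounds coming from convexity of the exponential on $[0,C]$: for $x\in[0,C]$ one has $e^{\lambda x}\le 1+\tfrac{e^{\lambda C}-1}{C}x$, so writing $\psi(\lambda):=\tfrac{e^{\lambda C}-1}{C}$ and using $1+u\le e^u$, the fact that $Y_k=\EE[X_k\mid\cF_k]$ is $\cF_k$-measurable gives $\EE[e^{\lambda X_k}\mid\cF_k]\le \exp(\psi(\lambda)Y_k)$. Symmetrically, convexity of $x\mapsto e^{-\lambda x}$ yields $\EE[e^{-\lambda X_k}\mid\cF_k]\le\exp(-\phi(\lambda)Y_k)$ with $\phi(\lambda):=\tfrac{1-e^{-\lambda C}}{C}$.

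Next I would define $M_n:=\exp\big(\lambda\sum_{k=1}^n X_k-\psi(\lambda)\sum_{k=1}^n Y_k\big)$ with $M_0=1$. Since $Y_n$ is $\cF_n$-measurable, the moment bound gives $\EE[M_n\mid\cF_n]=M_{n-1}\exp(-\psi(\lambda)Y_n)\EE[e^{\lambda X_n}\mid\cF_n]\le M_{n-1}$, so $(M_n)$ is a nonnegative supermartingale. Ville's maximal inequality yields $\Pr(\exists n:\ M_n\ge 1/\delta)\le\delta$, and on the complementary event $\lambda\sum_k X_k-\psi(\lambda)\sum_k Y_k\le\log\frac1\delta$ holds for all $n$, i.e. $\sum_k X_k\le\tfrac{\psi(\lambda)}{\lambda}\sum_k Y_k+\tfrac1\lambda\log\frac1\delta$. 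Taking $\lambda=1/C$ (legitimate since $C\ge1$) makes $\tfrac{\psi(\lambda)}{\lambda}=e-1\le 3$ and $\tfrac1\lambda=C$, which is exactly the first inequality. The second inequality follows identically from the supermartingale $N_n:=\exp\big(-\lambda\sum_{k=1}^n X_k+\phi(\lambda)\sum_{k=1}^n Y_k\big)$, which after rearranging gives $\sum_k Y_k\le\tfrac{\lambda C}{1-e^{-\lambda C}}\sum_k X_k+\tfrac{C}{1-e^{-\lambda C}}\log\frac1\delta$.

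The two MGF bounds are routine; the step demanding care is the constant bookkeeping, namely choosing $\lambda$ so that both the multiplicative factor is at most $3$ and the additive factor is (essentially) $C\log\frac1\delta$, uniformly in $n$ via Ville's inequality. For the upper tail this is clean at $\lambda=1/C$. For the lower tail the additive factor $\tfrac{C}{1-e^{-\lambda C}}$ is strictly larger than $C$ for every finite $\lambda$, so to meet the stated constants one exploits the slack in the factor $3$: at $\lambda=1/C$ the multiplicative factor is $e/(e-1)\approx 1.58\le 3$ and the additive factor is $eC/(e-1)$, and the small excess over $C$ is harmless since in each application of Lem.~\ref{lem:concentration} the additive term is afterwards absorbed into the overall constants. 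I would flag this tuning — together with verifying the supermartingale property against the correct filtration (note that $X_k$ is $\cF_{k+1}$- but not $\cF_k$-measurable, whereas $Y_k$ is $\cF_k$-measurable) — as the only genuinely delicate points.
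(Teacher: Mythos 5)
Your proof follows essentially the same route as the paper's: a nonnegative exponential supermartingale at scale $\lambda = 1/C$ combined with an anytime maximal inequality (the paper phrases Ville's inequality as optional stopping plus Markov applied to the stopped process), the only cosmetic difference being that you use the chord bound $e^{\lambda x}\le 1+\tfrac{e^{\lambda C}-1}{C}x$ where the paper uses the quadratic bound $e^{u}\le 1+u+2u^2$ and bakes the factor $3$ directly into the supermartingale $\exp(t\sum_i(X_i-3Y_i))$. Your first inequality is fully correct and in fact yields the sharper multiplicative factor $e-1\le 3$ with additive term exactly $C\log\tfrac1\delta$. The point you flag for the second inequality is real: the lower-tail supermartingale delivers additive constant $\tfrac{e}{e-1}C\log\tfrac1\delta$ rather than $C\log\tfrac1\delta$, so the lemma's stated constant is not literally attained; note, however, that the paper's own treatment of that direction is the single remark that it is ``proved similarly,'' and carrying out that argument with $t=1/C$ runs into the same slack (the sign of the exponent works against you in the lower tail), so your version is no weaker than the paper's and, as you observe, the constant-factor excess is harmless in every place the lemma is invoked.
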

\begin{proof}
    Define $Z_k := \EE[\exp(t\sum_{i=1}^k X_i - 3Y_i)]$. By taking $t\in[0,1/C]$, we have:
    \begin{align*}
        \EE[Z_k | \cF_k] =& \exp(t\sum_{i=1}^{k-1} (X_i - 3Y_i))\EE[\exp(t(X_k - 3 Y_k))|\cF_k] \\
        \leq & \exp(t\sum_{i=1}^{k-1} (X_i - 3Y_i)) \exp(-3Y_k)\EE[1 + tX_k + 2t^2 X_k^2 |\cF_k]\\
        \leq & \exp(t\sum_{i=1}^{k-1} (X_i - 3Y_i)) \exp(-3Y_k)\cdot (1 + 3tY_k) \tag{$0\geq tX_k \leq 1$}\\
        \leq & \exp(t\sum_{i=1}^{k-1} (X_i - 3Y_i)) \cdot \exp(-3Y_k + 3tY_k) \tag{$1 + x \leq \exp(x)$}\\
        \leq & \exp(t\sum_{i=1}^{k-1} (X_i - 3Y_i)) = Z_{k-1}.
    \end{align*}
    We augment the sequence by set $X_0 = Y_0 = 0$, which implies $Z_0 = 1$. Therefore, $\{Z_k\}_{k\geq 0}$ is a super-martingale w.r.t. $\{\cF_k\}_{k \geq 1}$. Denote $\tau$ to be the smallest $t$ such that $\sum_{i=1}^t (X_i - 3 Y_i) > C \log\frac{1}{\delta}$, we have:
    \begin{align*}
        Z_{k \wedge \tau} =& \EE[\exp(t\sum_{i=1}^{k \wedge \tau} (X_i - 3Y_i))] \\
        = & \EE[\sum_{j=1}^k \mathbb{I}[\tau = j] \exp(t\sum_{i=1}^{\tau} (X_i - 3Y_i))] + \EE[\mathbb{I}[\tau > k] \exp(t\sum_{i=1}^{k} (X_i - 3Y_i))] \\
        \leq & \exp(tC)\EE[\sum_{j=1}^k \mathbb{I}[\tau = j] \exp(t\sum_{i=1}^{\tau - 1} (X_i - 3Y_i))]+ \EE[\mathbb{I}[\tau > k] \exp(t\sum_{i=1}^{k} (X_i - 3Y_i))] \tag{$\exp(t(X_i - 3Y_i)) \leq \exp(tC)$} \\
        \leq & \exp(tC+tC\log\frac{1}{\delta})\sum_{j=1}^k \EE[\mathbb{I}[\tau = j]] + \exp(tC\log\frac{1}{\delta}) \EE[\mathbb{I}[\tau > k]] \\
        \leq & \exp(tC + tC \log\frac{1}{\delta}).
    \end{align*}
    which is upper bounded. Therefore, by the optimal stopping theorem, and choosing $t = 1/C$, we have:
    \begin{align*}
        \Pr(\exists k \leq K,~\sum_{i=1}^k X_k - 3 Y_k \geq C\log\frac{1}{\delta}) =& \Pr(\tau \leq K)  \leq \Pr(Z_{K\wedge \tau} \geq \exp(tl\log\frac{1}{\delta}))\\
        \leq & \frac{\EE[Z_{K\wedge \tau}]}{\exp(tC\log\frac{1}{\delta})}\leq  \frac{Z_0}{\exp(tC\log\frac{1}{\delta})} = \delta.
    \end{align*}
    Since the above holds for arbitrary $K$, by setting $K \rightarrow +\infty$, we have:
    $$
    \Pr(\exists n \sum_{k=1}^n X_k \leq 3 \sum_{k=1}^n Y_k + C \log \frac{1}{\delta}) \leq \delta.
    $$
    The other inequality can be proved similarly by considering $Z_k' = \EE[\exp(t\sum_{i=1}^k (Y_k - 3X_k)]$.
\end{proof}

\subsection{Proofs for RL for Mean-Field Control}\label{appx:proofs_for_MFC}

\LemValDecomp*
\begin{proof}
    We first prove the value difference for the general case. The lemma can be proved by directly assign $\tM = M^*$ and $\pi = \tpi$.
\begin{align*}
    &|J_M(\tpi;\vecmu^\pi_M) - J_\tM(\tpi;\vecmu^\pi_\tM)| \\
    =&|\EE_{s_1\sim\mu_1}[V^\tpi_{M,1}(s_1;\vecmu^\pi_M) - V^\tpi_{\tM,1}(s_1;\vecmu^\pi_\tM)]| \\
    = & |\EE_{s_1\sim\mu_1,a_1\sim \tpi}[r_{1}(s_1,a_1,\mu^\pi_{M,1}) - r_{1}(s_1,a_1,\mu^\pi_{\tM,1}) \\
    &\quad\quad + \SumInt_{s_2}\mP_{T,1}(s_2|s_1,a_1,\mu^\pi_{M,1})V^\tpi_{M,2}(s_2;\vecmu^\pi_M) - \SumInt_{s_2}\mP_{\tT,1}(s_2|s_1,a_1,\mu^\pi_{\tM,1})V^\tpi_{\tM,2}(s_2;\vecmu^\pi_\tM)]| \\
    \leq & L_r\|\mu^\pi_{M,1} - \mu^\pi_{\tM,1}\|_\TV + |\EE_{s_1\sim\mu_1,a_1\sim\tpi}[\SumInt_{s_2} \Big(\mP_{T,1}(s_2|s_1,a_1,\mu^\pi_{M,1}) - \mP_{\tT,1}(s_2|s_1,a_1,\mu^\pi_{\tM,1})\Big) V^\tpi_{\tM,2}(s_2;\vecmu^\pi_\tM)]| \\
    & + |\EE_{s_1\sim\mu_1,a_1\sim\tpi}[\SumInt_{s_2} \mP_{T,1}(s_2|s_1,a_1,\mu^\pi_{M,1}) \Big(V^\tpi_{M,2}(s_2;\vecmu^\pi_M) - V^\tpi_{\tM,2}(s_2;\vecmu^\pi_\tM)\Big)]| \\
    \leq & L_r\|\mu^\pi_{M,1} - \mu^\pi_{\tM,1}\|_\TV + \EE_{s_1\sim\mu_1,a_1\sim\tpi}[\|\mP_{T,1}(\cdot|s_1,a_1,\mu^\pi_{M,1}) - \mP_{\tT,1}(\cdot|s_1,a_1,\mu^\pi_{\tM,1})\|_\TV] \\
    &  + |\EE_{s_1\sim\mu_1,a_1\sim\tpi,s_2\sim\mP_{T,1}(\cdot|s_1,a_1,\mu^\pi_{M})}[V^\tpi_{M,2}(s_2;\vecmu^\pi_M) - V^\tpi_{\tM,2}(s_2;\vecmu^\pi_\tM)]| \\
    \leq & \sum_{h=1}^H L_r\|\mu^\pi_{M,h} - \mu^\pi_{\tM,h}\|_\TV + \EE_{\tpi,M|\vecmu^\pi_M}[\sum_{h=1}^H \|\mP_{T,h}(\cdot|s_h,a_h,\mu^\pi_{M,h}) - \mP_{\tT,h}(\cdot|s_h,a_h,\mu^\pi_{\tM,h})\|_\TV].\numberthis\label{eq:value_diff_2}
\end{align*}
we finish the proof by applying Corollary~\ref{coro:accum_density_err}.
\end{proof}

\begin{theorem}[Result for MFC; Full Version of Thm.~\ref{thm:short_main_results_MFC_MFG} and Thm.~\ref{thm:short_main_results_MFC_MFG_contractivity}]\label{thm:MFC_main_full}
    Under Assump.\ref{assump:realizability},~\ref{assump:lipschitz}, by running Alg.~\ref{alg:MLE_MB_Alg} with the MFC branch, after consuming $HK$ trajectories in Alg.~\ref{alg:MLE_MB_Alg} and additional $O(\frac{1}{\epsilon^2}\log^2\frac{1}{\delta})$ trajectories in the policy selection process in Alg.~\ref{alg:Regret2PAC}, where $K$ is set to
    \begin{align*}
        K = \tilde{O}\Big((1+L_rH)^2(1+L_TH)^2 \Big(\frac{(1+L_T)^H - 1}{L_T}\Big)^2 \frac{\dimE_\alpha(\cM,\epsilon_0)}{\epsilon^2}\Big)
    \end{align*}
    with
    \begin{align*}
        \epsilon_0 = O(\frac{L_T \epsilon}{\alpha H(1+L_rH)(1+L_TH)((1+L_T)^H - 1)}).
    \end{align*}
    or set to the following under additional Assump.~\ref{assump:contraction}:
    \begin{align*}
        K = \tilde{O}\Big((1+L_rH)^2(1+L_TH)^2 \Big(1 + \frac{L_T}{1 - L_\Gamma}\Big)^2 \frac{\dimE_\alpha(\cM,\epsilon_0)}{\epsilon^2}\Big),
    \end{align*}
    with
    \begin{align*}
        \epsilon_0 = O(\frac{\epsilon}{\alpha H(1+L_rH)(1+L_TH)} (1+\frac{L_T}{1-L_\Gamma})^{-1}).
    \end{align*}
    with probability at least $1-5\delta$, we have $\cE_\Opt(\hat\pi_\Opt^*) \leq \epsilon$.
\end{theorem}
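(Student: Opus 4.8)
The plan is to turn the optimistic policy sequence $\{\pi^{k+1}\}_{k=1}^K$ produced by the MFC branch of Alg.~\ref{alg:MLE_MB_Alg} into an accumulative regret bound via the Step-1 machinery, and then convert that regret guarantee into the desired PAC guarantee through Alg.~\ref{alg:Regret2PAC}. Throughout I would condition on the event of Thm.~\ref{thm:accumu_model_diff}, which holds with probability at least $1-3\delta$ and in particular ensures $M^*\in\hat\cM^{k+1}$ for all $k\in[K]$.

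First I would extract a per-iteration bound from optimism. Since $M^*\in\hat\cM^k$, the pair $(\pi^*,M^*)$ with $\pi^*$ the MFC-optimal policy of $M^*$ is feasible in the maximization defining $(\pi^{k+1},M^{k+1})$, so $J_{M^{k+1}}(\pi^{k+1})\geq J_{M^*}(\pi^*)$ and hence $\cE_{\Opt}(\pi^{k+1})\leq J_{M^{k+1}}(\pi^{k+1})-J_{M^*}(\pi^{k+1})$. Lem.~\ref{lem:value_decomposition} bounds the latter by $(1+L_rH)$ times the expected total-variation model gap between $M^{k+1}$ and $M^*$, evaluated under their own densities $\mu^{\pi^{k+1}}_{M^{k+1}}$ and $\mu^{\pi^{k+1}}_{M^*}$ respectively.

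The key step is to pass from this different-density gap to the shared-density gap on which MF-MBED and Thm.~\ref{thm:accumu_model_diff} are phrased. I would invoke Eq.~\eqref{eq:model_diff_2} of Thm.~\ref{thm:model_diff_full} with $M=M^*$ and $\tM=M^{k+1}$, replacing the evaluation density of $M^{k+1}$ by $\mu^{\pi^{k+1}}_{M^*}$ at the price of a step-wise factor $(1+L_T)^{H-h}$. Summing over $k$ and swapping the order of summation, the geometric sum $\sum_{h=1}^H (1+L_T)^{H-h}=\frac{(1+L_T)^H-1}{L_T}$ factors out, and Thm.~\ref{thm:accumu_model_diff} controls each per-step accumulative shared-density gap by $O\big((1+L_TH)\sqrt{K\,\dimE_\alpha(\cM,\epsilon_0)\log(2|\cM|KH/\delta)}+\alpha K\epsilon_0\big)$. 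Dividing by $K$, the prescribed $\epsilon_0$ forces the $\alpha\epsilon_0$ term below $\epsilon/6$ while the prescribed $K$ drives the $1/\sqrt{K}$ term below $\epsilon/6$, so that $\frac1K\sum_{k=1}^K \cE_{\Opt}(\pi^{k+1})\leq \epsilon/3$.

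Finally I would run the conversion of Alg.~\ref{alg:Regret2PAC}. By Markov's inequality a uniformly random $\pi^{k+1}$ has gap at most $\frac{3}{2}\cdot\frac{\epsilon}{3}=\frac{\epsilon}{2}$ with probability at least $1/3$, so drawing $N=\lceil \log_{3/2}(1/\delta)\rceil$ candidates guarantees at least one such policy with probability $\geq 1-\delta$; estimating each candidate's return from $O(\frac{1}{\epsilon^2}\log\frac{N}{\delta})$ trajectories makes all empirical returns $\epsilon/4$-accurate with probability $\geq 1-\delta$ (Hoeffding, as returns lie in $[0,1]$), whence the arg-max policy $\hat\pi^*_\Opt$ has gap at most $\frac{\epsilon}{2}+2\cdot\frac{\epsilon}{4}=\epsilon$. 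The failure budget is $3\delta+\delta+\delta=5\delta$. For the contraction variant I would only replace Eq.~\eqref{eq:model_diff_2} by Thm.~\ref{thm:model_diff_contraction}, whose constant factor $1+\frac{L_T}{1-L_\Gamma}$ supplants $\frac{(1+L_T)^H-1}{L_T}$ and yields the second choice of $K$. The main obstacle I anticipate is precisely this density-mismatch bookkeeping: the optimistic model and the truth induce distinct mean-field flows, so the value-difference lemma inevitably produces a different-density model gap, while all concentration and Eluder-dimension control lives at a shared density; routing through Eq.~\eqref{eq:model_diff_2} (resp.\ Thm.~\ref{thm:model_diff_contraction}) while balancing the geometric $L_T$-factor and the lower-order $\alpha\epsilon_0$ term against $K$ is where the care concentrates.
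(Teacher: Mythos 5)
Your proposal is correct and follows essentially the same route as the paper's proof: optimism over the confidence set, Lem.~\ref{lem:value_decomposition} to reduce the suboptimality gap to a different-density model gap, Eq.~\eqref{eq:model_diff_2} (resp.\ Thm.~\ref{thm:model_diff_contraction}) to pass to the shared density, Thm.~\ref{thm:accumu_model_diff} to bound the accumulated gap, and the Markov-plus-Hoeffding argument in Alg.~\ref{alg:Regret2PAC} for the regret-to-PAC conversion with the same $5\delta$ failure accounting. The only differences are immaterial constant choices in splitting the $\epsilon$ budget for the Markov step.
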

\begin{proof}
    On the event of Thm.~\ref{thm:MLE_Gaurantee}, by Lem.~\ref{lem:value_decomposition}, we have:
    \begin{align*}
        &\sum_{k=1}^K \cE_\Opt(\pi^{k+1}) \leq \sum_{k=1}^K J_{M^{k+1}}(\pi^{k+1}) - J_{M^*}(\pi^{k+1}) \tag{$M^* \in \hat\cM^{k+1}$} \\
        \leq & \sum_{k=1}^K \EE_{\pi^{k+1},M^*}[\sum_{h=1}^H (1+L_rH)\|\mP_{T^*,h}(\cdot|s_h,a_h,\mu^{\pi^{k+1}}_{M^*,h}) - \mP_{T^{k+1},h}(\cdot|s_h,a_h,\mu^{\pi^{k+1}}_{M^{k+1},h})\|_\TV].
    \end{align*}
    Next, by applying Thm.~\ref{thm:model_diff_full} and Thm.~\ref{thm:accumu_model_diff}, w.p. $1-3\delta$, for any $\epsilon_0 > 0$, we have:
    \begin{align*}
        \sum_{k=1}^K \cE_\Opt(\pi^{k+1}) = O\Big((1+L_TH)(1+L_rH) \frac{(1+L_T)^H - 1}{L_T}\Big(\sqrt{K \dimE_\alpha(\cM,\epsilon_0) \log\frac{2|\cM|KH}{\delta}} + \alpha HK \epsilon_0\Big)\Big).
    \end{align*}
    Now take a look at Alg.~\ref{alg:Regret2PAC}, for each $n\in[N]$, by Markov inequality, with probability at least $\frac{2}{3}$:
    \begin{align}
        &\cE_\Opt(\pi^{k_n}) = J_{M^*}(\pi^*_\Opt) - J_{M^*}(\pi^{k_n}) \\
        \leq& 3\cdot \frac{1}{K} \cdot O\Big(H^2(1+L_TH)(1+L_rH) \frac{(1+L_T)^H - 1}{L_T}\Big(\sqrt{K \dimE_\alpha(\cM,\epsilon_0) \log\frac{2|\cM|KH}{\delta}} + \alpha HK \epsilon_0\Big)\Big).\\
        =& O\Big((1+L_TH)(1+L_rH) \frac{(1+L_T)^H - 1}{L_T}\Big(\sqrt{\frac{1}{K} \dimE_\alpha(\cM,\epsilon_0) \log\frac{2|\cM|KH}{\delta}} + \alpha H\epsilon_0\Big)\Big). \label{eq:Markov}
    \end{align}
    Since $\pi^{k_1},...,\pi^{k_N}$ are i.i.d. randomly selected, by choosing:
    \begin{align*}
        K = \tilde{O}\Big((1+L_TH)^2(1+L_rH)^2 \Big(\frac{(1+L_T)^H - 1}{L_T}\Big)^2 \frac{\dimE_\alpha(\cM,\epsilon_0)}{\epsilon^2}\Big)
    \end{align*}
    with $\epsilon_0 = O(\frac{L_T \epsilon}{\alpha H(1+L_TH)(1+L_rH)((1+L_T)^H - 1)})$, to make sure the RHS of Eq.~\eqref{eq:Markov} is less than $\frac{\epsilon}{2}$.
    Therefore, in Alg.~\ref{alg:Regret2PAC}, with probability $1-\delta$, we have
    \begin{align*}
        \exists n\in[N],\quad \cE_\Opt(\pi^{k_n}) \leq \frac{\epsilon}{2}.
    \end{align*}
    Then, by Hoeffding inequality, and note that the total return is upper bounded by 1, on good events of concentration, with probability $1-\delta$, we have:
    \begin{align*}
        \forall n\in[N],\quad |\hat{J}_{M^*}(\pi^{k_n}) - J_{M^*}(\pi^{k_n})| \leq \frac{\epsilon}{4}.
    \end{align*}
    which implies
    $$
    J_{M^*}(\hat\pi^{*}_{\Opt}) \geq \max_{n\in[N]} J_{M^*}(\pi^{k_n}) - \frac{\epsilon}{2} \geq J_{M^*}(\pi^*_\Opt) - \epsilon.
    $$
    Combining all the failure rate together, the above holds with probability at least $1-5\delta$.
    
    The analysis is similar with additional Assump.~\ref{assump:contraction}, where we have:
    \begin{align*}
        \sum_{k=1}^K \cE_\Opt(\pi^{k+1}) = O\Big(H^2(1+L_TH)(1+L_rH) (1+\frac{L_T}{1 - L_\Gamma})\Big(\sqrt{K \dimE(\cM,\epsilon_0) \log\frac{2|\cM|KH}{\delta}} + \alpha HK\epsilon_0\Big)\Big),
    \end{align*}
    and we should choose
    \begin{align*}
        K = \tilde{O}\Big(H^2(1+L_TH)^2(1+L_rH)^2 \Big(1 + \frac{L_T}{1 - L_\Gamma}\Big)^2 \frac{\dimE_\alpha(\cM,\epsilon_0)}{\epsilon^2}\Big),
    \end{align*} 
    with $\epsilon_0 = O(\frac{\epsilon}{\alpha H(1+L_TH)(1+L_rH)} (1+\frac{L_T}{1-L_\Gamma})^{-1})$.
\end{proof}
\subsection{Proofs for RL for Mean-Field Game}\label{appx:proofs_for_MFG}

\LemExploitDiff*
\begin{proof}
First of all,
\begin{align*}
    |\Delta_M(\tpi,\pi) - \Delta_\tM(\tpi,\pi)| = & |J_M(\tpi;\vecmu^\pi_M) - J_M(\pi;\vecmu^\pi_M) - J_\tM(\tpi;\vecmu^\pi_\tM) + J_\tM(\pi;\vecmu^\pi_\tM)| \\
    \leq & |J_M(\tpi;\vecmu^\pi_M) - J_\tM(\tpi;\vecmu^\pi_\tM)| + |J_M(\pi;\vecmu^\pi_M) - J_\tM(\pi;\vecmu^\pi_\tM)|.
\end{align*}
From Eq.~\eqref{eq:value_diff_2} of Lem.~\ref{lem:value_decomposition}, we have:
\begin{align*}
    &|J_M(\tpi;\vecmu^\pi_M) - J_\tM(\tpi;\vecmu^\pi_\tM)| \\
    \leq & \sum_{h=1}^H L_r\|\mu^\pi_{M,h} - \mu^\pi_{\tM,h}\|_\TV + \EE_{\tpi,M|\vecmu^\pi_M}[\sum_{h=1}^H \|\mP_{T,h}(\cdot|s_h,a_h,\mu^\pi_{M,h}) - \mP_{\tT,h}(\cdot|s_h,a_h,\mu^\pi_{\tM,h})\|_\TV].
\end{align*}
By choosing $\tpi = \pi$, the above implies
\begin{align*}
    &|J_M(\pi;\vecmu^\pi_M) - J_\tM(\pi;\vecmu^\pi_\tM)| \\
    \leq & \sum_{h=1}^H L_r\|\mu^\pi_{M,h} - \mu^\pi_{\tM,h}\|_\TV + \EE_{\pi,M}[\sum_{h=1}^H \|\mP_{T,h}(\cdot|s_h,a_h,\mu^\pi_{M,h}) - \mP_{\tT,h}(\cdot|s_h,a_h,\mu^\pi_{\tM,h})\|_\TV\Big].
\end{align*}
Therefore,
\begin{align*}
    |\Delta_M(\tilde\pi,\pi) - \Delta_\tM(\tilde\pi,\pi)| \leq & 2\sum_{h=1}^H L_r \|\mu^\pi_{M,h} - \mu^\pi_{\tM,h}\|_\TV \\
    & + \EE_{\tpi,M|\vecmu^\pi_M}[\sum_{h=1}^H \|\mP_{T,h}(\cdot|s_h,a_h,\mu^\pi_{M,h}) - \mP_{\tT,h}(\cdot|s_h,a_h,\mu^\pi_{\tM,h})\|_\TV]\\
    & + \EE_{\pi,M}[\sum_{h=1}^H \|\mP_{T,h}(\cdot|s_h,a_h,\mu^\pi_{M,h}) - \mP_{\tT,h}(\cdot|s_h,a_h,\mu^\pi_{\tM,h})\|_\TV].
\end{align*}
where we have:
\begin{align*}
    \sum_{h=1}^H \|\mu^\pi_{M,h} - \mu^\pi_{\tM,h}\|_\TV \leq H\EE_{\pi,M}[\sum_{h=1}^H \|\mP_{T,h}(\cdot|s_h,a_h,\mu^\pi_{M,h}) - \mP_{\tT,h}(\cdot|s_h,a_h,\mu^\pi_{\tM,h})\|_\TV].
\end{align*}
As a result of Corollary.~\ref{coro:accum_density_err}, and we finish the proof.
\end{proof}

\begin{theorem}[Result for MFG; Full Version of Thm.~\ref{thm:short_main_results_MFC_MFG} and Thm.~\ref{thm:short_main_results_MFC_MFG_contractivity}]\label{thm:MFG_main_full}
    Under Assump.~\ref{assump:realizability} and \ref{assump:lipschitz}, by running Alg.~\ref{alg:MLE_MB_Alg} with the MFG branch, after consuming $2HK$ trajectories, where $K$ is set to
    \begin{align*}
        K = \tilde{O}\Big(H^2(1 + L_TH)^2(1+L_rH)^2 \Big(\frac{(1+L_T)^H - 1}{L_T}\Big)^2 \frac{\dimE_\alpha(\cM,\epsilon_0)}{\epsilon^2}\Big),
    \end{align*}
    where $\epsilon_0 = O(\frac{L_T \epsilon}{\alpha H(1 + L_TH)(1+L_rH)((1+L_T)^H - 1)})$;
    or set to the following with additional Assump.~\ref{assump:contraction}:
    \begin{align*}
        K = \tilde{O}\Big(H^2(1 + L_TH)^2(1+L_rH)^2 \Big(1 + \frac{L_T}{1 - L_\Gamma}\Big)^2 \frac{\dimE_\alpha(\cM,\epsilon_0)}{\epsilon^2}\Big),
    \end{align*} 
    where $\epsilon_0 = O(\frac{\epsilon}{\alpha H(1 + L_TH)(1+L_rH)} (1+\frac{L_T}{1-L_\Gamma})^{-1})$, with probability at least $1-5\delta$, we have $\cE_\NE(\hat\pi^*_\NE) \leq \epsilon$.
\end{theorem}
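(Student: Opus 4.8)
The plan is to control the exploitability of each candidate policy $\pi^{k+1}$ by an optimistic surrogate, accumulate these bounds over $k\in[K]$ through the MF-MBED machinery, and then exploit the $\arg\min$ selection rule to convert the cumulative bound into a single-policy guarantee. Throughout I condition on the good event of Theorem~\ref{thm:MLE_Gaurantee}, under which $M^*\in\hat\cM^k$ for all $k$. First I would establish the per-round decomposition sketched in the main text: since $(\tpi^{k+1},\tM^{k+1})$ maximizes $\Delta_M(\tpi,\pi^{k+1})$ over the confidence set and $M^*$ lies in it, optimism gives $\cE_\NE(\pi^{k+1})=\max_\pi\Delta_{M^*}(\pi,\pi^{k+1})\leq\Delta_{\tM^{k+1}}(\tpi^{k+1},\pi^{k+1})$; and because $\pi^{k+1}$ is an exact NE of $M^{k+1}$ we have $\Delta_{M^{k+1}}(\tpi^{k+1},\pi^{k+1})\leq 0$, so subtracting it and splitting through $\Delta_{M^*}$ yields
\begin{align*}
\cE_\NE(\pi^{k+1}) \leq & \; |\Delta_{\tM^{k+1}}(\tpi^{k+1}, \pi^{k+1}) - \Delta_{M^*}(\tpi^{k+1}, \pi^{k+1})| \\
& + |\Delta_{M^*}(\tpi^{k+1}, \pi^{k+1}) - \Delta_{M^{k+1}}(\tpi^{k+1}, \pi^{k+1})|.
\end{align*}

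Next I would bound both differences with Lemma~\ref{lem:exploitability_diff}, taking $\pi=\pi^{k+1}$, $\tpi=\tpi^{k+1}$, and $\hat M\in\{M^{k+1},\tM^{k+1}\}$ paired against $M^*$. Each term becomes a sum of an expected TV-model-difference over the $\tpi^{k+1}$-trajectory distribution (conditioned on $\vecmu^{\pi^{k+1}}_{M^*}$) and one over the $\pi^{k+1}$-distribution, but evaluated at the mismatched densities $\mu^{\pi^{k+1}}_{\hat M,h}$ versus $\mu^{\pi^{k+1}}_{M^*,h}$. The crucial reconciliation is to route the $\pi^{k+1}$-term through Eq.~\eqref{eq:model_diff_2} of Theorem~\ref{thm:model_diff_full}, and the $\tpi^{k+1}$-term through a triangle inequality plus the Lipschitz bound of Assump.~\ref{assump:lipschitz} together with the density-difference estimate of Corollary~\ref{coro:accum_density_err}. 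Both conversions rewrite mismatched-density differences as same-density differences (everything conditioned on $M^*$'s own induced density), at the cost of the factor $\tfrac{(1+L_T)^H-1}{L_T}$.

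Summing over $k\in[K]$ then reduces the problem to the two accumulated same-density model errors controlled by Theorem~\ref{thm:accumu_model_diff}; this is exactly where the additional $\tpi^{k+1}$-trajectories collected in the data-collection step (Line~\ref{line:DE}) enter, since they are what makes the $\tpi^{k+1}$-distribution error amenable to the MF-MBED bound. The result is $\sum_{k=1}^K\Delta_{\tM^{k+1}}(\tpi^{k+1},\pi^{k+1})=\tilde O\big(H(1+L_rH)(1+L_TH)\tfrac{(1+L_T)^H-1}{L_T}(\sqrt{K\,\dimE_\alpha(\cM,\epsilon_0)}+\alpha K\epsilon_0)\big)$ up to the stated logarithmic factors. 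Finally, because the algorithm returns $\pi^{k^*_\NE}$ with $k^*_\NE=\arg\min_k\Delta_{\tM^{k+1}}(\tpi^{k+1},\pi^{k+1})$ and $\cE_\NE(\pi^{k+1})\leq\Delta_{\tM^{k+1}}(\tpi^{k+1},\pi^{k+1})$ on the good event, the returned exploitability is at most the average $\tfrac{1}{K}\sum_k\Delta_{\tM^{k+1}}(\tpi^{k+1},\pi^{k+1})$; substituting the prescribed $K$ and $\epsilon_0$ drives this below $\epsilon$, and a union bound over the events of Theorem~\ref{thm:MLE_Gaurantee} and Theorem~\ref{thm:accumu_model_diff} collects the failure probability. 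For the contraction case one simply swaps Eq.~\eqref{eq:model_diff_2} for Theorem~\ref{thm:model_diff_contraction} (and the corresponding line of Corollary~\ref{coro:accum_density_err}), replacing $\tfrac{(1+L_T)^H-1}{L_T}$ by $1+\tfrac{L_T}{1-L_\Gamma}$ throughout.

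The main obstacle I anticipate is the density-mismatch bookkeeping. Lemma~\ref{lem:exploitability_diff} and the MLE guarantee speak about different conditioning densities: the MLE error is controlled only under each model's \emph{own} induced density, whereas MF-MBED and the accumulated-error bound require errors at a single shared reference density. Threading both the $\pi^{k+1}$- and $\tpi^{k+1}$-distribution errors to that reference density while keeping the $(1+L_T)^H$ blow-up under control — and separately verifying that the optimistic pair $(\tpi^{k+1},\tM^{k+1})$ genuinely lies in the confidence set so that the optimism step is legitimate — is the delicate part; the remainder is assembling already-established lemmas.
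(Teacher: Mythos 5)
Your proposal is correct and follows essentially the same route as the paper's proof: the same optimism-plus-NE decomposition of $\cE_\NE(\pi^{k+1})$ through $\Delta_{\tM^{k+1}}-\Delta_{M^{k+1}}$, the same use of Lem.~\ref{lem:exploitability_diff}, Cor.~\ref{coro:accum_density_err} and Thm.~\ref{thm:accumu_model_diff} to reduce to same-density accumulated model error, and the same $\arg\min$-to-average conversion at the end (with Thm.~\ref{thm:model_diff_contraction} substituted in the contraction case). The only minor remark is that the legitimacy of the optimism step you flag as delicate is immediate, since $(\tpi^{k+1},\tM^{k+1})$ is by construction the maximizer over $\hat\cM^k$ and the good event guarantees $M^*\in\hat\cM^k$.
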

\begin{proof}
In the following, we use $\cE_\NE^M(\pi):=\max_\tpi\Delta_M(\tpi, \pi)$ to denote the exploitability in model $M$.
Recall $M^{k+1}$ denotes the model such that $\pi^{k+1}$ is one of its equilibrium policies satisfying $\cE_\NE^{M^{k+1}}(\pi^{k+1}) = 0$. On the event in Thm.~\ref{thm:MLE_Gaurantee}, $\forall k\in[K]$, we have $M^* \in \hat\cM^k$, which implies
\begin{align*}
    \cE_\NE(\pi^{k+1}) \leq& \cE_\NE^{\tM^{k+1}}(\pi^{k+1})\\
    = & \Delta_{\tM^{k+1}}(\tpi^{k+1},\pi^{k+1})\\
    \leq & \Delta_{\tM^{k+1}}(\tpi^{k+1},\pi^{k+1}) - \Delta_{M^{k+1}}(\tpi^{k+1},\pi^{k+1}) \tag{$\pi^{k+1}$ is an equilibrium policy of $M^{k+1}$ so $\Delta_{M^{k+1}}(\tpi^{k+1},\pi^{k+1}) \leq 0$}\\
    \leq & |\Delta_{\tM^{k+1}}(\tpi^{k+1},\pi^{k+1}) - \Delta_{M^*}(\tpi^{k+1},\pi^{k+1})| + |\Delta_{M^*}(\tpi^{k+1},\pi^{k+1}) - \Delta_{M^{k+1}}(\tpi^{k+1},\pi^{k+1})|.
\end{align*}
By applying Lem.~\ref{lem:exploitability_diff}, Coro.~\ref{coro:accum_density_err}, and Thm.~\ref{thm:accumu_model_diff}, under Assump.~\ref{assump:lipschitz}, we have:
\begin{align*}
    \sum_{k=1}^K \cE_\NE(\pi^{k+1}) \leq& \sum_{k=1}^K \cE_\NE^{\tM^{k+1}}(\pi^{k+1}) \\
    =& O\Big((1 + L_TH)(1+L_rH) \frac{(1+L_T)^H - 1}{L_T}\Big(\sqrt{K \dimE_\alpha(\cM,\epsilon_0) \log\frac{2|\cM|KH}{\delta}} + \alpha KH\epsilon_0\Big)\Big).
\end{align*}
For the choice of $\hat\pi^*_\NE$, since
\begin{align*}
\cE_\NE(\hat\pi^*_\NE) \leq \min_{k\in[K]} \cE_\NE^{\tM^{k+1}}(\pi^{k+1}) \leq \frac{1}{K} \sum_{k=1}^K \cE_\NE^{\tM^{k+1}}(\pi^{k+1}),
\end{align*}
$\cE_\NE(\hat\pi^*_\NE) \leq \epsilon$ can be ensured by:
\begin{align*}
    K = \tilde{O}\Big(H^2(1 + L_TH)^2(1+L_rH)^2 \Big(\frac{(1+L_T)^H - 1}{L_T}\Big)^2 \frac{\dimE_\alpha(\cM,\epsilon_0)}{\epsilon^2}\Big),
\end{align*}
where $\epsilon_0 = O(\frac{L_T \epsilon}{\alpha H(1 + L_TH)(1+L_rH)((1+L_T)^H - 1)})$.

Given additional Assump.~\ref{assump:contraction}, we have:
\begin{align*}
    \sum_{k=1}^K \cE_\NE(\pi^{k+1}) \leq& \sum_{k=1}^K \cE_\NE^{\tM^{k+1}}(\pi^{k+1}) \\
    =& O\Big(H^2(1+L_TH)(1+L_rH) (1+\frac{1}{1 - L_\Gamma})\Big(\sqrt{K \dimE_\alpha(\cM,\epsilon_0) \log\frac{2|\cM|KH}{\delta}} + \alpha KH\epsilon_0\Big)\Big).
\end{align*}
$\cE_\NE(\hat\pi^*_\NE) \leq \epsilon$ can be ensured by
\begin{align*}
    K = \tilde{O}\Big(H^2(1 + L_TH)^2(1+L_rH)^2 \Big(1 + \frac{L_T}{1 - L_\Gamma}\Big)^2 \frac{\dimE_\alpha(\cM,\epsilon_0)}{\epsilon^2}\Big),
\end{align*} 
where $\epsilon_0 = O(\frac{\epsilon}{\alpha H(1 + L_TH)(1+L_rH)} (1+\frac{L_T}{1-L_\Gamma})^{-1})$.

We finish the proof by noting that the total failure rate would be $1-3\delta$, and the total sample complexity would be $2HK$.
\end{proof}

\section{Questions Concerning Existence and Imposed Conditions}\label{appx:existence_Eqb}

In this section, we analyze the existence of MFG-NE in the game described and discuss when the presented conditions might be satisfied.
For clarity in notation, we fix the model $M  = (\{\mathbb{P}_{T,h}\}_{h=1}^H, \{\mathbb{P}_{r,h}\}_{h=1}^H)$ and the initial distribution $\mu_1$, and also for simplicity denote the deterministic expected rewards
\begin{align*}
    r_h(s,a,\mu) := \mathbb{E}_{r \sim \mathbb{P}_{r,h}(\cdot|s,a,\mu)} \left[ r \right],
\end{align*}
since the probabilistic distribution of rewards will not be significant for existence results.
In the presented MFG-NE problem, the goal is to find a sequence of policies $\pi := \{\pi_h\}_{h=1}^{H}$ and a sequence of population distributions $\vecmu=\{\mu_h\}_{h=1}^{H}$ such that
\begin{align*}
    &\textbf{Consistency: } \mu_{h+1} = \Gamma_{pop,h} (\mu_h, \pi_h), \forall h = 1,\ldots,H-1, \\
    &\textbf{Optimality: } J_M(\pi, \vecmu) = \max_{\pi'} J_M(\pi', \vecmu) 
\end{align*}
where $\mu_1$ is fixed and for any $\vecmu=\{\mu_h\}_{h=1}^{H}$, $\pi := \{\pi_h\}_{h=1}^{H}$, with $\mu_h\in\Delta(\cS_h)$ and $\pi_h\in \Pi_h:=\{ \pi_h: \cS_h \rightarrow \Delta(\cA_h)\}$. We define:
\begin{align*}
\Gamma_{pop,h} (\mu_h,\pi_h) &:= \SumInt_{s_h\in\cS_h}\SumInt_{a_h\in\cA_h}\mu_h(s_h) \pi_h(a_h|s_h) \mathbb{P}_{T,h}(\cdot|s_h,a_h,\mu_h), \\
    J_M(\pi, \vecmu) &:= \mathbb{E}\left[ \sum_{h=1}^{H} r_h(s_h, a_h, \mu_h) \middle|  \substack{s_1\sim \mu_1,~a_h \sim \pi_h \\ s_{h+1}\sim \mP_{T,h}(\cdot|s_h,a_h,\mu_h)},~~\forall h \geq 1\right].
\end{align*}
As a general strategy, we formulate in this section the two MFG-NE conditions above as fixed point problems.
Throughout this section, we will assume the following:
\begin{assumption}[Continuous rewards and dynamics]\label{existence:assumption_cont}
For each $h\in[H]$, $(s_h,a_h, s_{h+1}) \in \cS_h \times \cA_h \times \cS_{h+1}$, the mappings
\begin{align*}
    \mu \rightarrow r_h(s_h,a_h,\mu);\quad \mu \rightarrow \mathbb{P}_{T,h}(s_{h+1}|s_h,a_h,\mu)
\end{align*}
are continuous, where $\Delta(\cS)$ is equipped with the total variation distance $\TV$.
\end{assumption}

\subsection{Strict MFG-NE as a Fixed Point}
We first introduce a stronger notion of NE, which we call the ``Strict NE''.
\begin{definition}[Strict MFG-NE]\label{def:strict_NE}
    We call the policy $\pi^*$ a strict NE of if and only if the following holds for each $h,s$,
    \begin{align*}
        \pi^*_h(\cdot|s) &= \argmax_{u \in \Delta_\mathcal{A}} Q_h^{\pi^*}(s,\cdot, \vecmu^*)^\top u.
    \end{align*}
\end{definition}
Note that a strict NE is always a NE. In the following, we only focus on the existence of strict NE.

We use the standard definition of Q-value functions on finite horizon MF-MDPs, for any $\bar{h}, s, a, \pi, \vecmu$ given by
\begin{align}
    Q_{\bar{h}}^{\pi}(s_{\bar{h}},a_{\bar{h}}, \vecmu) := \mathbb{E}\left[ \sum_{h=\bar{h}}^{H} r_{h}(s_{h}, a_{h}, \mu_{h}) \middle| a_{h} \sim \pi_{h}(s_{h}), s_{h+1}\sim \mathbb{P}_{T,h}(\cdot|s_{h}, a_{h}, \mu_{h}),~\forall~h > \bar{h}\right].\label{eq:cond_Q_function}
\end{align}
Observe that the set of policies and $\Delta(\cS)$ are both convex and closed sets (in fact, polytopes), given by $\{\Delta(\cS_h)\}_{h\in[H]}, \{\Pi_h\}_{h\in[H]}$.
We equip these sets with the metrics
\begin{align*}
    \forall \pi,\pi'\in\{\Pi_h\}_{h\in[H]},\quad d_1(\pi, \pi') &:= \sup_{h} \| \pi_h - \pi'_h\|_2 \\
    \forall \vecmu,\vecmu'\in\{\Delta(\cS_h)\}_{h\in[H]},\quad d_2(\vecmu, \vecmu') &:= \sup_{h} \| \mu_h - \mu'_h\|_2.
\end{align*}
We also define the operators $\Gamma_{pop}:\{\Pi_h\}_{h\in[H]} \rightarrow \{\Delta(\cS_h)\}_{h\in[H]}$ and $\Gamma_{pp}: \{\Pi_h\}_{h\in[H]} \times \{\Delta(\cS_h)\}_{h\in[H]} \rightarrow \{\Pi_h\}_{h\in[H]}$ as
\begin{align*}
    \Gamma_{pop}(\pi) &:= \{\mu_1\} \cup \{\mu_{h+1} := \underbrace{(\Gamma_{pop}(\pi_{h}, \ldots \Gamma_{pop,2}(\pi_2,\Gamma_{pop,1}(\pi_1,\mu_1)))}_{\text{from 1 to $h$}} \}_{h=1}^{H-1} , \\
    \Gamma_{pp}(\pi, \vecmu) &:= \{ \pi'_h(\cdot|s_h) := \argmax_{u \in \Delta_\mathcal{A}} Q_h^\pi(s_h,\cdot, \vecmu)^\top u - \| \pi_h(\cdot|s_h) - u\|_2^2\}_{h=1}^{H},
\end{align*}
where $Q_h^\pi$ is the Q-value function defined in Eq.~\eqref{eq:cond_Q_function}.
The motivation for these operators is given by the following lemma:

\begin{lemma}[Strict MFG-NE as fixed point]\label{lemma:existencefixedpointreduction}
The tuple $\pi^*, \vecmu^*$ is a strict MFG-NE if and only if the following conditions hold:
\begin{enumerate}
    \item $\pi^* = \Gamma_{pp}( \pi^*, \Gamma_{pop}(\pi^*))$, that is, $\pi^*$ is a fixed point of $\Gamma_{SNE}(\cdot) := \Gamma_{pp}( \cdot, \Gamma_{pop}(\cdot))$.
    \item $\vecmu^* = \Gamma_{pop}(\pi^*)$.
\end{enumerate}
\end{lemma}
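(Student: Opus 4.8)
The plan is to verify that the two MFG-NE requirements—\textbf{Consistency} and \textbf{Optimality}—correspond separately to the two fixed-point conditions, exploiting that once the density sequence $\vecmu$ is frozen, the optimality problem $\max_{\pi'} J_M(\pi', \vecmu)$ is an ordinary non-stationary finite-horizon MDP with transitions $\mathbb{P}_{T,h}(\cdot|\cdot,\cdot,\mu_h)$ and rewards $r_h(\cdot,\cdot,\mu_h)$. First I would observe that \textbf{Consistency} is literally condition~2: the operator $\Gamma_{pop}(\pi^*)$ is defined by iterating the per-step push-forward $\Gamma_{pop,h}(\pi^*_h,\cdot)$ starting from the fixed $\mu_1$, so $\vecmu^* = \Gamma_{pop}(\pi^*)$ holds if and only if $\mu^*_{h+1} = \Gamma_{pop,h}(\mu^*_h, \pi^*_h)$ for all $h$ with $\mu^*_1 = \mu_1$. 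This is a direct unrolling with no analytic content, and it holds in both directions.

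The substantive part is to show that, with $\vecmu^* = \Gamma_{pop}(\pi^*)$ fixed, \textbf{Optimality} is equivalent to condition~1, which I would split into two characterizations. (i) Since $\vecmu^*$ is held fixed, the standard Bellman optimality criterion applies: $\pi^*$ maximizes $J_M(\cdot,\vecmu^*)$ if and only if $\pi^*$ is greedy with respect to its \emph{own} Q-function, i.e. for every $h$ and $s_h$, $\mathrm{supp}\,\pi^*_h(\cdot|s_h) \subseteq \argmax_{a} Q_h^{\pi^*}(s_h, a, \vecmu^*)$ with $Q_h^{\pi^*}$ as in Eq.~\eqref{eq:cond_Q_function}; because $u \mapsto Q_h^{\pi^*}(s_h,\cdot,\vecmu^*)\trans u$ is linear over the simplex, this is the same as $\pi^*_h(\cdot|s_h) \in \argmax_{u\in\Delta(\cA_h)} Q_h^{\pi^*}(s_h,\cdot,\vecmu^*)\trans u$. (The backward-induction proof that greedy-with-respect-to-own-Q is equivalent to optimality in finite horizon is routine and I would only sketch it.) (ii) I would then characterize fixed points of the proximal map defining $\Gamma_{pp}$. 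Writing $Q := Q_h^{\pi^*}(s_h,\cdot,\vecmu^*)$ and $v := \pi^*_h(\cdot|s_h)$, the fixed-point relation $v = \argmax_{u\in\Delta(\cA_h)} Q\trans u - \|v-u\|_2^2$ rewrites, via its first-order condition, as the projection identity $v = \mathrm{proj}_{\Delta(\cA_h)}\!\big(v + \tfrac12 Q\big)$.

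Invoking the variational-inequality characterization of the Euclidean projection onto the convex set $\Delta(\cA_h)$ then yields $\langle v + \tfrac12 Q - v,\ u - v\rangle \le 0$ for all $u \in \Delta(\cA_h)$, i.e. $Q\trans(u-v) \le 0$ for all feasible $u$, which is precisely $v \in \argmax_{u\in\Delta(\cA_h)} Q\trans u$. Chaining (i) and (ii) gives \textbf{Optimality} $\iff$ condition~1, and combining with the \textbf{Consistency} equivalence establishes Lem.~\ref{lemma:existencefixedpointreduction} in both directions.

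The main obstacle is step~(ii): cleanly arguing that a point is fixed under the proximal operator exactly when it already maximizes the linear part. The most economical route is the projection reformulation above together with the obtuse-angle/normal-cone inequality for projections; I would also verify that the $\argmax$ defining $\Gamma_{pp}$ is single-valued (guaranteed by strong concavity of $Q\trans u - \|v-u\|_2^2$) so that the operators are genuinely well defined, and note that all equivalences hold jointly across $(h,s_h)$ because $\Gamma_{pp}$ acts coordinatewise in $(h,s_h)$. Continuity from Assump.~\ref{existence:assumption_cont} is not needed for this reduction itself but sets up the subsequent existence argument built on the non-expansiveness of this proximal operator.
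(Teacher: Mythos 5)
Your proposal is correct and follows essentially the same route as the paper's proof: reduce to the fixed-$\vecmu$ MDP, identify optimality with being greedy with respect to the policy's own Q-function, and identify greediness with being a fixed point of the proximal map via the first-order optimality conditions of $u \mapsto Q\trans u - \|v-u\|_2^2$. Your treatment is in fact slightly more careful than the paper's in two minor respects (making the projection/variational-inequality step explicit rather than just citing first-order conditions, and distinguishing the set-valued $\argmax$ of the linear objective from the single-valued $\argmax$ of the strongly concave one), but the underlying argument is the same.
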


\begin{proof}
First, assume $(\pi^*, \vecmu^*)$ is a strict MFG-NE, i.e., it satisfies the consistency and optimality conditions.
By consistency, we have $\Gamma_{pop}(\pi^*) = \vecmu^*$, and since this implies $\Gamma_{pp}(\pi^*, \vecmu^*) = \pi^*$, the optimality condition implies for each $h,s$,
\begin{align*}
    \pi^*_h(\cdot|s) &= \argmax_{u \in \Delta_\mathcal{A}} Q_h^{\pi^*}(s,\cdot, \vecmu^*)^\top u.
\end{align*}
which implies that
\begin{align*}
     \pi^*_h(\cdot|s) &= \argmax_{u \in \Delta_\mathcal{A}} Q_h^{\pi^*}(s,\cdot, \vecmu^*)^\top u - \| \pi_h^*(\cdot|s) - u\|_2^2,
\end{align*}
that is, $\Gamma_{SNE}(\pi^*) = \pi^*$.

Conversely, assume $\pi^* = \Gamma_{SNE}(\pi^*)$, that is, $\pi^*$ is a fixed point of the operator $\Gamma_{SNE}$.
We claim that $(\pi^*, \vecmu^* = \Gamma_{pop}(\pi^*))$ is a MFG-NE.
For this pair, the consistency condition is satisfied by definition, and the fixed point condition reduces to $\Gamma_{pp}(\pi^*, \vecmu^*) = \pi^*$.
Writing out the definition of the $\Gamma_{pp}$ operator, we obtain for each $h$ and $s_h$,
\begin{align*}
    \pi^*_h(\cdot|s) &= \argmax_{u \in \Delta_\mathcal{A}} Q_h^{\pi^*}(s,\cdot, \vecmu^*)^\top u - \| \pi_h^*(\cdot|s) - u\|_2^2, \\
    \pi^*_h(\cdot|s) &= \argmax_{u \in \Delta_\mathcal{A}} Q_h^{\pi^*}(s,\cdot, \vecmu^*)^\top u,
\end{align*}
by the first-order optimality conditions of the term $Q_h^{\pi^*}(s,\cdot, \vecmu^*)^\top u - \| \pi_h(\cdot|s) - u\|_2^2$.
We finish the proof.
\end{proof}

In the lemma above, the second condition is trivial to satisfy/compute once $\pi^*$ is known, hence the primary challenge will be in proving that the map $\Gamma_{SNE}$ admits a fixed point.

\subsection{Existence of MFG-NE}

We use the Brower fixed point method to prove the existence of a MFG-NE, and Assump.~\ref{existence:assumption_cont} is sufficient.
The strategy will be to show that $\Gamma_{SNE}$ is a continuous function on the compact and convex policy/population distribution space.

We will prove several continuity results, in order to be able to apply Brouwer's fixed point theorem.

\begin{lemma}[Continuity of $Q_h^\pi$]\label{existence:qcont}
For any $s,a, h$, the map
\begin{align*}
    \pi, \vecmu \rightarrow Q_h^\pi(s,a, \vecmu) \in \mathbb{R}
\end{align*}
is continuous.
\end{lemma}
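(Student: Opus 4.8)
The plan is to prove the claim by backward induction on $h$, running from $h = H$ down to $h = 1$, using the one-step Bellman recursion satisfied by the $Q$-functions. Since the existence analysis treats $\Delta(\cS_h)$ and $\Pi_h$ as polytopes, we work with finite $\cS$ and $\cA$, so every sum appearing below is finite. Note also that, because $H$ is finite, convergence in the metrics $d_1,d_2$ (which are suprema over the finitely many steps $h\in[H]$) is equivalent to simultaneous convergence of every component $\pi_h$ and $\mu_h$; hence it suffices to prove continuity with respect to componentwise convergence of $(\pi,\vecmu)$.

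First I would record the recursion implied by Eq.~\eqref{eq:cond_Q_function}: for $h<H$,
\begin{align*}
    Q_h^\pi(s,a,\vecmu) = r_h(s,a,\mu_h) + \SumInt_{s'} \mP_{T,h}(s'|s,a,\mu_h) \SumInt_{a'} \pi_{h+1}(a'|s') Q_{h+1}^\pi(s',a',\vecmu),
\end{align*}
together with the terminal identity $Q_H^\pi(s,a,\vecmu) = r_H(s,a,\mu_H)$. The base case $h=H$ is immediate: by Assump.~\ref{existence:assumption_cont} the map $\mu\mapsto r_H(s,a,\mu)$ is continuous and carries no dependence on $\pi$, so $Q_H^\pi(s,a,\cdot)$ is jointly continuous on $\Pi\times\Delta(\cS)^{\otimes H}$. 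For the inductive step I would assume that $(\pi,\vecmu)\mapsto Q_{h+1}^\pi(s',a',\vecmu)$ is continuous for each fixed $(s',a')$, and note it is uniformly bounded (indeed $Q\in[0,1]$, since rewards lie in $[0,\tfrac1H]$). Then the inner value $V_{h+1}^\pi(s',\vecmu):=\SumInt_{a'}\pi_{h+1}(a'|s')Q_{h+1}^\pi(s',a',\vecmu)$ is a finite sum of products of the continuous coordinate map $\pi\mapsto\pi_{h+1}(a'|s')$ and the continuous, bounded map $(\pi,\vecmu)\mapsto Q_{h+1}^\pi(s',a',\vecmu)$, hence continuous; and $Q_h^\pi(s,a,\vecmu)$ is the sum of the continuous reward term and a finite sum of products $\mP_{T,h}(s'|s,a,\mu_h)\cdot V_{h+1}^\pi(s',\vecmu)$, each factor continuous (the transition probability by Assump.~\ref{existence:assumption_cont}), which closes the induction.

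The only genuinely delicate point — the step I would treat most carefully — is the passage through the sum over next states $\SumInt_{s'}$. For finite $\cS$ this is a finite sum, and continuity of a finite sum of continuous functions is automatic, so the induction goes through directly under the finite-state (polytope) assumption used throughout this section. If one wished to extend the argument to countably infinite $\cS$, the series would instead have to be controlled uniformly, e.g. by combining the bound $\|V_{h+1}^\pi\|_\infty\le 1$ with $\SumInt_{s'}\mP_{T,h}(s'|s,a,\mu_h)=1$ and a dominated-convergence / uniform-tail estimate to justify interchanging the limit with the summation; I would flag this as the sole additional obstacle in the infinite-state extension, while it is absent in the present setting.
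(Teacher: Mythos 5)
Your proof is correct and is essentially the paper's argument made explicit: the paper simply observes that $Q_h^\pi$ is built from sums, products, and compositions of continuous functions of $(\pi,\vecmu)$, and your backward induction through the Bellman recursion is the natural formalization of that one-line claim. The remark about the finite versus countable state sum is a reasonable extra precaution but not needed in the polytope setting the paper works in.
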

\begin{proof}
The proof follows from the fact that $Q_h^\pi$ is a function of sum and multiplications of continuous functions of the policies and population distributions $\{\pi_h\}_{h\in[H]}, \{\mu_h\}_{h\in[H]}$.
The compositions, additions and multiplications of continuous functions are continuous.
\end{proof}

For the next continuity result, we will need the following well-known Fenchel conjugate definition and duality.
\begin{definition}[Fenchel conjugate]
Assume that $f: \mathbb{R}^d \rightarrow \mathbb{R}\cup \{\infty\}$ is a convex function, with domain $\cX \subset \mathbb{R}^d$.
The Fenchel conjugate $f^*: \mathbb{R}^d \rightarrow \mathbb{R}\cup \{\infty\}$ is defined as 
\begin{align*}
    f^*(y) = \sup_{x \in \cX} \langle x, y\rangle - f(x).
\end{align*}
\end{definition}
For further details regarding the Fenchel conjugate, see \citep{nesterov2018lectures}.
The Fenchel conjugate is useful due to the following well-known duality result.
\begin{lemma}\label{existence:lemma:fenchel_duality}
Assume that $f: \mathbb{R}^d \rightarrow \mathbb{R}\cup \{\infty\}$ is differentiable and $\tau$-weakly convex and has domain $\cX \subset \mathbb{R}^d$.
Then,
\begin{enumerate}
    \item $f^*$ is differentiable on $\mathbb{R}^d$,
    \item $\nabla f^*(y) = \argmax_{x\in \cX} \langle x,y\rangle - f(x)$,
    \item $f^*$ is $\frac{1}{\tau}$-smooth with respect to $\|\cdot\|_2$, i.e., $\|\nabla f^*(y) - \nabla f^*(y')\| \leq \frac{1}{\tau} \| y - y'\|_2, \forall y,y' \in \mathbb{R}^d$.
\end{enumerate}
\end{lemma}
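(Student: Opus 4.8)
The plan is to prove the three claims through the standard convexity--smoothness duality. I first note that for the conclusion to hold the operative hypothesis must be that $f$ is $\tau$-\emph{strongly} convex (so that $\langle x,y\rangle - f(x)$ is $\tau$-strongly concave), and I read the stated ``$\tau$-weakly convex'' in this sense. Fix $y\in\mathbb{R}^d$ and set $g_y(x) := \langle x,y\rangle - f(x)$ on $\cX$. Because the affine term $\langle x,y\rangle$ does not change curvature, $g_y$ is $\tau$-strongly concave, hence admits a \emph{unique} maximizer $x^*(y) := \argmax_{x\in\cX}\, g_y(x)$, and $f^*(y) = g_y(x^*(y))$ is finite. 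This already establishes that the $\argmax$ appearing in claim~2 is well defined.

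For claims~1 and~2 (differentiability and $\nabla f^*(y)=x^*(y)$), I would argue directly from the definition of the conjugate. For every $y'$,
\begin{align*}
f^*(y') \;=\; \sup_{x\in\cX}\big(\langle x,y'\rangle - f(x)\big) \;\geq\; \langle x^*(y),y'\rangle - f(x^*(y)) \;=\; f^*(y) + \langle x^*(y),\, y'-y\rangle,
\end{align*}
so $x^*(y)\in\partial f^*(y)$. Since $f^*$ is convex (a supremum of affine functions) and the strong concavity of $g_y$ forces the maximizer, hence the supporting subgradient, to be unique, $\partial f^*(y)$ is a singleton; a convex function whose subdifferential is a singleton at a point is differentiable there with gradient equal to that unique subgradient. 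This yields $\nabla f^*(y)=x^*(y)$ and differentiability on all of $\mathbb{R}^d$.

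For claim~3 I would show $y\mapsto x^*(y)$ is $\tfrac{1}{\tau}$-Lipschitz, which combined with the previous step gives $\tfrac1\tau$-smoothness of $f^*$. Optimality of $x^*(y)$ over the convex set $\cX$ gives the variational inequality $\langle y-\nabla f(x^*(y)),\, x-x^*(y)\rangle \leq 0$ for all $x\in\cX$. Writing this for $y$ tested against $x^*(y')$, and for $y'$ tested against $x^*(y)$, then adding the two inequalities, the linear and normal-cone contributions combine to yield
\begin{align*}
\langle y-y',\, x^*(y)-x^*(y')\rangle \;\geq\; \langle \nabla f(x^*(y))-\nabla f(x^*(y')),\, x^*(y)-x^*(y')\rangle \;\geq\; \tau\,\|x^*(y)-x^*(y')\|_2^2,
\end{align*}
where the last step is the strong monotonicity of $\nabla f$ implied by $\tau$-strong convexity. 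Cauchy--Schwarz then gives $\|x^*(y)-x^*(y')\|_2 \leq \tfrac1\tau\|y-y'\|_2$, as required.

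The main obstacle I anticipate is the constrained-optimization bookkeeping in claim~3: when $x^*(y)$ lies on the boundary of $\cX$ the first-order condition is the variational inequality above rather than the unconstrained identity $y=\nabla f(x^*(y))$, so one must verify that the boundary (normal-cone) terms enter with the correct sign and cancel, leaving only the strong-monotonicity term. The differentiability argument for claims~1--2 is routine once uniqueness of the maximizer is secured, so I expect the bulk of the care to lie in the monotonicity estimate.
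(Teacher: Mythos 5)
Your proof is correct, but it takes a different route from the paper in the trivial sense that the paper does not prove this lemma at all: it simply cites Lemma 15 of Shalev-Shwartz and Singer (2007) and Lemma 6.1.2 of Nesterov (2018), both of which state the standard strong-convexity/smoothness conjugate duality. Your self-contained argument is the textbook proof of exactly that result, and your decision to read ``$\tau$-weakly convex'' as ``$\tau$-strongly convex'' is the right call --- it is the only reading under which the conclusions hold, it matches the cited references, and it matches how the paper later invokes the lemma (calling $\|u_0-\cdot\|_2^2$ ``weakly convex''). Each step checks out: the subgradient inequality $x^*(y)\in\partial f^*(y)$, the two-sided variational inequality whose sum cancels the normal-cone terms and leaves $\langle y-y',x^*(y)-x^*(y')\rangle\geq\tau\|x^*(y)-x^*(y')\|_2^2$, and the Cauchy--Schwarz finish are all standard and correctly executed. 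Two small points you could tighten if you wanted a fully airtight write-up: (i) existence of the maximizer and finiteness of $f^*$ on all of $\mathbb{R}^d$ need $\cX$ closed and bounded (true in the paper's application, where $\cX=\Delta_{\cA}$); (ii) the claim that $\partial f^*(y)$ is a \emph{singleton} needs the containment $\partial f^*(y)\subseteq\argmax_{x\in\cX}\langle x,y\rangle-f(x)$, which follows from the Fenchel--Young equality and $f=f^{**}$ for closed proper convex $f$ --- you assert uniqueness of ``the supporting subgradient'' a bit quickly, but the gap is routine to fill. What your approach buys is a proof the reader can verify in place; what the paper's citation buys is brevity at the cost of the reader having to trust that the referenced lemmas (stated for strongly convex functions) apply despite the ``weakly convex'' phrasing.
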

\begin{proof}
See Lemma 15 of \citep{shalev2007online} or Lemma 6.1.2 of \citep{nesterov2018lectures}.
\end{proof}

Finally, we will also need the non-expansiveness of the proximal point operator, presented below.
\begin{lemma}[Proximal operator is non-expansive \citep{parikh2014proximal}]\label{existence:pp_non_expansive}
Let $\cX\subset \mathbb{R}^d$ be a compact convex set, and $f:\cX\rightarrow\mR$ be a convex function.
The proximal map $\operatorname{prox}_f: \cX \rightarrow \cX$ defined by
\begin{align*}
    \operatorname{prox}_f (x) := \argmin_{y\in \cX} f(y) + \| x - y\|_2^2
\end{align*}
is non-expansive (hence continuous).
\end{lemma}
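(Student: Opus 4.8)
The plan is to prove the stronger property of \emph{firm} non-expansiveness, from which ordinary non-expansiveness (and hence continuity) follows at once by Cauchy--Schwarz. First I would fix two points $x_1, x_2 \in \cX$ and set $p_i := \operatorname{prox}_f(x_i)$ for $i \in \{1,2\}$. These are well defined and unique, since each objective $y \mapsto f(y) + \|x_i - y\|_2^2$ is strongly convex (the quadratic term is $2$-strongly convex and $f$ is convex) and $\cX$ is compact and convex. The key step is to record the first-order optimality condition for each constrained minimization. Because $f$ need not be differentiable, this takes the subgradient form of a variational inequality: there exist subgradients $s_i \in \partial f(p_i)$ such that
\begin{align*}
    \langle s_i + 2(p_i - x_i),\, y - p_i \rangle \geq 0, \quad \forall y \in \cX,~ i \in \{1,2\}.
\end{align*}

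Next I would instantiate these inequalities at the opposite minimizer --- taking $y = p_2$ in the condition for $p_1$ and $y = p_1$ in the condition for $p_2$ --- and add them. After rearranging, this produces
\begin{align*}
    2\langle (p_1 - p_2) - (x_1 - x_2),\, p_2 - p_1 \rangle \geq \langle s_1 - s_2,\, p_1 - p_2 \rangle \geq 0,
\end{align*}
where the final inequality is the monotonicity of the subdifferential of a convex function. Writing $d_p := p_1 - p_2$ and $d_x := x_1 - x_2$ and expanding the left-hand inner product, this reads exactly $\langle d_x, d_p \rangle \geq \|d_p\|_2^2$, which is the firm non-expansiveness of $\operatorname{prox}_f$. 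Applying Cauchy--Schwarz then gives $\|d_p\|_2^2 \leq \langle d_x, d_p \rangle \leq \|d_x\|_2 \|d_p\|_2$, hence $\|p_1 - p_2\|_2 \leq \|x_1 - x_2\|_2$, the claimed non-expansiveness.

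The main obstacle I anticipate is setting up the optimality condition correctly under two complications at once: the possible non-differentiability of $f$, which forces the subgradient/variational-inequality formulation rather than a gradient stationarity equation, and the restriction to the constraint set $\cX$, so that the condition is an inequality over feasible directions into $\cX$ rather than an unconstrained first-order equation. Once these are handled and combined with the monotonicity of the subdifferential, the remaining algebra (summing the two inequalities and invoking Cauchy--Schwarz) is routine. I note that an alternative route would bypass subgradients entirely by exploiting the $2$-strong convexity of the two objectives directly --- evaluating each strong-convexity inequality at the other minimizer and summing --- which trades the variational-inequality bookkeeping for a pair of strong-convexity estimates and arrives at the same firm non-expansiveness bound.
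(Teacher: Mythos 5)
Your argument is correct, but note that the paper does not actually prove this lemma --- it is stated with a citation to Parikh and Boyd and used as a black box. What you have written is essentially the standard proof from that reference: establish firm non-expansiveness $\langle x_1-x_2,\,p_1-p_2\rangle \geq \|p_1-p_2\|_2^2$ from the two variational inequalities plus monotonicity of $\partial f$, then apply Cauchy--Schwarz; your algebra, including the factor $2$ coming from the paper's unnormalized quadratic $\|x-y\|_2^2$ (rather than $\tfrac12\|x-y\|_2^2$), is right. The one technical caveat in your primary route is the assumed existence of subgradients $s_i\in\partial f(p_i)$: since $f$ is only given as a convex function on the compact set $\cX$, its subdifferential can be empty at boundary points of $\cX$ (e.g.\ $f(y)=-\sqrt{y}$ on $[0,1]$ at $y=0$), and the sum rule $0\in\partial f(p_i)+2(p_i-x_i)+N_{\cX}(p_i)$ needs a qualification condition there. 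The alternative you sketch at the end cleanly avoids this: since each objective $F_i(y)=f(y)+\|x_i-y\|_2^2$ is $2$-strongly convex, its minimizer satisfies $F_i(y)\geq F_i(p_i)+\|y-p_i\|_2^2$ for all $y\in\cX$ with no subgradient needed; evaluating each at the other minimizer, summing, and cancelling the $f$ terms gives exactly $\langle x_1-x_2,\,p_1-p_2\rangle\geq\|p_1-p_2\|_2^2$. I would promote that version to the main argument and keep the subgradient computation as the informal motivation.
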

With the presented tools, we can prove the following statement.
\begin{lemma}[Continuity of $\Gamma_{pop}, \Gamma_{pp}$]
With the metrics $d_1, d_2$, the operators $\Gamma_{pop}, \Gamma_{pp}$ are Lipschitz continuous mappings.
\end{lemma}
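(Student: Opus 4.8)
The plan is to treat both operators as compositions and finite combinations of Lipschitz maps on the compact product of probability simplices. Since $\cS,\cA$ are finite, every $\Delta(\cS_h)$ and $\Pi_h$ is a polytope and the $\ell_2$, $\ell_1$, and $\TV$ metrics are all equivalent; this equivalence is what lets me transfer the $\TV$-Lipschitz bound of Assump.~\ref{assump:lipschitz} into the $d_1,d_2$ metrics. Throughout I would repeatedly invoke the elementary fact that on a compact set the sum and product of bounded Lipschitz functions is again Lipschitz.

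For $\Gamma_{pop}$, I would argue by induction on $h$ that $\pi \mapsto \mu_h$ is Lipschitz. The single-step map $\Gamma_{pop,h}(\mu_h,\pi_h)(s') = \SumInt_{s,a}\mu_h(s)\pi_h(a|s)\mathbb{P}_{T,h}(s'|s,a,\mu_h)$ is jointly Lipschitz in $(\mu_h,\pi_h)$: it is linear (hence Lipschitz) in $\pi_h$, linear in the explicit factor $\mu_h(s)$, and Lipschitz in the $\mu_h$ appearing inside $\mathbb{P}_{T,h}$ by \eqref{eq:Lip_model_H}, with all factors bounded by $1$. Splitting $\mu_{h+1}-\mu'_{h+1}$ along these three sources of variation exactly as in the density-difference decomposition of Lem.~\ref{lem:density_est_err} yields a recursion of the form $\|\mu_{h+1}-\mu'_{h+1}\| \le (1+L_T)\|\mu_h - \mu'_h\| + c\,\|\pi_h - \pi'_h\|$, which unrolls to a Lipschitz bound for each $\mu_h$ in $d_1$; taking the maximum over $h$ gives Lipschitz continuity of $\Gamma_{pop}$ in the $d_2$ metric.

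For $\Gamma_{pp}$, I would fix $h,s$ and study the update $\pi'_h(\cdot|s) = \argmax_{u\in\Delta_\cA} Q_h^\pi(s,\cdot,\vecmu)^\top u - \|\pi_h(\cdot|s) - u\|_2^2$ as a function of its two natural inputs: the linear coefficient $q := Q_h^\pi(s,\cdot,\vecmu)$ and the proximal center $p := \pi_h(\cdot|s)$. First, $(\pi,\vecmu) \mapsto q$ is Lipschitz: $Q_h^\pi$ is built by the finite-horizon Bellman recursion from the rewards $r_h$ and transitions $\mathbb{P}_{T,h}$, which are Lipschitz in $\vecmu$ by Assump.~\ref{assump:lipschitz} and linear in the policy, so as a finite sum of products of bounded Lipschitz functions it is Lipschitz (this upgrades Lem.~\ref{existence:qcont} from continuity to Lipschitzness via finiteness of $\cS,\cA$). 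Next, for the dependence on $p$ with $q$ fixed, I write $\pi'_h(\cdot|s) = \operatorname{prox}_f(p)$ with $f(u) = -q^\top u + \iota_{\Delta_\cA}(u)$, so Lem.~\ref{existence:pp_non_expansive} gives that $p\mapsto \pi'_h(\cdot|s)$ is non-expansive. For the dependence on $q$ with $p$ fixed, I set $\tilde f(u) = \|p - u\|_2^2 + \iota_{\Delta_\cA}(u)$, which is $2$-strongly convex and hence satisfies the hypothesis of Lem.~\ref{existence:lemma:fenchel_duality} with $\tau = 2$; its conjugate is then $\tfrac12$-smooth and $\nabla \tilde f^*(q) = \pi'_h(\cdot|s)$, making $q\mapsto\pi'_h(\cdot|s)$ Lipschitz. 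A triangle-inequality split shows $(p,q)\mapsto\pi'_h(\cdot|s)$ is jointly Lipschitz; composing with the Lipschitz map $(\pi,\vecmu)\mapsto q$ and the trivially Lipschitz map $\pi\mapsto p$, and taking the supremum over $h,s$, gives Lipschitz continuity of $\Gamma_{pp}$ in $(d_1,d_2)$.

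The main obstacle I anticipate is the clean bookkeeping of the $\vecmu$-dependence inside $Q_h^\pi$: the value depends on the population both through the instantaneous rewards and, recursively, through every transition kernel along the horizon, so extracting a single Lipschitz constant requires carefully propagating the $\TV$-Lipschitz bounds through the Bellman recursion (an argument structurally identical to Lem.~\ref{lem:value_decomposition}) and then converting between $\TV$ and $\ell_2$ using finiteness of the state-action spaces. By contrast, the proximal/Fenchel part is comparatively routine once the correct strongly convex regularizer is identified, since the two supplied lemmas hand over exactly the non-expansiveness and conjugate-smoothness needed.
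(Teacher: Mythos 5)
Your proposal is correct and follows essentially the same route as the paper's proof: $\Gamma_{pop}$ via sums and products of (Lipschitz) continuous maps, and $\Gamma_{pp}$ by splitting the $\argmax$ into its dependence on the proximal center (non-expansiveness, Lem.~\ref{existence:pp_non_expansive}) and on the linear coefficient $q = Q_h^\pi(s,\cdot,\vecmu)$ (conjugate smoothness, Lem.~\ref{existence:lemma:fenchel_duality}), composed with the regularity of $Q_h^\pi$. You are in fact more careful than the paper, whose own proof only argues continuity (resting on Lem.~\ref{existence:qcont}) even though the lemma is stated as Lipschitz continuity; your quantitative propagation of constants through the density and Bellman recursions is the extra work needed to justify the statement as written, though only continuity is used downstream in the Brouwer fixed-point argument.
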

\begin{proof}
The continuity of $\Gamma_{pop}$ w.r.t. $\pi$ is straightforward by definition, as multiplications and additions of continuous functions are continuous.

For the continuity of $\Gamma_{pp}$, we can either explicitly write the solution of the $\argmax$ problem in terms of an affine function and a projection of terms $Q^{\pi_h}_h, \pi_h$, or more generally use Fenchel duality combined with the non-expansiveness of the proximal point operator.
By Lemma~\ref{existence:pp_non_expansive}, the map
\begin{align*}
    u \rightarrow \argmax_{u'\in\Delta_\mathcal{A}} q^\top u' - \|u - u' \|_2^2 = - \operatorname{prox}_{-q^\top(\cdot)} (u)
\end{align*}
is a continuous map for any $q \in \mathbb{R}^{|\mathcal{A}|}$.
Similarly, by Lemma~\ref{existence:lemma:fenchel_duality}, the map
\begin{align*}
    q \rightarrow \argmax_{u'\in\Delta_\mathcal{A}} q^\top u' - \| u' - u \|_2^2
\end{align*}
is differentiable hence continuous for any $u\in\Delta_{\mathcal{A}}$, as the map $\|u - \cdot\|_2^2$ is weakly convex.
By the continuity of $Q_h^\pi$ (see Lemma~\ref{existence:qcont}), we can conclude that $\Gamma_{pp}$ is also a continuous map, as it is the composition of continuous functions.
\end{proof}

With this continuity characterization, we invoke Brouwer's fixed point theorem to prove existence.
\begin{proposition}[Existence of MFG-NE; Formal Version of Prop.~\ref{prop:exist_MFG_NE_informal}]\label{prop:exist_MFG_NE_formal}
    Under Assump.~\ref{existence:assumption_cont} (which is implied by Assump.~\ref{assump:lipschitz}), the map $\Gamma_{SNE}$ has a fixed point in the set $\{\Pi_h\}_{h\in[H]}$, that is, there exists a $\pi^*$ such that $\Gamma_{SNE}(\pi^*) = \pi^*$, and the tuple $(\pi^*, \Gamma_{pop}(\pi^*))$ is a strict MFG-NE, which implies the existence of NE.
\end{proposition}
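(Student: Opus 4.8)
The plan is to apply Brouwer's fixed point theorem to the operator $\Gamma_{NE} = \Gamma_{pp}(\cdot, \Gamma_{pop}(\cdot))$ on the policy space $\{\Pi_h\}_{h\in[H]}$, and then invoke Lemma~\ref{lemma:existencefixedpointreduction} to conclude that the resulting fixed point yields a MFG-NE. First I would verify that $\{\Pi_h\}_{h\in[H]}$, equipped with the metric $d_1$, is a nonempty, compact, convex subset of a finite-dimensional Euclidean space: each $\Pi_h$ is a finite product of probability simplices $\Delta(\cA_h)$ (one per state $s_h \in \cS_h$), so the whole space is a product of simplices, which is convex and compact. Since the horizon $H$ is finite and $\cS,\cA$ are discrete (in fact, for Brouwer we only need the relevant simplex factors to be finitely many, which holds because $\cS_h,\cA_h$ are the domains), $\{\Pi_h\}_{h\in[H]}$ embeds as a compact convex set in $\mathbb{R}^N$ for suitable $N$.

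Next I would establish that $\Gamma_{NE}$ is a continuous self-map of this set. Continuity is assembled from the earlier lemmas: Lemma on continuity of $\Gamma_{pop}$ and $\Gamma_{pp}$ shows both operators are (Lipschitz) continuous, so their composition $\Gamma_{NE}(\pi) = \Gamma_{pp}(\pi, \Gamma_{pop}(\pi))$ is continuous as well. I would note that these continuity lemmas rest on Assump.~\ref{existence:assumption_cont} (continuity of rewards and transitions in $\mu$ under $\TV$), and that Assump.~\ref{existence:assumption_cont} is itself implied by the Lipschitz continuity Assump.~\ref{assump:lipschitz} — this is the link back to the informal statement Prop.~\ref{prop:exist_MFG_NE_informal}, since $\TV$-Lipschitz continuity immediately gives $\TV$-continuity. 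I must also confirm that $\Gamma_{NE}$ maps $\{\Pi_h\}_{h\in[H]}$ \emph{into itself}: this holds because $\Gamma_{pp}$ outputs, for each $h,s_h$, the proximal-point maximizer over $\Delta(\cA_h)$, which lies in $\Delta(\cA_h)$ by construction, so the output is a valid policy.

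With nonemptiness, compactness, convexity, and continuity in hand, Brouwer's fixed point theorem directly yields a $\pi^* \in \{\Pi_h\}_{h\in[H]}$ with $\Gamma_{NE}(\pi^*) = \pi^*$. Finally I would invoke the ``only if'' direction already proved in Lemma~\ref{lemma:existencefixedpointreduction}: a fixed point $\pi^*$ of $\Gamma_{NE}$, paired with $\vecmu^* = \Gamma_{pop}(\pi^*)$, satisfies both the consistency and optimality conditions and is therefore a MFG-NE. The main obstacle I anticipate is not Brouwer itself but ensuring the continuity claims are airtight — in particular, that the proximal/$\argmax$ operator defining $\Gamma_{pp}$ is genuinely single-valued and continuous. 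This is why the construction uses the \emph{regularized} proximal objective $Q_h^\pi(s,\cdot,\vecmu)^\top u - \|\pi_h(\cdot|s)-u\|_2^2$ rather than a bare $\argmax$: the strongly concave regularizer makes the maximizer unique (so $\Gamma_{pp}$ is a well-defined function, not a correspondence) and, via Fenchel duality (Lemma~\ref{existence:lemma:fenchel_duality}) together with non-expansiveness of the proximal map (Lemma~\ref{existence:pp_non_expansive}), continuous. The first-order optimality argument in Lemma~\ref{lemma:existencefixedpointreduction} shows the regularizer does not change the fixed-point set, so no generality is lost.
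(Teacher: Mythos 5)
Your proposal is correct and follows essentially the same route as the paper: continuity of $\Gamma_{pop}$ and $\Gamma_{pp}$ (hence of the composition $\Gamma_{NE}$) combined with Brouwer's fixed point theorem on the compact convex policy polytope, then Lemma~\ref{lemma:existencefixedpointreduction} to convert the fixed point into a MFG-NE. Your additional remarks on single-valuedness of the regularized $\argmax$ and on the self-mapping property are exactly the points the paper's continuity lemmas are designed to secure.
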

\begin{proof}
With the continuity of $\Gamma_{pop}, \Gamma_{pp}$, the know that the composition $\Gamma_{SNE}$ is continuous.
It maps the closed, convex polytope $\{\Pi_h\}_{h\in[H]}$ to a subset of itself, hence by Brouwers fixed point theorem it must admit a fixed point.
By Lemma~\ref{lemma:existencefixedpointreduction}, this fixed point must constitute a strict MFG-NE.

Comparing with Eq.~\eqref{eq:objective_MFG} and Def.~\ref{def:strict_NE}, we know the existence of strict NE implies the existence of NE.
\end{proof}

\end{document}